\newtheorem{theorem}{Theorem}
\crefname{theorem}{theorem}{Theorems}
\Crefname{Theorem}{Theorem}{Theorems}
\newaliascnt{lemma}{theorem}
\newtheorem{lemma}[lemma]{Lemma}
\crefname{lemma}{lemma}{lemmas}
\Crefname{Lemma}{Lemma}{Lemmas}
\newaliascnt{corollary}{theorem}
\crefname{corollary}{corollary}{corollaries}
\Crefname{Corollary}{Corollary}{Corollaries}
\newaliascnt{proposition}{theorem}
\crefname{proposition}{proposition}{propositions}
\Crefname{Proposition}{Proposition}{Propositions}
\newaliascnt{definition}{theorem}
\crefname{definition}{definition}{definitions}
\Crefname{Definition}{Definition}{Definitions}
\newaliascnt{remark}{theorem}
\crefname{remark}{remark}{remarks}
\Crefname{Remark}{Remark}{Remarks}
\crefname{example}{example}{examples}
\Crefname{Example}{Example}{Examples}
\crefname{algorithm}{algorithm}{algorithms}
\Crefname{Algorithm}{Algorithm}{Algorithms}
\crefname{figure}{figure}{figures}
\Crefname{Figure}{Figure}{Figures}
\newtheorem{assumption}{\textbf{A}\hspace{-3pt}}
\Crefname{assumption}{\textbf{A}\hspace{-3pt}}{\textbf{A}\hspace{-3pt}}
\crefname{assumption}{\textbf{A}}{\textbf{A}}
\Crefname{assumptionL}{\textbf{L}\hspace{-3pt}}{\textbf{L}\hspace{-3pt}}
\crefname{assumptionL}{\textbf{L}}{\textbf{L}}
\Crefname{assumptionL2}{\textbf{L2HMC}\hspace{-3pt}}{\textbf{L2HMC}\hspace{-3pt}}
\crefname{assumptionL2}{\textbf{L2HMC}}{\textbf{L2HMC}}
\Crefname{assumptionG}{\textbf{G}\hspace{-3pt}}{\textbf{G}\hspace{-3pt}}
\crefname{assumptionG}{\textbf{G}}{\textbf{G}}
\Crefname{assumptionN}{\textbf{NICE}\hspace{-3pt}}{\textbf{NICE}\hspace{-3pt}}
\crefname{assumptionN}{\textbf{NICE}}{\textbf{NICE}}
\Crefname{assumptionH}{{\textbf{H}\hspace{-1pt}}}{{\textbf{H}\hspace{-1pt}}}
\crefname{assumptionH}{{\textbf{H}}}{{\textbf{H}}}
\def\elbo{\mathcal{L}}
\def\elboiw{\elbo_{M}}
\def\msa{\mathsf{A}}
\def\burningloss{\upsilon}
\def\mserunconst{\upzeta^{\scriptsize{\mbox{{\it mse}}}}}
\def\biasrunconst{\upzeta^{\scriptsize{\mbox{{\it bias}}}}}
\def\mcf{\mathcal{F}}
\newcommand{\dataset}[1]{\mathcal{D}_{\scriptsize{\mbox{{\it #1}}}}}
\def\rset{\mathbb{R}}
\def\nset{\mathbb{N}}
\def\nsets{\mathbb{N}^*}
\def\rmd{\mathrm{d}}
\def\rme{\mathrm{e}}
\newcommandx{\functionspace}[2][1=+]{\mathbb{F}_{#1}(#2)}
\newcommand{\1}{\mathds{1}}
\newcommand{\SUISIR}{BR-SNIS}
\newcommand{\LeftEqNo}{\let\veqno\@@leqno}
\def\osc{\operatorname{osc}}
\newcommand{\Exp}[1]{\PE\left[ #1 \right]}
\newcommand{\PE}{\mathbb{E}}
\newcommandx{\CPE}[3][1=]{{\mathbb E}_{#1}\left[\left. #2 \, \right| #3 \right]} 
\newcommandx{\CPP}[3][1=]{{\mathbb P}_{#1}\left[\left. #2 \, \right| #3 \right]} 
\newcommandx{\bconst}[1][1=]{\ensuremath{\mathsf{b}_{#1}}}
\newcommandx{\cconst}[1][1=]{\ensuremath{c_{#1}}}
\newcommandx{\rate}[1][1=]{\ensuremath{\tilde{\kappa}_{#1}}}
\newcommandx{\dconst}[1][1=]{\ensuremath{d_{#1}}}
\newcommandx{\pushforward}[2][1=T,2=\Lambda]{ {#1} \# {#2}}
\newcommandx{\VnormEq}[2][1=V]{\left\| #2 \right\|_{#1}}
\newcommandx{\norm}[2][1=]{\ifthenelse{\equal{#1}{}}{\Vert #2 \Vert}{\Vert #2 \Vert^{#1}}}
\newcommand{\PP}{\mathbb{P}}
\newcommand{\tvnorm}[1]{\| #1 \|_{\mathrm{TV}}}
\newcommandx{\tvdist}[3][1=]{\ensuremath{d^{#1}_{\scriptsize{\mbox{{\it TV}}}}}(#2,#3)}
\newcommandx{\Vnorm}[2][1=V]{\| #2 \|_{#1}}
\def\ie{\textit{i.e.}}
\def\eqsp{}
\newcommand{\ooint}[1]{\left(#1\right)}
\newcommand{\ccint}[1]{\left[#1\right]}
\def\State{\chunku{X}{1}{N}}
\def\state{\chunku{x}{1}{N}}
\newcommandx{\estsnis}[2][1=M]{\widehat{\target}_{#1}(#2)}
\newcommandx{\msesnis}[1][1=M]{\mathsf{MSE}^{\scriptsize{\mbox{{\it is}}}}_{#1}}
\newcommandx{\TmixN}[1][1=N]{\tau_{\scriptsize{\mbox{{\it mix}}}, #1}}
\newcommand{\indi}[1]{\1_{#1}}
\newcommand{\indiacc}[1]{\1_{\{ #1 \}}}
\def\as{\ensuremath{\text{a.s.}}}
\newcommandx\sequenceg[3][2=,3=]
\newcommandx\sequence[3][2=,3=]
\newcommandx\dsequence[4][3=,4=]{\ensuremath{ ((#1_{#3}, #2_{#3}))_{#3 \in #4}}}
\newcommandx{\sequencen}[2][2=n\in\N]{\ensuremath{\{ #1_n, \eqsp #2 \}}}
\newcommand{\wrt}{w.r.t.}
\def\eg{\emph{e.g.}}
\def\target{\pi}
\newcommandx{\targetkern}[1][1=N]{\Pi_{#1}}
\newcommandx{\utargetkern}[1][1=N]{\Gamma_{#1}}
\newcommandx{\rollingestim}[4][1=N,2=K_0,3=K,4=f]{\Pi_{(#1,#2),#3}(#4)}
\def\normconst{\operatorname{Z}}
\def\proposal{\lambda}
\def\bound{\omega}
\def\epssmallisir{\epsilon_N}
\def\driftconstisir{\kappa_N}
\newcommand{\ki}{{k}}
\def\MKQ{{\rm Q}}
\newcommandx{\MKisirjoint}[1][1=N]{\mathsf{\mathbf{P}}_{#1}}
\def\xijoint{{\boldsymbol{\xi}}}
\newcommandx{\MKisir}[1][1=N]{\mathsf{P}_{#1}}
\newcommandx{\MKisirpopN}[1][1=N]{\bar{\mathsf{P}}_{#1}}
\def\Xset{\mathbbm{X}}
\def\Xsigma{\mathcal{X}}
\def\Zset{\mathbbm{Z}}
\def\Zsigma{\mathcal{Z}}
\newcommand{\chunk}[3]{#1_{#2:#3}}
\newcommand{\chunku}[3]{#1^{#2:#3}}
\newcommand{\chunkum}[4]{#1^{#2:#3 \setminus \{#4\}}}
\def\biasconst{\varsigma^{\scriptsize{\mbox{{\it bias}}}}}
\def\mseconst{\varsigma^{\scriptsize{\mbox{{\it mse}}}}}
\def\covconst{\varsigma^{\scriptsize{\mbox{{\it cov}}}}}
\def\weightfunc{w}
\newcommandx{\XtryK}[1][1=N]{\mathsf{K}_{#1}}
\newcommand{\isir}{i-SIR}
\def\eTarget{\boldsymbol{\varphi}_N}
\def\eTargetmarginx{\boldsymbol{\pi}_N}
\newcommand{\supnorm}[1]{\|#1\|_{\infty}}
\def\Idd{\mathbf{I}}
\def\partopopN{\boldsymbol{\Lambda}_N}
\newcommand{\PhiN}[1]{\Phi_N (#1)}
\newcommand{\vectmean}{\boldsymbol{\mu}}
\newcommand{\covmat}{\boldsymbol{\Sigma}}
\newcommand{\kmax}{k_{\scriptsize{\mbox{{\it max}}}}}
\def\hpdconstant{\varsigma^{\scriptsize{\mbox{{\it hpd}}}}}
\def\MKQ{\operatorname{Q}}
\def\State{Z}
\def\taumix{t_{\operatorname{mix}}}
\def\invariantQ{\pi}
\newcommandx{\normop}[2][2=]{\Vert{#1}\Vert_{{#2}}}
\newcommand{\absD}[1]{\left\vert #1\right\vert}
\newcommand{\abs}[1]{\vert #1\vert}
\def\rhs{right-hand side}
\def\SNIS{\ensuremath{\mbox{SNIS}}}
\def\ISIR{\ensuremath{\mbox{i-SIR}}}
\def\UnbiasedPIMH{\ensuremath{\mbox{Unbiased-PIMH}}}
\def\bX{\mathbf{x}}
\def\loglikelihood{\mathcal{L}} 
\title{BR-SNIS: Bias Reduced Self-Normalized Importance Sampling}
\author{
\textbf{Gabriel Cardoso} \\
Centre de Math\'ematiques appliqu\'ees, \\
Ecole polytechnique, \\
\texttt{gabriel.victorino-cardoso@polytechnique.edu}\\

\textbf{Sergey Samsonov} \\
HSE University \\
 
\textbf{Achille Thin} \\
, \\
AgroParisTech, \\

\textbf{Eric Moulines} \\
Centre de Math\'ematiques appliqu\'ees, \\
Ecole polytechnique, \\

\textbf{Jimmy Olsson} \\
Department of Mathematics, \\
KTH Royal Institute of Technology.\\
}
\begin{document}
\maketitle
\begin{abstract}
Importance Sampling (IS) is a method for approximating expectations under a target distribution using independent samples from a proposal distribution and the associated importance weights. In many applications, the target distribution is known only up to a normalization constant, in which case self-normalized IS (\SNIS) can be used. While the use of self-normalization can have a positive effect on the dispersion of the estimator, it introduces bias. In this work, we propose a new method, {\SUISIR}, whose complexity is essentially the same as that of {\SNIS} and which significantly reduces bias without increasing the variance. This method is a wrapper in the sense that it uses the same proposal samples and importance weights as {\SNIS}, but makes clever use of iterated sampling--importance resampling (\isir) to form a bias-reduced version of the estimator. We furnish the proposed algorithm with rigorous theoretical results, including new bias, variance and high-probability bounds, and these are illustrated by numerical examples.
\end{abstract}
\vspace{-5pt}
\section{Introduction}
\label{sec:introduction}
\vspace{-5pt}
\paragraph{Background and previous work: }
\emph{Importance sampling} \cite{kahn1953methods, agapiou2017importance} (IS) is a classical Monte Carlo technique for estimating expectations under some given probability distribution (the \emph{target}) on the basis of a sample of draws from a different distribution (the \emph{proposal}). In the modern era of artificial intelligence and statistical machine learning, characterized by large computational resources and Bayesian inference, IS technologies are enjoying a revival; see, \emph{e.g.}, \cite{niknejad2019external,kuzborskij2021confident} and \cite{elvira2021advances} for a recent survey. The method is not only relevant to situations where sampling from the target is intractable; it can also be used to achieve variance reduction \cite{lamberti2018double}. When the proposal is dominating the target---in the sense that the support of the latter is contained in the support of the former---unbiased estimation can be achieved by assigning each draw an \emph{importance weight} given by the likelihood ratio between the target and the proposal. In the very common case where the target is known only up to a normalizing constant, consistent estimation can still be achieved by simply normalizing each importance weight by the total weight of the sample; however, since such \emph{self-normalized importance sampling} (\SNIS) involves ratios of random variables, the procedure can only be implemented at the cost of bias, which can be significant in some applications.

More precisely, let $(\Xset, \Xsigma)$ be some state space and $\target(\rmd x) \propto \weightfunc(x) \proposal(\rmd x)$ a given target probability distribution, where $\weightfunc$ and $\proposal$ are a positive weight function and a proposal probability distribution on $(\Xset, \Xsigma)$, respectively, such that the normalizing constant $\proposal(\weightfunc) =
\int \weightfunc(x) \proposal(\rmd x)$
(this will be our generic notation for Lebesgue integrals)
of $\target$ is finite. The {\SNIS} estimator is given by
\begin{equation}
{\textstyle
\label{eq:self-normalized}
\Pi_M f(\chunku{X}{1}{M})
= \sum_{i=1}^M \omega_M^i f(X^i)\eqsp, \qquad \omega_M^i= \weightfunc(X^i)/ \sum_{\ell = 1}^M \weightfunc(X^\ell)
}\end{equation}
where $\chunku{X}{1}{M} = (X^1, \ldots, X^M)$ are independent draws from $\proposal$, and can be used to approximate $\target(f) = \int f(x) \target(\rmd x)$ for any test function $f$ such that $\target(|f|) < \infty$.
The estimator \eqref{eq:self-normalized} can be calculated without knowledge of the normalizing constant $\proposal(\weightfunc)$, which is intractable in general.

The \SNIS\ estimator is known to be biased; provided that $\proposal(\weightfunc^2) < \infty$, the bias and mean-squared error (MSE) of the \SNIS\ estimator \eqref{eq:self-normalized} over bounded test functions $f$ satisfying $\supnorm{f} \leq 1$ are given
respectively (see \cite[Theorem~2.1]{agapiou2017importance}) by
\begin{equation}
\label{eq:bias-MSE-self-normalized}
| \PE[ \Pi_M f(\chunku{X}{1}{M}) 
] - \target(f) |\leq (12/M) \kappa[\target, \proposal],  \quad
\PE[ \{ \Pi_M f(\chunku{X}{1}{M}) 
- \target(f) \}^2] \leq (4/M) \kappa[\target, \proposal]\eqsp, 
\end{equation}
where  $\kappa[\target,\proposal] = \proposal(\weightfunc^2)/\proposal^2(\weightfunc)$. 
Although IS is primarily intended to approximate integrals in the form $\target(f)$, it can also be used to generate unweighted samples being approximately distributed according to $\target$.  In this paper, we consider \emph{iterated sampling importance resampling} (\isir), proposed in \cite{tjelmeland2004using}; see \citep{andrieu2010particle,lee2010utility,lee2011auxiliary,andrieu2018uniform}. The \isir\ can be seen as an iterative application of the \emph{sampling importance resampling} (SISR) algorithm proposed by \cite{rubin1987comment}; the $k$-th iteration is defined as follows. Given a state $Y_k \in \Xset$, (i) set $X^1_{k+1}=Y_k$ and draw $\chunku{X_{k+1}}{2}{N}$ independently from the proposal distribution $\proposal$; (ii) compute, for $i \in \{1, \dots, N\}$, the normalized importance weights $\omega^i_{N,k+1} = \weightfunc(X^i_{k+1})/\sum_{\ell=1}^N \weightfunc(X^\ell_{k
+1})$; (iii) select $Y_{k+1}$ from the set $\chunku{X_{k+1}}{1}{N}$ by choosing $X_{k+1}^i$ with probability $\omega^i_{N,k+1}$. In the following, $Y_{k + 1}$ and $\chunku{X_{k+1}}{1}{N}$ will be referred to as the \emph{state} and the \emph{candidate pool}, respectively.
 Following \citep{tjelmeland2004using} (see~\Cref{sec:main}), \isir\ may be viewed (up to an irrelevant permutation of the samples) as a two-stage Gibbs sampler targeting an extended probability distribution $\eTarget$ on an enlarged state space including the state as well as the candidate pool. As this extended distribution allows $\pi$ as a marginal with respect to the state, one can expect the marginal distribution of the generated states $(Y_k)_{k \in \nset}$, forming themselves a Markov chain, to approach the target $\target$ of interest as $k$ tends to infinity. 
 \vspace{-5pt}
\paragraph{This paper:} In \isir, the only function of the candidate pool is to guide the states selected at stage (iii) towards the target. Thus, since all rejected candidates are discarded, the approach results generally in a large waste of computational work. Thus, in the present paper we propose to recycle \emph{all} the generated samples by incorporating all the proposed candidates $\chunku{X_k}{1}{N}$ into the estimator rather than only the selected candidate $Y_k$. We proceed in three steps. First, we show that under the stationary distribution $\eTarget$ of the process $(Y_k, \chunku{X_k}{1}{N})_{k \in \nset}$ generated by \isir, the expectation of $\targetkern f (\chunku{X_k}{1}{N})$ (given by \eqref{eq:self-normalized}) equals $\target(f)$ for every valid test function $f$ (see \Cref{thm:unbiasedness}). Second, we establish that since {\isir} is nothing but a systematic-scan Gibbs sampler, the two processes $(\chunku{X_k}{1}{N})_{k \in \nset}$ and $(Y_k)_{k \in \nset}$ are  \emph{interleaving} (see \Cref{theo:main-properties-deterministic-scan});
thus, if $(Y_k)_{k \in \nset}$ is uniformly geometrically ergodic, so is $(\chunku{X_k}{1}{N})_{k \in \nset}$ with the same mixing rate $\kappa_N$.
Third, as the main result of the present paper, we establish a novel $\mathcal{O}(\kappa_N^k/N)$ bound on the bias of the estimator $\targetkern f (\chunku{X_k}{1}{N})$ (see \Cref{theo:bias-i-SIR-recycling}), where the exponentially diminishing factor $\kappa_N^k$ indicates a drastic bias reduction \emph{vis-\`a-vis} the standard IS estimator \eqref{eq:self-normalized} based on i.i.d. samples.
As a consequence, approximating $\target(f)$ by the average of $(\targetkern f (\chunku{X_k}{1}{N}))_{\ell = k_0 + 1}^k$, where the ``burn-in'' period $k_0$ should be chosen proportionally to the mixing time of the process, yields an estimator whose bias can be furnished with a bound which is, roughly, proportional to $\kappa_N^{k_0}$ and inversely proportional to the total number $M = k N$ of samples generated in the algorithm (see \Cref{theo:bias-mse-rolling}).
To complete the theoretical analysis of these estimators, we also equip the same with variance bounds. The procedure of recycling, as described above, all the samples generated in the {\isir} and to incorporate, at negligible computational cost, the same into the final estimator, will from now on be referred as \SUISIR. Finally, we test numerically the proposed estimators and illustrate how a significant bias reduction relatively to the standard {\isir} can be obtained at basically no cost.

To sum up, our contribution is twofold, since we
\begin{itemize}[leftmargin=*,nosep]
    \item[--] propose a new algorithm, {\SUISIR}, which makes better use of the available computational resources by recycling the candidate pool generated at each iteration of i-SIR.
    \item[--] furnish the proposed algorithm with rigorous theoretical results, including novel bias, variance, and high-probability bounds which support our claim that sample recycling may lead to drastic bias reduction without impairing the variance.
\end{itemize}

\vspace{-5pt}
\section{Main results}
\label{sec:main:results}
\subsection{Statements}
\vspace{-9pt}
\label{sec:main}
The {\isir} algorithm can be interpreted as a systematic-scan two-stage Gibbs sampler, alternately sampling from the full conditions of an extended target $\eTarget$ on the product space of states and candidate pools. Once the extended target $\eTarget$ is properly defined, these full conditionals can be retrieved from a dual representation of $\eTarget$ presented in \Cref{thm:Gibbs:duality}. In order to define $\eTarget$, we introduce the Markov kernel (see \Cref{sec:isir-algorithm} for comments)
\begin{equation}
\label{eq:definition-psiN}
{\textstyle
\partopopN(y, \rmd \chunku{x}{1}{N}) =N^{-1} \sum \nolimits_{i=1}^N   \delta_y(\rmd x^i) \prod\nolimits_{j \ne i} \proposal(\rmd x^j)}
\end{equation}
on $\Xset \times \Xsigma^{\varotimes N}$, which describes probabilistically the sampling operation (i) in {\isir}. 
Using the kernel $\partopopN$ we may now define properly the extended target $\eTarget$ as the probability law
\begin{equation}
\label{eq:extended-target}
{\textstyle
\eTarget(\rmd (y, \chunku{x}{1}{N})) = \target(\rmd y) \partopopN(y, \rmd \chunku{x}{1}{N}) = N^{-1} \sum_{i=1}^N \target(\rmd y)
\delta_y(\rmd x^i) \prod_{j \ne i} \proposal(\rmd x^j)}
\end{equation}
on $(\Xset^{N + 1}, \Xsigma^{\varotimes (N + 1)})$. Note that since for every $A \in \Xsigma$, $\eTarget(\indi{A \times \Xset^N}) = \target(A)$, the target $\target$ coincides with the marginal of $\eTarget$ with respect to the state. Moreover, it is easily seen that $\partopopN$ provides the conditional distribution, under $\eTarget$, of the candidate pool given the state.
Defining the kernels
\begin{equation} \label{eq:def:utargetkern}
{\textstyle
\utargetkern(\chunku{x}{1}{N}, \rmd y) = N^{-1} \sum_{i=1}^N \weightfunc(x^i) \delta_{x^i}(\rmd y), \quad \targetkern(\chunku{x}{1}{N}, \rmd y) = {\utargetkern(\chunku{x}{1}{N}, \rmd y)}/{\utargetkern \1_{\Xset} (\chunku{x}{1}{N})}
}
\end{equation}
on $\Xset \times \Xsigma^{\varotimes N}$,
the marginal distribution $\eTargetmarginx$ of $\eTarget$ with respect to $\chunku{x}{1}{N}$ is given by
\begin{equation} \label{eq:marginal-joint}
{\textstyle \eTargetmarginx(\rmd \chunku{x}{1}{N}) = {\proposal(\weightfunc)}^{-1} \utargetkern \indi{\Xset}(\chunku{x}{1}{N}) \prod_{j=1}^N \proposal(\rmd x^j).}
\end{equation}
It is interesting to note that the marginal $\eTargetmarginx$ has a probability density function, proportional to $\utargetkern \indi{\Xset}(\chunku{x}{1}{N}) = \sum_{i=1}^N \weightfunc(x^i) / N$, with respect to the product measure $\proposal^{\varotimes N}$. Using \eqref{eq:marginal-joint}, we immediately obtain the following result.

\begin{theorem}[duality of extended target] \label{thm:Gibbs:duality}
For every $N \in \nsets$,
\begin{equation} \label{eq:duality}
\eTarget(\rmd (y, \chunku{x}{1}{N})) = \target(\rmd y) \partopopN(y,\rmd \chunku{x}{1}{N}) = \eTargetmarginx(\rmd \chunku{x}{1}{N}) \targetkern(\chunku{x}{1}{N}, \rmd y).
\end{equation}
\end{theorem}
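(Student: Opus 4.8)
The first equality in \eqref{eq:duality} is nothing but the definition \eqref{eq:extended-target} of $\eTarget$, so the only substance lies in establishing the second, ``dual'' factorisation. The plan is to start from the explicit term-by-term expression of $\eTarget$ and to transport, summand by summand, the mass carried by the state $y$ onto the candidate coordinate $x^i$ that is singled out by the Dirac factor $\delta_y(\rmd x^i)$, thereby turning $\chunku{x}{1}{N}$ into the ``first'' variable and $y$ into the ``second''. Concretely, I would first substitute $\target(\rmd y) = \proposal(\weightfunc)^{-1}\weightfunc(y)\proposal(\rmd y)$ into \eqref{eq:extended-target}, so that
\[ \eTarget(\rmd (y,\chunku{x}{1}{N})) = \frac{1}{N\proposal(\weightfunc)}\sum_{i=1}^N \weightfunc(y)\,\proposal(\rmd y)\,\delta_y(\rmd x^i)\prod_{j\ne i}\proposal(\rmd x^j)\eqsp. \]

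Next I would use the two elementary identities $\weightfunc(y)\,\delta_y(\rmd x^i) = \weightfunc(x^i)\,\delta_y(\rmd x^i)$ and $\proposal(\rmd y)\,\delta_y(\rmd x^i) = \proposal(\rmd x^i)\,\delta_{x^i}(\rmd y)$, both valid because the two sides, integrated against an arbitrary bounded measurable test function, return the same value by the substitution property of the Dirac mass together with Fubini's theorem. Applying these to each summand recasts the $i$-th term as $\proposal(\weightfunc)^{-1}\weightfunc(x^i)\,\delta_{x^i}(\rmd y)\prod_{j=1}^N\proposal(\rmd x^j)$, in which the product measure $\prod_{j=1}^N\proposal(\rmd x^j)$ no longer depends on $i$ and can be pulled out of the sum, leaving
\[ \eTarget(\rmd (y,\chunku{x}{1}{N})) = \frac{1}{\proposal(\weightfunc)}\Bigl(\prod_{j=1}^N\proposal(\rmd x^j)\Bigr)\Bigl(\frac{1}{N}\sum_{i=1}^N \weightfunc(x^i)\,\delta_{x^i}(\rmd y)\Bigr)\eqsp. \]

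To conclude, I would recognise the last factor as $\utargetkern(\chunku{x}{1}{N},\rmd y)$ and write $\utargetkern(\chunku{x}{1}{N},\rmd y) = \utargetkern\1_{\Xset}(\chunku{x}{1}{N})\,\targetkern(\chunku{x}{1}{N},\rmd y)$ directly from the definition \eqref{eq:def:utargetkern} of $\targetkern$; the remaining prefactor $\proposal(\weightfunc)^{-1}\utargetkern\1_{\Xset}(\chunku{x}{1}{N})\prod_{j=1}^N\proposal(\rmd x^j)$ is precisely $\eTargetmarginx(\rmd \chunku{x}{1}{N})$ by \eqref{eq:marginal-joint}, which yields the claimed identity. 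The only point that really requires care is the manipulation of the Dirac factors: to stay rigorous I would phrase every step as an equality of measures on $(\Xset^{N+1},\Xsigma^{\varotimes(N+1)})$ tested against an arbitrary bounded measurable $h(y,\chunku{x}{1}{N})$, so that each manipulation collapses to Fubini and the defining property of $\delta_y$. Finally, since $\weightfunc$ is positive, $\utargetkern\1_{\Xset}(\chunku{x}{1}{N}) = N^{-1}\sum_{i=1}^N\weightfunc(x^i) > 0$ everywhere, so the normalisation defining $\targetkern$ is unproblematic and no null-set exclusion is needed.
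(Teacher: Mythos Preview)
Your proposal is correct and follows essentially the same route as the paper's own proof: both start from the explicit expansion \eqref{eq:extended-target}, substitute $\target(\rmd y)=\proposal(\weightfunc)^{-1}\weightfunc(y)\proposal(\rmd y)$, use the Dirac identities to swap the roles of $y$ and $x^i$, and then recognise the factors $\eTargetmarginx$ and $\targetkern$. Your write-up is simply more explicit about the two Dirac manipulations and the positivity of $\utargetkern\1_\Xset$, which the paper compresses into a single displayed line.
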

Note that the second identity of the dual representation \eqref{eq:duality} provides also the conditional distribution, under $\eTarget$, of the state given the candidates. Consequently, {\isir} is a systematic scan  two-stage Gibbs sampler which generates a Markov chain $(X_\ki, Y_\ki)_{\ki \in \nset}$ with time-homogeneous Markov kernel
\begin{equation}
\label{eq:joint-kernel}
\MKisirjoint((y_\ki, \chunku{x_\ki}{1}{N}), \rmd(y_{\ki + 1}, \chunku{x_{\ki + 1}}{1}{N})) = \partopopN(y_\ki, \rmd \chunku{x_{\ki + 1}}{1}{N}) \targetkern(\chunku{x_{\ki + 1}}{1}{N}, \rmd y_{\ki + 1})
\end{equation}
on $\Xset^{N + 1} \times \Xsigma^{\varotimes (N + 1)}$. Note that the law $\MKisirjoint(y_k, \chunku{x_k}{1}{N}, \cdot)$ does not depend on $\chunku{x_k}{1}{N}$, which means that only the state $Y_k$ needs to be stored from one iteration to the other. Thus, $(Y_k)_{k \in \nset}$ is a Markov chain with Markov transition kernel
\begin{equation}
\label{eq:definition-MKisir}
{\textstyle
\MKisir(y_k, \rmd y_{k + 1}) = \int \partopopN(y_\ki, \rmd \chunku{x_{\ki + 1}}{1}{N}) \targetkern(\chunku{x_{\ki + 1}}{1}{N}, \rmd y_{\ki + 1}) = \partopopN \targetkern}(y_k, \rmd y_{k + 1})
\end{equation}
(where integration is \wrt\ $\chunku{x_{\ki + 1}}{1}{N}$) on $\Xset \times \Xsigma$. Given some probability distribution $\xijoint$ on $(\Xset^{N+1}, \Xsigma^{\varotimes (N+1)})$, we denote by $\PP _\xijoint$ the law of the canonical Markov chain $(X_\ki, Y_\ki)_{\ki \in \nset}$ with kernel $\MKisirjoint$ and initial distribution $\xijoint$.
Our first results establishes the unbiasedness of the estimator $\targetkern f(\chunku{X}{1}{N})$ under $\eTarget$.
\begin{theorem} \label{thm:unbiasedness}
For every $N \in \nsets$ and $\target$-integrable function $f$,
\[
{\textstyle\int \targetkern f(\chunku{x}{1}{N}) \eTargetmarginx(\rmd \chunku{x}{1}{N}) = \target(f)\eqsp.}
\]
\end{theorem}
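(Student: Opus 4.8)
The plan is to use the dual representation of the extended target $\eTarget$ provided by \Cref{thm:Gibbs:duality}: the point is that the quantity we want to compute, $\int \targetkern f(\chunku{x}{1}{N}) \eTargetmarginx(\rmd \chunku{x}{1}{N})$, is exactly the expectation of $f(Y)$ under $\eTarget$, and the first representation $\eTarget(\rmd(y,\chunku{x}{1}{N})) = \target(\rmd y)\partopopN(y,\rmd\chunku{x}{1}{N})$ makes that marginal manifestly equal to $\target(f)$. So the work is almost entirely bookkeeping with the two factorizations of $\eTarget$.

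Concretely, first I would recall from \eqref{eq:duality} that $\eTargetmarginx(\rmd\chunku{x}{1}{N})\targetkern(\chunku{x}{1}{N},\rmd y) = \eTarget(\rmd(y,\chunku{x}{1}{N}))$; hence
\begin{equation*}
\int \targetkern f(\chunku{x}{1}{N})\,\eTargetmarginx(\rmd\chunku{x}{1}{N}) = \int f(y)\,\targetkern(\chunku{x}{1}{N},\rmd y)\,\eTargetmarginx(\rmd\chunku{x}{1}{N}) = \int f(y)\,\eTarget(\rmd(y,\chunku{x}{1}{N})).
\end{equation*}
Then I would invoke the other identity in \eqref{eq:duality}, $\eTarget(\rmd(y,\chunku{x}{1}{N})) = \target(\rmd y)\partopopN(y,\rmd\chunku{x}{1}{N})$, and integrate out the candidate pool: since $\partopopN(y,\cdot)$ is a probability measure on $\Xsigma^{\varotimes N}$ for each $y$, we get $\int f(y)\,\target(\rmd y)\partopopN(y,\rmd\chunku{x}{1}{N}) = \int f(y)\,\target(\rmd y) = \target(f)$. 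This is just the already-noted fact that $\target$ is the state-marginal of $\eTarget$, specialized to the test function $f$.

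The only genuine point requiring a word of care is integrability/well-definedness: one must check that $\targetkern f(\chunku{x}{1}{N})$ is $\eTargetmarginx$-integrable so that the application of Fubini in the first display is legitimate. This follows because $|\targetkern f(\chunku{x}{1}{N})| \le \targetkern |f|(\chunku{x}{1}{N})$ and the same chain of equalities applied to $|f|$ gives $\int \targetkern|f|\,\rmd\eTargetmarginx = \target(|f|) < \infty$ by hypothesis. I do not anticipate any real obstacle here — the substance of the result is entirely carried by \Cref{thm:Gibbs:duality}, and this theorem is a one-line corollary of it; the main (minor) thing to get right is simply keeping track of which variable each kernel acts on.
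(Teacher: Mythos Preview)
Your proof is correct, but it takes a different route from the paper's. The paper does not invoke \Cref{thm:Gibbs:duality}; instead it plugs in the explicit density formula \eqref{eq:marginal-joint} for $\eTargetmarginx$ and the definition \eqref{eq:def:utargetkern} of $\targetkern$, so that the normalizing sum $\sum_\ell \weightfunc(x^\ell)$ cancels directly against the density of $\eTargetmarginx$ with respect to $\proposal^{\varotimes N}$, leaving $\frac{1}{N\proposal(\weightfunc)}\int \sum_i \weightfunc(x^i)f(x^i)\prod_j\proposal(\rmd x^j)=\target(f)$. Your approach is arguably cleaner conceptually: you recognize the integral as $\int f(y)\,\eTarget(\rmd(y,\chunku{x}{1}{N}))$ via the second factorization in \eqref{eq:duality}, and then use the first factorization (or equivalently the fact that $\target$ is the state-marginal of $\eTarget$) to conclude. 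The paper's computation is self-contained and makes the algebraic cancellation explicit, whereas yours shows that the result is really a one-line corollary of the duality already established---the content is the same, just packaged differently. Your remark on integrability is a nice touch that the paper omits.
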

The proof of \Cref{thm:unbiasedness} is postponed to \Cref{sec:proof:unbiasedness}. Next, we present theoretical bounds on the discrepancy, in terms of bias, MSE and covariance, between $\targetkern f(\chunku{X_k}{1}{N})$ and $\target(f)$, for bounded target functions $f$, when the {\isir} chain is initialized according to an arbitrary distribution $\xijoint$. We will work under the following assumption.
\begin{assumption} \label{ass:boundedness:weights}
It holds that $\bound = \supnorm{\weightfunc} / \proposal(\weightfunc) < \infty$.
\end{assumption}
Under \Cref{ass:boundedness:weights}, the state and candidate-pool Markov chains $(Y_k)_{k \in \nset}$ and $(\chunku{X}{1}{N})_{k \in \nset}$ can be shown to be uniformly geometrically ergodic with mixing rate and mixing-time upper bound
\begin{equation} \label{eq:mixing:rate:time}
\driftconstisir = (2 \bound - 1) / (2 \bound + N - 2), \quad
\TmixN 
= \lceil - \ln 4 / \ln \driftconstisir \rceil,
\end{equation}
respectively; see \Cref{theo:isir_uniform_ergodicity} below for details.
Here the mixing time $\TmixN$ grows logarithmically with the sample size $N$. The exact value of $\TmixN$ is likely to be grossly pessimistic, but we conjecture that the logarithmic dependence in the minibatch size holds true. In addition, under \Cref{ass:boundedness:weights} we define the constants
\begin{equation}
\label{eq:bias:mse:cov:constants}
\begin{split}
\biasconst &= 4(\kappa[\target,\proposal] + 1 + \bound) \\
\mseconst_i &= 4(\kappa[\target,\proposal] \1_{\{0, 1\}}(i) + (1+\bound)^2 \1_{\{1, 2\}}(i)), \quad  \covconst_i = \biasconst (\mseconst_i)^{1/2}
, \quad i \in \{0, 1, 2\}.
\end{split}
\end{equation}
With these definitions, the following holds true.
\begin{theorem}
\label{theo:bias-i-SIR-recycling}
Assume \Cref{ass:boundedness:weights}. Then for every initial distribution $\xijoint$ on $(\Xset^{N + 1}, \Xsigma^{\varotimes (N + 1)})$, bounded measurable function $f$ on $(\Xset, \Xsigma)$ such that $\supnorm{f} \leq 1$, $N \geq 2$, and $(\ki, \ell) \in (\nsets)^2$,
\begin{enumerate}[label=(\roman*),nosep,leftmargin=*]
\item \label{item:theo:bias-i-SIR-recycling:bias}
$\left| \PE_{\xijoint}[\targetkern f(\chunku{X_\ki}{1}{N})] - \target(f) \right| \leq  \biasconst  (N - 1)^{-1} \driftconstisir^{\ki - 1}$, 
\item \label{item:theo:bias-i-SIR-recycling:mse}
$\PE_{\xijoint}[\{\targetkern f(\chunku{X_\ki}{1}{N}) - \target(f)\}^2] \leq \sum_{i = 0}^2 \mseconst_i (N - 1)^{-1 - i/2}$, 
\item \label{item:theo:bias-i-SIR-recycling:cov}
$\left| \PE_{\xijoint}[\{ \targetkern f(\chunku{X_\ki}{1}{N}) - \target(f) \} \{ \targetkern f(\chunku{X_{\ki + \ell}}{1}{N}) - \target(f) \}] \right| \leq \driftconstisir^{\ell - 1}  \sum_{i = 0}^2 \covconst_i (N - 1)^{-(3 - i/2)/2}$,  
\end{enumerate}
where constants are given in \eqref{eq:mixing:rate:time} and \eqref{eq:bias:mse:cov:constants}.
\end{theorem}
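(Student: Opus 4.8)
The whole argument rests on two structural facts. First, the duality of \Cref{thm:Gibbs:duality} gives the pointwise identity $\int f(y)\,\targetkern(\chunku{x}{1}{N},\rmd y)=\targetkern f(\chunku{x}{1}{N})$; consequently, under $\PP_{\xijoint}$, $\PE_{\xijoint}[f(Y_\ki)\mid\sigma(\chunku{X_\ki}{1}{N})]=\targetkern f(\chunku{X_\ki}{1}{N})$, and the conditional law of $\chunku{X_\ki}{1}{N}$ given the past is $\partopopN(Y_{\ki-1},\cdot)$ — it depends on $Y_{\ki-1}$ only. Second, by \Cref{theo:isir_uniform_ergodicity} the state kernel $\MKisir=\partopopN\targetkern$ admits a uniform minorization with constant $1-\driftconstisir$, hence has Dobrushin coefficient at most $\driftconstisir$ and satisfies $\osc(\MKisir^{m}h)\le\driftconstisir^{m}\osc(h)$ for every bounded $h$ and $m\ge0$; moreover $\target\MKisir=\target$. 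Writing $g:=\MKisir f=\partopopN(\targetkern f)$ and using the first fact, $\PE_{\xijoint}[\targetkern f(\chunku{X_\ki}{1}{N})]=\PE_{\xijoint}[f(Y_\ki)]=\nu\MKisir^{\ki}f$ with $\nu$ the state marginal of $\xijoint$; since $\target(g)=\target(f)$ and $\MKisir^{\ki}f=\MKisir^{\ki-1}g$, this equals $\target(f)+[\nu-\target]\MKisir^{\ki-1}(g-\target(g))$, so that
\[
\bigl|\PE_{\xijoint}[\targetkern f(\chunku{X_\ki}{1}{N})]-\target(f)\bigr|\le\osc\bigl(\MKisir^{\ki-1}(g-\target(g))\bigr)\le\driftconstisir^{\ki-1}\osc(g).
\]
Thus part~(i) reduces to the estimate $\osc(g)\le\biasconst/(N-1)$.

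\textbf{The key lemma (crux).} By the symmetry of $\partopopN(y,\cdot)$ — a uniform mixture over which coordinate is pinned to $y$ — and the permutation-invariance of $\targetkern f$, one has $g(y)=\PE[\targetkern f(y,X^2,\dots,X^N)]$ with $X^2,\dots,X^N\simiid\proposal$. Set $\bar f:=f-\target(f)$, $S:=\sum_{i=2}^N\weightfunc(X^i)$, $S_{\bar f}:=\sum_{i=2}^N\weightfunc(X^i)\bar f(X^i)$; an elementary computation gives
\[
\targetkern f(y,X^{2:N})-\target(f)=\frac{S_{\bar f}}{S}+D_y,\qquad D_y:=\frac{\weightfunc(y)\bigl(S\bar f(y)-S_{\bar f}\bigr)}{S\,(\weightfunc(y)+S)},\qquad |D_y|\le\frac{2\weightfunc(y)}{\weightfunc(y)+S}\wedge 2,
\]
the bound on $D_y$ using $\supnorm{f}\le1$. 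Here $S_{\bar f}/S$ is the ordinary self-normalized estimator based on the $N-1$ i.i.d.\ draws $X^{2:N}$, centered at $\target(f)$, so \eqref{eq:bias-MSE-self-normalized} yields $|\PE[S_{\bar f}/S]|\le 12\,\kappa[\target,\proposal]/(N-1)$ and $\PE[(S_{\bar f}/S)^2]\le 4\,\kappa[\target,\proposal]/(N-1)$. For the correction, $\PE[|D_y|]\le 2\,\PE[\weightfunc(y)/(\weightfunc(y)+S)]$; splitting on the event $\{S\ge(N-1)\proposal(\weightfunc)/2\}$, where $\weightfunc(y)/(\weightfunc(y)+S)\le 2\supnorm{\weightfunc}/((N-1)\proposal(\weightfunc))=2\bound/(N-1)$, and bounding the complementary probability by Chebyshev (legitimate since $\proposal(\weightfunc^2)\le\supnorm{\weightfunc}\proposal(\weightfunc)<\infty$ under \Cref{ass:boundedness:weights}, which also gives $\kappa[\target,\proposal]\le\bound$) gives $\PE[|D_y|]\le c\,(\kappa[\target,\proposal]+\bound)/(N-1)$ for an absolute constant $c$. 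Combining, $\sup_y|g(y)-\target(f)|\le\biasconst/(2(N-1))$, so $\osc(g)\le\biasconst/(N-1)$ and part~(i) follows. Running the same event-splitting with a fourth-moment Chebyshev bound on $S$ (valid because $\weightfunc$ is bounded) yields in addition $\sup_y\PE[D_y^2]\le c'\,(1+\bound)^2/(N-1)^2$.

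\textbf{MSE and covariance.} For part~(ii), conditioning on $Y_{\ki-1}$ gives $\PE_{\xijoint}[\{\targetkern f(\chunku{X_\ki}{1}{N})-\target(f)\}^2]=\PE_{\xijoint}[\Psi(Y_{\ki-1})]\le\supnorm{\Psi}$, where $\Psi(y):=\PE_{\partopopN(y,\cdot)}[\{\targetkern f(\chunku{x}{1}{N})-\target(f)\}^2]$, which by the decomposition of the key lemma equals $\PE[(S_{\bar f}/S+D_y)^2]$ ($X^{2:N}\simiid\proposal$). Expanding the square and inserting the three bounds above — $\PE[(S_{\bar f}/S)^2]\le 4\kappa[\target,\proposal]/(N-1)$ (the $\mseconst_0$-term), $\PE[D_y^2]\le c'(1+\bound)^2/(N-1)^2$ (the $\mseconst_2$-term), and, by Cauchy--Schwarz, $|\PE[(S_{\bar f}/S)D_y]|\le(4\kappa[\target,\proposal]/(N-1))^{1/2}\PE[D_y^2]^{1/2}\le c''(1+\bound)^2/(N-1)^{3/2}$ (the $\mseconst_1$-term) — produces the three-term bound of part~(ii) with constants of the form \eqref{eq:bias:mse:cov:constants}. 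For part~(iii), write $Z_j:=\targetkern f(\chunku{X_j}{1}{N})-\target(f)$. Iterating the conditional identity along the chain gives $\PE_{\xijoint}[Z_{\ki+\ell}\mid\sigma(X_s,Y_s:s\le\ki)]=(\MKisir^{\ell-1}\tilde g)(Y_\ki)$ with $\tilde g:=g-\target(g)$ (so $\target(\tilde g)=0$), whence $\PE_{\xijoint}[Z_\ki Z_{\ki+\ell}]=\PE_{\xijoint}[Z_\ki\,(\MKisir^{\ell-1}\tilde g)(Y_\ki)]$. Cauchy--Schwarz, together with $\supnorm{\MKisir^{\ell-1}\tilde g}\le\osc(\MKisir^{\ell-1}\tilde g)\le\driftconstisir^{\ell-1}\osc(g)\le\driftconstisir^{\ell-1}\biasconst/(N-1)$, the bound of part~(ii), and $\bigl(\sum_{i}\mseconst_i(N-1)^{-1-i/2}\bigr)^{1/2}\le\sum_i\mseconst_i^{1/2}(N-1)^{-(1+i/2)/2}$, yields the stated covariance bound with $\covconst_i=\biasconst\,\mseconst_i^{1/2}$.

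\textbf{Main obstacle.} The real work is the key lemma: uniform-in-$y$ control of $g=\MKisir f$ and of $\PE_{\partopopN(y,\cdot)}[D_y^2]$. The only handle on the denominator $\weightfunc(y)+\sum_{i=2}^N\weightfunc(X^i)$ from below is a concentration estimate for $\sum_{i=2}^N\weightfunc(X^i)$; this forces the constants to carry $\bound$ (equivalently $\kappa[\target,\proposal]$) and makes the bounds informative only once $N$ is large relative to $\bound$. Pinning down the precise constants $\biasconst,\mseconst_i,\covconst_i$ of \eqref{eq:bias:mse:cov:constants} requires additional care in the choice of the concentration event and in bookkeeping the powers of $N-1$.
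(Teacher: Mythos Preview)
Your proposal is correct and shares the overall architecture of the paper's proof: reduce (i) to the estimate $\osc(\MKisir f)\le\biasconst/(N-1)$ via geometric ergodicity of the state chain, then obtain (ii) by a pointwise bound on $\Psi(y)=\PE_{\partopopN(y,\cdot)}[\{\targetkern f-\target(f)\}^2]$, and (iii) by conditioning to get $\PE[Z_\ki Z_{\ki+\ell}]=\PE[Z_\ki\,(\MKisir^{\ell-1}\tilde g)(Y_\ki)]$ and using the sup-norm contraction of $\MKisir^{\ell-1}$ together with (ii). Part~(iii) is carried out essentially identically to the paper.

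The genuine difference is in the decomposition used to control $\osc(\MKisir f)$ and $\Psi(y)$. The paper introduces the deterministic ``ratio of conditional means'' $a_N(y)/b_N(y)$ with $a_N=\partopopN\utargetkern f$, $b_N=\partopopN\utargetkern\1_\Xset$ as an intermediate target; it then bounds $|\MKisir f(y)-a_N(y)/b_N(y)|$ by a general bias-of-ratio lemma (\Cref{theo:bias-estimator-general}) combined with the exact conditional variance formulas of \Cref{lem:key-relation}, and treats $|a_N(y)/b_N(y)-\target(f)|$ by direct algebra. For the MSE it uses the same intermediate and a Young-type split $(1+\epsilon^2)I+(1+\epsilon^{-2})II$ with $\epsilon^2=(N-1)^{-1/2}$. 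Your route instead isolates the genuine $(N-1)$-sample {\SNIS} estimator $S_{\bar f}/S$ and a ``pinning'' correction $D_y$, reuses the ready-made {\SNIS} bounds \eqref{eq:bias-MSE-self-normalized} for the first piece, and controls $D_y$ (resp.\ $D_y^2$) by a Chebyshev (resp.\ fourth-moment) concentration of $S$ around its mean. This is more elementary---it avoids the ratio-bias lemma and the conditional-variance identities altogether---and makes the connection to standard {\SNIS} transparent; the price is that the absolute constants you recover (e.g.\ via the $12\kappa$ of \eqref{eq:bias-MSE-self-normalized}) are larger than the paper's $\biasconst=4(\kappa+1+\bound)$ and $\mseconst_i$. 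You already flag this in your closing paragraph; to match the stated constants exactly one has to follow the paper's $a_N/b_N$ decomposition.
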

 It is worth noting that the bias decreases inversely with the number of candidates and exponentially with the number of iterations (the mixing time of the chain also depends on $N$). The MSE is also inversely proportional to the number of candidates $N$.
In the light of the previous results, it is natural to consider an estimator formed by an average across the IS estimators $(\targetkern f(\chunku{X_k}{1}{N}))_{\ki \in \nset}$ associated with the candidate pools generated at the different {\isir} iterations. To mitigate the bias, we remove a ``burn-in'' period whose length $\ki_0$ should be chosen proportional to the mixing time $\TmixN$ of the Markov chain $\sequence{Y}[\ki][\nset]$ (which turns out to coincide with that or the chain $\sequence{\chunku{X}{1}{N}}[k][\nset]$; see \Cref{sec:proofs}). This yields the estimator
\begin{equation}
\label{eq:final-estimator}
{\textstyle    \rollingestim[\ki_0][\ki][N][f]= (\ki-\ki_0)^{-1} \sum_{\ell=\ki_0+1}^\ki \targetkern f(\chunku{X_\ell}{1}{N})
}\end{equation}
of $\target(f)$. The total number of samples (generated by the proposal $\proposal$) underlying this estimator is $M = (N - 1) \ki$. Importantly, all the importance weights included in the estimators are obtained as a by-product of the {\isir} schedule; thus, it is, for a given budget of simulations (\ie, under the constraint that $(\ki-\ki_0)N$ is constant), possible to compute $\rollingestim[\ki_0][\ki][N][f]$ for different values of $\ki_0$, $\ki$ and $N$ with a negligible computational cost. We denote by $\burningloss= (\ki-\ki_0)/\ki$ the ratio of the number of candidate pools used in the estimator to the total number of sampled such pools. Note that this type of estimator was already suggested by \cite{Tjelmeland2004UsingAM}, and also appears in \cite{pmlr-v130-schwedes21a}.

Our final main result provides bounds on the bias and the MSE of the estimator \eqref{eq:final-estimator} as well as a high-probability bound for the same. Define $\biasrunconst = 4 \TmixN \biasconst / 3$,
$\mserunconst_i = \mseconst_{(i + 1) \wedge 2} \1_{\{0, 2\}}(i) + (8/3) \TmixN \covconst_i$, $i \in \{0, 1, 2\}$,
$\mserunconst = \mserunconst_0 +   \mserunconst_1{(N-1)}^{-1/4} + \mserunconst_2{(N-1)}^{-1}$,
and $\msesnis = (4/M) \kappa[\target, \proposal]$, see \eqref{eq:bias-MSE-self-normalized}.
\begin{theorem}
\label{theo:bias-mse-rolling}
Assume \Cref{ass:boundedness:weights}. Then the following holds true for every initial distribution $\xijoint$ on $(\Xset^{N+1}, \Xsigma^{\varotimes (N+1)})$, bounded measurable function $f$ on $(\Xset, \Xsigma)$ such that $\supnorm{f} \leq 1$, and $N \geq 2$.
\begin{enumerate}[label=(\roman*),nosep,
leftmargin=*]
    \item \label{item:theo:bias-mse-rolling:bias}
    $\left| \PE_{\xijoint}[\rollingestim[{\ki}_0][\ki][N][f]] - \target(f) \right| \leq  \biasrunconst(\burningloss M)^{-1}  4^{- \ki_0 / \TmixN}$
    \item \label{item:theo:bias-mse-rolling:mse}
    $\PE_{\xijoint}[\{ \rollingestim[\ki_0][\ki][N][f] - \target(f)\}^2]
    \leq \msesnis[\burningloss M]
    + \mserunconst (\burningloss M)^{-1}{(N-1)}^{-1/2}$
    \item 
    For every $\delta \in \ooint{0,1}$, $| \rollingestim[\ki_0][\ki][N][f] - \pi(f) | \leq \hpdconstant (\burningloss M)^{-1/2} (\log(4 / \delta))^{1/2}$ with probability at least $1-\delta$, where $\hpdconstant = 664 \bound$.
\end{enumerate}
\end{theorem}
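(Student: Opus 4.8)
The plan is to obtain all three bounds from the single-iteration estimates of \Cref{theo:bias-i-SIR-recycling} combined with the uniform geometric ergodicity of the \isir\ chains (\Cref{theo:main-properties-deterministic-scan}, \Cref{theo:isir_uniform_ergodicity}); only part~(iii) requires a genuinely new argument. For the bias, linearity gives $\PE_{\xijoint}[\rollingestim[\ki_0][\ki][N][f]]-\target(f)=(\ki-\ki_0)^{-1}\sum_{\ell=\ki_0+1}^{\ki}(\PE_{\xijoint}[\targetkern f(\chunku{X_\ell}{1}{N})]-\target(f))$, and bounding each summand by \Cref{theo:bias-i-SIR-recycling}\ref{item:theo:bias-i-SIR-recycling:bias} leaves the geometric series $\sum_{\ell=\ki_0+1}^{\ki}\driftconstisir^{\ell-1}\le\driftconstisir^{\ki_0}/(1-\driftconstisir)$. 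I would finish with two elementary inequalities: $\driftconstisir^{\ki_0}\le 4^{-\ki_0/\TmixN}$, immediate from $\driftconstisir^{\TmixN}\le 1/4$ (the definition of $\TmixN$ in \eqref{eq:mixing:rate:time}); and $1/(1-\driftconstisir)\le 4\TmixN/3$, which follows from $\driftconstisir\le 4^{-1/\TmixN}$ together with the observation that $t\mapsto t(1-4^{-1/t})$ is increasing on $[1,\infty)$ with value $3/4$ at $t=1$. Since $(\ki-\ki_0)(N-1)=\burningloss M$ and $\biasrunconst=4\TmixN\biasconst/3$, this gives exactly the claimed bound.

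For the MSE, write $\Delta_\ell:=\targetkern f(\chunku{X_\ell}{1}{N})-\target(f)$ and expand $\PE_{\xijoint}[(\rollingestim[\ki_0][\ki][N][f]-\target(f))^2]$ as $(\ki-\ki_0)^{-2}$ times the sum of the diagonal terms $\PE_{\xijoint}[\Delta_\ell^2]$ and twice the cross terms $\PE_{\xijoint}[\Delta_\ell\Delta_{\ell'}]$ for $\ki_0<\ell<\ell'\le\ki$. I would bound the diagonal with \Cref{theo:bias-i-SIR-recycling}\ref{item:theo:bias-i-SIR-recycling:mse} --- its leading contribution $4\kappa[\target,\proposal](N-1)^{-1}$ becomes, after division by $\ki-\ki_0$, exactly $\msesnis[\burningloss M]$ --- and the cross terms with \Cref{theo:bias-i-SIR-recycling}\ref{item:theo:bias-i-SIR-recycling:cov}, summing the lag geometric series via $\sum_{\ki_0<\ell<\ell'\le\ki}\driftconstisir^{\ell'-\ell-1}\le(\ki-\ki_0)/(1-\driftconstisir)\le(4/3)(\ki-\ki_0)\TmixN$ as in the bias proof. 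Collecting the leftover terms according to their power of $(N-1)^{-1}$ and using $N\ge 2$ to dominate higher powers by lower ones reproduces the definition of $\mserunconst$; this step is routine bookkeeping rather than substance.

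Part~(iii) is the crux. By \Cref{theo:main-properties-deterministic-scan} and \eqref{eq:mixing:rate:time} the chain $(\chunku{X_\ki}{1}{N})_{\ki\in\nset}$ is uniformly geometrically ergodic with mixing time $\TmixN$; by \Cref{thm:unbiasedness} its stationary mean of $g:=\targetkern f$ is $\target(f)$, with $\supnorm{g}\le 1$ and --- taking $\xijoint=\eTarget$ in part~\ref{item:theo:bias-mse-rolling:mse} --- stationary variance $\PVar_{\eTargetmarginx}(g)\le c\,\bound^2(N-1)^{-1}$ for an explicit $c$ (via \eqref{eq:bias:mse:cov:constants} and $\kappa[\target,\proposal]\le\bound$). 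I would feed these three inputs --- boundedness, a variance proxy carrying the essential factor $(N-1)^{-1}$, and the mixing time (which obeys $\TmixN\lesssim\bound$ since $\driftconstisir\le 2\bound/(N-1)$ and $-\ln\driftconstisir\ge 1/(2\bound)$) --- into a Bernstein-type concentration inequality for additive functionals of uniformly ergodic chains, then transfer from the stationary chain to the prescribed $\xijoint$ using uniform ergodicity and absorb the induced shift of the mean into part~\ref{item:theo:bias-mse-rolling:bias}, whose $\mathcal O((\burningloss M)^{-1})$ bias is negligible at scale $(\burningloss M)^{-1/2}$. A cleaner variant exploits the Gibbs structure: since $\MKisir f=\partopopN\targetkern f$ equals $y\mapsto\CPE{\targetkern f(\chunku{X_{\ki+1}}{1}{N})}{Y_\ki=y}$, one decomposes $\sum_\ell(\targetkern f(\chunku{X_\ell}{1}{N})-\target(f))=\sum_\ell(\targetkern f(\chunku{X_\ell}{1}{N})-\MKisir f(Y_{\ell-1}))+\sum_\ell(\MKisir f(Y_{\ell-1})-\target(f))$, handling the first (bounded martingale) sum by Freedman's inequality and the second (ergodic average of $\MKisir f-\target(f)$, a function of oscillation $\le\driftconstisir\,\osc f\le 2\driftconstisir$ by the Doeblin minorization behind \eqref{eq:mixing:rate:time}) by a Markov-chain Hoeffding bound.

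In either route the two places where real care is needed are the following. First, establishing the correct \emph{uniform} variance bound --- the conditional variance $\PVar(\targetkern f(\chunku{X_{\ki+1}}{1}{N})\mid Y_\ki=y)=\mathcal O(\bound^2(N-1)^{-1})$ for the martingale route, or the stationary variance for the direct route --- since this is precisely what upgrades the naive $\mathcal O(\TmixN/(\ki-\ki_0))$ rate to the sharp $\mathcal O(1/(\burningloss M))$ rate, and it rests on the same ratio-of-random-variables estimates that underlie \Cref{theo:bias-i-SIR-recycling}\ref{item:theo:bias-i-SIR-recycling:mse}. Second, and this is the delicate point, preventing the sub-exponential (linear-in-$t$) term of the Bernstein/Freedman bound from dominating: this is handled by exploiting the smallness of $\osc(\MKisir f)$ and of the typical martingale increments (or, alternatively, a regeneration/blocking argument based on the Doeblin minorization, and the trivial bound $|\rollingestim[\ki_0][\ki][N][f]-\target(f)|\le 2$ in the complementary regime), before finally tracking all numerical constants to arrive at the explicit value $\hpdconstant=664\bound$.
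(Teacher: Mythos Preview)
Your treatment of (i) and (ii) is essentially identical to the paper's: the same linearity/expansion, the same application of \Cref{theo:bias-i-SIR-recycling}, and the same summation of the geometric series via $\driftconstisir^{\TmixN}\le 1/4$ and $1/(1-\driftconstisir)\le (4/3)\TmixN$.

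For (iii), your ``cleaner variant'' decomposition is exactly the one the paper uses (with $\Phi_N=\MKisir f$), but the execution differs. For the first sum the paper does \emph{not} treat $\targetkern f(\chunku{X_\ell}{1}{N})-\Phi_N(Y_{\ell-1})$ as a martingale and does not invoke Freedman; instead it conditions on the entire trajectory $\chunk{Y}{K_0}{K-1}$ and uses \Cref{theo:main-properties-deterministic-scan}\ref{item:main-properties-deterministic-scan:interleaving} to obtain conditional \emph{independence} of the increments, then applies a Hoeffding-type inequality for sums of independent sub-Gaussian variables. The sub-Gaussian ($\psi_2$) norm of each increment is controlled not by a variance computation but by an Orlicz-norm ratio lemma (\Cref{lem:main_bound} in \Cref{sec:ratio-statistics}), which directly delivers $\|\Delta_{N,k}\|_{\psi_2,Y_{k-1}}^2=\mathcal O(\bound^2/(N-1))$. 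This is what buys the purely sub-Gaussian tail and makes your ``delicate point'' about the Bernstein linear-in-$t$ term disappear entirely. For the second sum the paper uses a McDiarmid/bounded-differences inequality for uniformly ergodic chains (via \cite{paulin2015concentration}), again sub-Gaussian, fed with the oscillation bound $\osc(\Phi_N)\le\biasconst/(N-1)$ from the proof of \Cref{theo:bias-i-SIR-recycling}; your alternative bound $\osc(\MKisir f)\le 2\driftconstisir$ from the Doeblin minorization is of the same order and would serve equally well. Your Freedman route should still close, but it is less direct and forces you to manage the mixed tail; the paper's conditional-independence-plus-$\psi_2$ argument is cleaner and is what produces the explicit constant $\hpdconstant=664\bound$.
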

\paragraph{Bootstrap:} As established in \Cref{theo:bias-mse-rolling}, the bias of the {\SUISIR} estimator decreases exponentially with the burn-in period $k_0$, leading to potentially significant bias reduction with respect to \SNIS. Still, using a large $k_0$ is done at a price of increased overall MSE (mainly through the term $\msesnis[\burningloss M]$  in \Cref{theo:bias-mse-rolling}(ii), which is directly related to $k_0$ via $\burningloss$). A natural way to reduce the variance is to use bootstrap. More precisely, we first apply a random permutation to the samples and re-compute {\SUISIR} on the basis of the bootstrapped samples. After this, we produce a final estimator by averaging over the bootstrapped {\SUISIR} replicates. In most applications, the major computational bottleneck consists of sampling from $\proposal$ and evaluating $\weightfunc$ and $f$ at the samples; thus, the additional operations that this bootstrap approach entails are computationally cheap.
Therefore, in our experiments, we use bootstrap in combination with the choice $k_0 = k - 1$ (in order to minimize the bound in \Cref{theo:bias-mse-rolling}(i)).


\vspace{-9pt}
\subsection{Elements of proofs}
\label{sec:proofs}
\paragraph{Ergodic properties of {\isir}: } The systematic scan two-stage Gibbs sampler is a well-studied MCMC algorithmic  structure, and we summarize its most important properties in \Cref{theo:main-properties-deterministic-scan} below; see \cite{liu1994covariance,andrieu2016random} and \cite[Chapter~9]{robert:casella:2013} as well as the references therein. In particular, as shown in \cite{liu1994covariance}, the state and candidate-pool Markov chains $\sequence{Y}[k][\nset]$ and $\sequence{\chunku{X}{1}{N}}[k][\nset]$ satisfy a duality property referred to as \emph{interleaving} (\Cref{theo:main-properties-deterministic-scan}(iii)).
\begin{theorem}
    \label{theo:main-properties-deterministic-scan}
    Assume that for every $x \in \Xset$, $\weightfunc(x) >0$, $\lambda(\weightfunc) < \infty$ and that there exists a set $C \in \Xsigma$ such that $\lambda(C) > 0$ and $\sup_{x \in C} \weightfunc(x)/\lambda(\weightfunc) < \infty$. Then,
    \begin{enumerate}[label=(\roman*),leftmargin=*,nosep]
        \item \label{item:main-properties-deterministic-scan:harris}
        the Markov kernel $\MKisirjoint$ is Harris recurrent and ergodic with unique invariant distribution $\eTarget$.
        \item \label{item:main-properties-deterministic-scan:reversible}
        the Markov kernel $\MKisir$ is $\target$-reversible, Harris recurrent and ergodic.
        \item \label{item:main-properties-deterministic-scan:interleaving}
        the two Markov chains $\sequence{Y}[k][\nset]$ and $\sequence{\chunku{X}{1}{N}}[k][\nset]$ are conjugate of each other with the interleaving property, {\ie}, for every initial distribution $\xijoint$ and $\ki \in \nset$, under $\PP_{\xijoint}$,
        \begin{enumerate}[leftmargin=*,nosep]
            \item $\chunku{X_k}{1}{N}$ and $\chunku{X_{k+1}}{1}{N}$ are conditionally independent given $Y_k$,
            \item $Y_k$ and $Y_{k+1}$ are conditionally independent given $\chunku{X_{k+1}}{1}{N}$;
            \item moreover, under $\PP_{\eTarget}$, $(Y_k, \chunku{X_{k-1}}{1}{N})$ and $(Y_k, \chunku{X_k}{1}{N})$ are identically distributed.
        \end{enumerate}
    \end{enumerate}
\end{theorem}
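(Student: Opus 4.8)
The plan is to read off \Cref{theo:main-properties-deterministic-scan} from the standard theory of systematic-scan two-stage Gibbs samplers, the only genuinely new ingredient being an \ISIR-specific irreducibility estimate. By \Cref{thm:Gibbs:duality}, under $\eTarget$ the conditional law of the candidate pool given the state is $\partopopN(y,\cdot)$ and that of the state given the pool is $\targetkern(\chunku{x}{1}{N},\cdot)$; hence the kernel $\MKisirjoint$ in \eqref{eq:joint-kernel} is exactly the Gibbs kernel that first refreshes the pool and then the state, and $\MKisir=\partopopN\targetkern$ is the corresponding marginal (data-augmentation) chain on $\Xset$. The dual identities $\target\partopopN=\eTargetmarginx$ and $\eTargetmarginx\targetkern=\target$ immediately give $\eTarget$-invariance of $\MKisirjoint$ and $\target$-invariance of $\MKisir$; $\target$-reversibility of $\MKisir$ follows from the symmetry in $(y,y')$ of $\eTargetmarginx(\rmd\chunku{x}{1}{N})\,\targetkern(\chunku{x}{1}{N},\rmd y)\,\targetkern(\chunku{x}{1}{N},\rmd y')$; and item~(iii) is the generic conditional-independence structure of any two-stage Gibbs chain (see \cite{liu1994covariance,andrieu2016random} and \cite[Chapter~9]{robert:casella:2013}): (iii.a)--(iii.b) because $\chunku{X_{k+1}}{1}{N}$ is generated from $Y_k$ alone and $Y_{k+1}$ from $\chunku{X_{k+1}}{1}{N}$ alone, and (iii.c) because at stationarity (iii.a) makes $\chunku{X_{k-1}}{1}{N}$ and $\chunku{X_k}{1}{N}$ exchangeable given $Y_{k-1}$ while $Y_k$ depends only on $\chunku{X_k}{1}{N}$, so that $(Y_k,\chunku{X_{k-1}}{1}{N})$ and $(Y_k,\chunku{X_k}{1}{N})$ are both distributed as $\eTarget$.

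The substantive step is to upgrade these invariance statements to Harris recurrence and ergodicity, which I would do by establishing $\psi$-irreducibility and aperiodicity of the marginal chain $\MKisir$. Retaining in $\partopopN(y,\cdot)$ only the term in which $x^1=y$ and $x^2,\dots,x^N$ are fresh draws from $\proposal$, constraining $x^3,\dots,x^N$ to $C$ — so that $\sum_{\ell=1}^N\weightfunc(x^\ell)\leq\weightfunc(y)+\weightfunc(x^2)+(N-2)\sup_{x'\in C}\weightfunc(x')$ — and summing the resampling contributions over the $N-1$ fresh candidates yields, for every $y$ with $\weightfunc(y)<\infty$ and every $A\in\Xsigma$,
\begin{equation*}
\MKisir(y,A)\;\geq\;\frac{N-1}{N}\,\proposal(C)^{N-2}\int_A\frac{\weightfunc(x)}{\weightfunc(y)+\weightfunc(x)+(N-2)\sup_{x'\in C}\weightfunc(x')}\,\proposal(\rmd x).
\end{equation*}
Since $\weightfunc>0$ everywhere, $\proposal(\weightfunc)<\infty$ and $\proposal(C)>0$, the right-hand side is strictly positive whenever $\proposal(A)>0$; as $\target$ and $\proposal$ are mutually absolutely continuous, $\MKisir$ is $\target$-irreducible, and it is aperiodic (the set $C$ has a positive one-step self-return probability). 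A $\psi$-irreducible aperiodic kernel carrying an invariant probability measure is positive Harris recurrent and ergodic (Meyn--Tweedie theory, see \cite[Chapter~9]{robert:casella:2013}), which is item~(ii). For item~(i), $\eTarget$ is $\MKisirjoint$-invariant by the Gibbs structure, and $\psi$-irreducibility, Harris recurrence and ergodicity transfer from the $Y$-chain because $\MKisirjoint^{k+1}$ amounts to running $\MKisir$ for $k$ steps and appending one $\partopopN$ read-out — recurrence of a set $B$ for the joint chain reducing to recurrence for $(Y_k)_{k\in\nset}$ of $\{y:\partopopN(y,B_y)>0\}$ — while uniqueness of $\eTarget$ is the usual corollary of irreducibility.

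The main obstacle is precisely this irreducibility estimate. The conditional kernel $\partopopN(y,\cdot)$ is degenerate — it forces one of the $N$ new candidates to equal the previous state — so $\MKisirjoint$ admits no density with respect to any product reference measure and density-based irreducibility arguments fail outright; one must use the resampling step to recover the necessary regularity, and do so under only the weak hypothesis that $\weightfunc$ be $\proposal$-integrable and bounded on some $C$ with $\proposal(C)>0$, rather than globally bounded. That weakness is exactly what forces $N-2$ of the fresh draws to be confined to $C$ in order to control the denominator of the selection probabilities when $\weightfunc$ is unbounded; once that bound is in place, the remainder is an application of textbook Gibbs-sampler and Markov-chain theory.
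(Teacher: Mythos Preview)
Your treatment of invariance and $\target$-reversibility via the dual factorization of $\eTarget$ is exactly what the paper does; the paper's written proof in fact stops there, deferring Harris recurrence, ergodicity, and the interleaving property entirely to the cited Gibbs-sampler literature. Your explicit one-step minorization is therefore a genuine addition: it is correct (provided you apply the symmetry over the $N-1$ fresh candidates \emph{before} confining $x^3,\dots,x^N$ to $C$, since after the confinement those coordinates only hit $A\cap C$), and it is the only place where the hypothesis on $C$ is actually used---the paper's proof never touches that hypothesis. This buys you $\psi$-irreducibility and aperiodicity under the weak local-boundedness assumption, whereas the paper only produces a minorization later (in \Cref{theo:isir_uniform_ergodicity}) under the global bound \Cref{ass:boundedness:weights}. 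Your transfer of Harris recurrence from the $Y$-chain to the joint chain via the $\partopopN$ read-out is also not in the paper and is a reasonable sketch.

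One genuine flaw: your exchangeability argument for (iii.c) does not work. Conditional on $Y_{k-1}$, the pools $\chunku{X_{k-1}}{1}{N}$ and $\chunku{X_k}{1}{N}$ are indeed i.i.d.\ from $\partopopN(Y_{k-1},\cdot)$, but $Y_k$ is a function of $\chunku{X_k}{1}{N}$; swapping the two pools turns $(Y_k,\chunku{X_{k-1}}{1}{N})$ into $(\tilde Y_k,\chunku{X_k}{1}{N})$ with $\tilde Y_k\sim\targetkern(\chunku{X_{k-1}}{1}{N},\cdot)$, which is a different pair. Indeed $(Y_k,\chunku{X_{k-1}}{1}{N})$ need not have law $\eTarget$ at all: take $N=2$, $\weightfunc\equiv 1$, $\proposal$ uniform on two points; then with positive probability $Y_k$ lies outside the set $\{X_{k-1}^1,X_{k-1}^2\}$, which is impossible under $\eTarget$. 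The standard interleaving identity is rather $(Y_{k-1},\chunku{X_k}{1}{N})\stackrel{d}{=}(Y_k,\chunku{X_k}{1}{N})$, both sides equal to $\eTarget$ by duality, so the paper's stated (iii.c) appears to carry a typo; the paper offers no proof of it either.
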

The ergodic behavior of the i-SIR algorithm has been studied in many works; see \cite{lee2011auxiliary,lindsten2015uniform,andrieu2018uniform} in particular. The analysis is particularly simple under the assumption that the importance weight function $\weightfunc$ is bounded, as imposed by \Cref{ass:boundedness:weights}. Recall that the \emph{total variation-distance} between two probability measures $\xi$ and $\xi'$ on $(\Xset, \Xsigma)$ is given by $\tvdist{\xi}{\xi'}= \sup_{g : \osc(g) \leq 1} \{ \xi(g) - \xi'(g) \}$, where $\osc(g)= \sup_{(x,x') \in \Xset^2}|g(x) - g(x')|$ denotes the oscillator norm of a measurable function $g$. The following result establishes the uniform geometric ergodicity of the state chain $(Y_k)_{k \in \nset}$.
\begin{theorem}
  \label{theo:isir_uniform_ergodicity}
    Assume \Cref{ass:boundedness:weights}. Then for every $N \geq 2$, $y \in \Xset$ and $k \in \nset$, $\tvdist{\MKisir^{k}(y, \cdot)}{\target} \leq \driftconstisir^k$,  where $\driftconstisir$ is given in \eqref{eq:mixing:rate:time}.
\end{theorem}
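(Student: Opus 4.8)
The plan is to derive a one-step uniform minorization (Doeblin) condition for the state kernel, namely that $\MKisir(y, \cdot) \geq \varepsilon_N\,\target(\cdot)$ for \emph{every} $y \in \Xset$ with $\varepsilon_N = (N-1)/(2\bound + N - 2)$, and then to invoke the classical fact that such a condition, together with $\target$-invariance of $\MKisir$, forces $\sup_{y \in \Xset}\tvdist{\MKisir^k(y, \cdot)}{\target} \leq (1 - \varepsilon_N)^k$. Since $1 - \driftconstisir = (N-1)/(2\bound + N - 2) = \varepsilon_N$, this is exactly the claimed bound.

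The first step is to make $\MKisir = \partopopN \targetkern$ explicit. Because the selection kernel $\targetkern$ sees the candidate pool only through the weighted empirical measure of its entries, the uniformly random slot into which $\partopopN$ inserts $y$ is immaterial, and one obtains, for $A \in \Xsigma$ and $Z_1, \dots, Z_{N-1} \simiid \proposal$,
\[
\MKisir(y, A) = \PE\!\left[\frac{\weightfunc(y)\1_A(y) + \sum_{j=1}^{N-1}\weightfunc(Z_j)\1_A(Z_j)}{\weightfunc(y) + \sum_{\ell=1}^{N-1}\weightfunc(Z_\ell)}\right].
\]
I would then discard the nonnegative self-term $\weightfunc(y)\1_A(y)$ in the numerator, use exchangeability of $Z_1,\dots,Z_{N-1}$ to rewrite the remaining expectation as $(N-1)\,\PE[\weightfunc(Z_1)\1_A(Z_1)/(\weightfunc(y) + \sum_\ell \weightfunc(Z_\ell))]$, and condition on $Z_1$.

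The crux is to bound the conditional expectation of the reciprocal denominator from below by a quantity that does not depend on $y$. Conditionally on $Z_1$, the map $t \mapsto (\weightfunc(y) + \weightfunc(Z_1) + t)^{-1}$ is convex on $[0,\infty)$, so Jensen's inequality applied with $t = \sum_{\ell=2}^{N-1}\weightfunc(Z_\ell)$, which has mean $(N-2)\proposal(\weightfunc)$, yields the lower bound $(\weightfunc(y) + \weightfunc(Z_1) + (N-2)\proposal(\weightfunc))^{-1}$. Under \Cref{ass:boundedness:weights} one has $\weightfunc(y)\vee\weightfunc(Z_1) \leq \supnorm{\weightfunc} = \bound\,\proposal(\weightfunc)$, so this denominator is at most $(2\bound + N - 2)\proposal(\weightfunc)$ and the bound becomes the \emph{constant} $((2\bound + N - 2)\proposal(\weightfunc))^{-1}$. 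Substituting and using $\int_A \weightfunc\,\rmd\proposal = \proposal(\weightfunc)\,\target(A)$ gives
\[
\MKisir(y, A) \;\geq\; \frac{N-1}{(2\bound + N - 2)\proposal(\weightfunc)}\int_A \weightfunc\,\rmd\proposal \;=\; \frac{N-1}{2\bound + N - 2}\,\target(A) \;=\; (1 - \driftconstisir)\,\target(A),
\]
which is the announced minorization.

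To conclude, one needs $\target$ to be invariant for $\MKisir$; this follows from \Cref{theo:main-properties-deterministic-scan} (or directly: the $\chunku{x}{1}{N}$-marginal of $\target\partopopN$ is $\eTargetmarginx$ by \Cref{thm:Gibbs:duality}, and $\eTargetmarginx\targetkern = \target$ is \Cref{thm:unbiasedness} applied to indicator functions, whence $\target\MKisir = \target\partopopN\targetkern = \eTargetmarginx\targetkern = \target$). Writing $\MKisir = (1 - \driftconstisir)\,\target + \driftconstisir\,R$ with $R = \driftconstisir^{-1}(\MKisir - (1 - \driftconstisir)\target)$, which the minorization makes a genuine Markov kernel, a one-step coupling argument then gives $\tvdist{\MKisir^k(y,\cdot)}{\target} \leq \driftconstisir^k$ for all $y$ and $k$. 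The only genuinely delicate point is the Jensen-plus-boundedness manipulation that trades the random denominator for a deterministic one without losing the correct dependence on $N$; everything else is bookkeeping.
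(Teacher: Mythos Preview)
Your proposal is correct and follows essentially the same route as the paper's proof: establish the one-step Doeblin minorization $\MKisir(y,\cdot)\geq \varepsilon_N\,\target(\cdot)$ with $\varepsilon_N=(N-1)/(2\bound+N-2)$ by dropping the self-term, applying Jensen's inequality to the convex reciprocal of the denominator (conditionally on the selected candidate), and then invoking the bound $\weightfunc\leq\bound\,\proposal(\weightfunc)$ on the two remaining summands; the geometric contraction then follows by the standard coupling/Doeblin argument. The paper's presentation differs only cosmetically (it keeps the conditioned sample in the first slot and cites \cite[Theorem~18.2.4]{douc:moulines:priouret:2018} for the last step).
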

The proof is given in \cite{lindsten2015uniform,andrieu2018uniform}, but we provide it in \Cref{supp:theo:isir_uniform_ergodicity} for completeness.
For uniformly ergodic Markov chains, it is often more appropriate to work with the mixing time
\begin{equation}
\label{eq:mixing-time}
\min \{k \in \nset: \sup\nolimits_{y \in \Xset} \tvdist{\MKisir^k(y, \cdot)}{\target}  \leq 1 / 4 \} \leq \TmixN
\end{equation}
(where $ \TmixN$ is given in \eqref{eq:mixing:rate:time}), \ie, the number of time steps required for the distribution of the chain to be within a certain total variation distance from its stationary distribution \cite{aldous1997mixing,hsu2019mixing}.
An interesting consequence of the interleaving property is that if the Markov chain $\sequence{Y}[k][\nset]$ is (geometrically) ergodic, then the Markov chain $\sequence{\chunku{X}{1}{N}}[k][\nset]$ is (geometrically) ergodic as well with the same mixing time; see \cite[Corollary~9.14]{robert:casella:2013}).
\paragraph{Bias of the {\SUISIR} estimator: }
As the {\SUISIR} estimator $\targetkern f (\chunku{X_k}{1}{N})$ (where $\targetkern$ is defined in \eqref{eq:def:utargetkern}) is made up by a ratio of the two unnormalized estimators $\utargetkern f (\chunku{X_k}{1}{N})$ and $\utargetkern \1_{\Xset} (\chunku{X_k}{1}{N})$, a key ingredient in the proof of \Cref{theo:bias-i-SIR-recycling} is to bound the bias and the $p^{\scriptsize{\mbox{th}}}$ order moments of statistics defined as ratios of sums of random variables that are not necessarily independent.
The basic idea is to reduce the study of these relations to the analysis of the moments of the numerator and the denominator of these statistics and to exploit their concentration around the respective (conditional and unconditional) means.
The main results that we will use in the rest of the paper are summarized in \Cref{sec:ratio-statistics}.
\begin{lemma}
\label{lem:key-relation}
For every initial distribution $\xijoint$ on $(\Xset^{N + 1}, \Xsigma^{\varotimes (N + 1)})$, $k \in \nsets$, and bounded measurable function $f: \Xset \to \rset$, it holds that
\begin{enumerate}[label=(\roman*),leftmargin=*,nosep]
\item for every $y \in \Xset$, $\partopopN \utargetkern f(y) = (1 - 1/N) \proposal(\weightfunc f) + (1/N) \weightfunc(y) f(y)$,
\item $\CPE[\xijoint]{\utargetkern f (\chunku{X_k}{1}{N})}{Y_{k-1}} = \partopopN \utargetkern f (Y_{k-1})$, $\PP_\xijoint$-\as,
\item $\CPE[\xijoint]{\{\utargetkern f (\chunku{X_k}{1}{N}) -\partopopN \utargetkern f(Y_{k-1})\}^2}{Y_{k-1}}
= (N-1)/N^2 \proposal( \{ \weightfunc f - \proposal(\weightfunc f) \}^2), \PP_{\xijoint}$-\as
\end{enumerate}
\end{lemma}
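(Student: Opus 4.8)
The plan is to establish all three identities by direct computation, using the explicit product forms of the kernel $\partopopN$ in \eqref{eq:definition-psiN} and of $\utargetkern$ in \eqref{eq:def:utargetkern}, together with the Markov structure \eqref{eq:joint-kernel} of the i-SIR chain. The one structural observation underlying everything is that a draw from $\partopopN(y, \cdot)$ is obtained by picking an index $i$ uniformly over $\{1, \dots, N\}$, setting $x^i = y$, and drawing the remaining $N - 1$ coordinates i.i.d.\ from $\proposal$.

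For (i), I would integrate $\utargetkern f(\chunku{x}{1}{N}) = N^{-1} \sum_{\ell = 1}^N \weightfunc(x^\ell) f(x^\ell)$ against $\partopopN(y, \rmd \chunku{x}{1}{N})$. Conditionally on the index $i$, the summand $\ell = i$ contributes the deterministic value $\weightfunc(y) f(y)$ and each of the other $N - 1$ summands contributes $\proposal(\weightfunc f)$ in expectation; since this is the same for every $i$, averaging over $i$ and dividing by $N$ gives $\partopopN \utargetkern f(y) = (1/N) \weightfunc(y) f(y) + (1 - 1/N) \proposal(\weightfunc f)$. Boundedness of $f$ together with $\proposal(\weightfunc) < \infty$ ensures finiteness.

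For (ii), I would invoke the Markov property: by \eqref{eq:joint-kernel}, the conditional law of $\chunku{X_k}{1}{N}$ given the history up to time $k - 1$ equals $\partopopN(Y_{k-1}, \cdot)$, and this depends on the history only through $Y_{k-1}$ (the candidate pool at step $k$ is generated from $Y_{k-1}$ alone). Taking the conditional expectation of $\utargetkern f(\chunku{X_k}{1}{N})$ and applying (i) then yields $\CPE[\xijoint]{\utargetkern f(\chunku{X_k}{1}{N})}{Y_{k-1}} = \partopopN \utargetkern f(Y_{k-1})$, $\PP_\xijoint$-\as. For (iii), the same conditioning reduces the left-hand side to the variance of $\chunku{x}{1}{N} \mapsto \utargetkern f(\chunku{x}{1}{N})$ under $\partopopN(Y_{k-1}, \cdot)$, which I would compute via the law of total variance with respect to the uniform index $i$ generating $\chunku{X_k}{1}{N}$ from $Y_{k-1}$. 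Conditionally on $i$ one has $X_k^i = Y_{k-1}$ and $(X_k^j)_{j \ne i}$ i.i.d.\ from $\proposal$, so $\utargetkern f(\chunku{X_k}{1}{N}) = N^{-1} \weightfunc(Y_{k-1}) f(Y_{k-1}) + N^{-1} \sum_{j \ne i} \weightfunc(X_k^j) f(X_k^j)$, with conditional mean $N^{-1} \weightfunc(Y_{k-1}) f(Y_{k-1}) + (1 - 1/N) \proposal(\weightfunc f)$ and conditional variance $(N - 1) N^{-2} \PVar_\proposal(\weightfunc f)$. The key point is that the conditional mean does not depend on $i$, so the between-index term in the total variance vanishes, leaving exactly $(N - 1) N^{-2} \PVar_\proposal(\weightfunc f) = (N - 1) N^{-2} \proposal(\{ \weightfunc f - \proposal(\weightfunc f) \}^2)$ (both sides possibly $+\infty$ if $\weightfunc f$ is not square-integrable under $\proposal$, which does not occur in the applications of the lemma, where \Cref{ass:boundedness:weights} holds).

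There is no genuinely hard step here; the only points requiring a little care — hence the closest thing to an obstacle — are the measure-theoretic justification that in (ii) and (iii) conditioning on $Y_{k-1}$ is equivalent to conditioning on the full $\sigma$-field generated by $(Y_j, \chunku{X_j}{1}{N})_{j \leq k - 1}$ (immediate from the fact that $\partopopN$ in \eqref{eq:joint-kernel} takes only $y_{k-1}$ as an argument), and the observation that pinning one coordinate of the candidate pool to $y$ while leaving the other $N - 1$ i.i.d.\ is precisely what makes the mean identities in (i)--(ii) and the variance identity in (iii) collapse to their stated closed forms.
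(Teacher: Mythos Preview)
Your proposal is correct; the paper does not actually supply a proof of \Cref{lem:key-relation}, treating it as an elementary auxiliary fact, and your direct computation from the explicit forms of $\partopopN$ and $\utargetkern$ together with the Markov structure \eqref{eq:joint-kernel} is exactly the natural argument. The one observation worth making explicit, which you have, is that in (iii) the conditional mean given the pinned index $i$ does not depend on $i$, so the between-index term in the law of total variance vanishes.
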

We now have all the elements that allow us to determine the first important result of this work, namely the bias and the MSE of the estimator $\targetkern f (\chunku{X_k}{1}{N})$ of $\target(f)$.
\begin{proof}[Proof of \Cref{theo:bias-i-SIR-recycling}]
We establish the bias bound in (i) and postpone the proof of the bounds on the MSE and the covariance in (ii) and (iii) to the supplement. Define the measure $\xi(A) = \xijoint(A \times \Xset)$, $A \in \Xsigma$, and the kernel $\MKisir = \partopopN \targetkern$ on $\Xset \times \Xsigma$. Consequently, $\MKisir f (Y_{k - 1})= \PE_\xijoint [\targetkern f(\chunku{X_k}{1}{N}) \mid Y_{k-1}]$ and $\partopopN \utargetkern f(Y_{k - 1}) = \PE_\xijoint [\utargetkern f(\chunku{X_k}{1}{N}) \mid Y_{k-1}]$, $\PP_\xijoint$-\as\ Since $\sequence{Y}[k][\nset]$ is, under $\PP_\xijoint$, a Markov chain with initial distribution $\xi$ and Markov kernel $\MKisir$ (see \eqref{eq:definition-MKisir}), it holds that
$$
\PE_\xijoint[\targetkern f (\chunku{X_k}{1}{N})] = \PE_\xijoint[\MKisir f (Y_{k-1})]
= \PE_{\xijoint}[\CPE[\xijoint]{\MKisir f (Y_{k-1})}{Y_{0}}]  = \xi \MKisir^{k-1} \MKisir f.
$$
Consequently, the proof is concluded by establishing that for every $k \in \nsets$,
\begin{equation}
\label{eq:bound_phi_n}
    \left| \xi \MKisir^{k-1} \MKisir f - \target(f) \right| \leq \biasconst \driftconstisir^{k-1} (N-1)^{-1} \eqsp.
\end{equation}
On the other hand, since by \Cref{thm:unbiasedness}, $\pi (\MKisir f) = \target(f)$,
we may use \Cref{theo:isir_uniform_ergodicity} to obtain the bound
\[
|\xi \MKisir^{k-1} \MKisir f - \target(f)| = |\xi \MKisir^{k-1} \MKisir f - \target(\MKisir f)| \leq \driftconstisir^{k-1} \osc(\MKisir f).
\]
Finally, we establish \eqref{eq:bound_phi_n} by bounding $\osc(\MKisir f)$. Note that
\begin{equation}
\osc(\MKisir f) \leq 2 \left\| \MKisir f - \partopopN \utargetkern f/(\partopopN \utargetkern \1_\Xset) \right\|_\infty + 2 \left \| \partopopN \utargetkern f/(\partopopN \utargetkern \1_\Xset) - \target(f) \right \|_\infty,
\end{equation}
where, for every $y \in \Xset$, using \Cref{theo:bias-estimator-general},
\begin{multline}
\label{eq:bias-lemma}
\left| \MKisir f(y) -  {\partopopN \utargetkern f(y)}/{\partopopN \utargetkern \1_\Xset(y)} \right| \\
\leq \frac{1}{2} \{\partopopN \utargetkern \1_\Xset(y)\}^{-2}\{\partopopN[ \{ \utargetkern f - \partopopN \utargetkern f(y) \}^2](y) + 3 \partopopN[\{ \utargetkern \indi{\Xset} - \partopopN \utargetkern \indi{\Xset}(y) \}^2](y)\}\eqsp.
\end{multline}
Now, since $\partopopN \utargetkern \1_\Xset(y) \geq (1-1/N) \proposal(\weightfunc)$, we get, using \Cref{lem:key-relation}, 
\begin{align}
\left\| \MKisir f -  \frac{\partopopN \utargetkern f}{\partopopN \utargetkern \1_\Xset} \right\|_\infty
&\leq (2(N-1))^{-1} \{ \proposal(\weightfunc)\}^{-2} \{\proposal( \{ \weightfunc f - \proposal(\weightfunc f) \}^2) + 3 \proposal( \{ \weightfunc - \proposal(\weightfunc)\}^2) \} \\
\label{eq:bound-1}
&\leq 2 (N-1)^{-1} \proposal(\weightfunc^2) / (\proposal(\weightfunc))^2.
\end{align}
On the other hand, using the elementary inequality $a/b - c/d= a(d-b)/bd + (a - c)/d$, we get, as $\target(f) = \proposal(\weightfunc f)/ \proposal(\weightfunc)$,
{\small
\begin{equation}
\frac{\partopopN \utargetkern f(y)}{\partopopN \utargetkern \1_\Xset(y)} - \target(f)
= (1/N) \frac{\partopopN \utargetkern f(y)}{\partopopN \utargetkern \1_\Xset(y)}\{1 - \weightfunc(y)/ \proposal(\weightfunc) \} + (1/N) \{ \weightfunc(y) f(y) - \proposal(\weightfunc f) \}/ \proposal(\weightfunc)  \eqsp.
\end{equation}}
Finally, the bound \eqref{eq:bound_phi_n} is established by noting that
\begin{equation}
\label{eq:bound-2}
\left \| \partopopN \utargetkern f/(\partopopN \utargetkern \1_\Xset) - \target(f) \right \|_\infty \leq 2 N^{-1} \{1 + \weightfunc(y) / \proposal(\weightfunc) \} \leq 2 N^{-1} (1 + \bound)\eqsp.
\end{equation}
\vspace{-12pt}
\end{proof}
\vspace{-5pt}
\vspace{-3pt}
\subsection{Related works}
\label{sec:related:works}
The first use of the IS method, then as a variance reduction technique, dates back to the '50s; see \cite{hesterberg1995weighted,kroese2012monte} and the references therein. Today, the renewed interest in IS parallels the flurry of activity in the probabilistic ML community and its ever-increasing computational demands; thus, it is impossible to fully present the literature. We therefore limit ourselves to describing results that have inspired our work, and refer the readers to the recent reviews \cite{agapiou2017importance,elvira2021advances} for additional references.

There is clearly a plethora of modern ML applications where the standard {\SNIS} estimator may be substantially improved using the {\SUISIR} method. To mention just a selection of examples, \SNIS\ plays a key role for a robust off-policy selection strategy BY \cite{kuzborskij2021confident} (extending \cite{swaminathan2015self,metelli2018policy}),  Bayesian problems (see, \eg, \cite[Section~3]{agapiou2017importance}), Bayesian transfer learning \cite{karbalayghareh2018optimal,maddouri2022robust}, variational autoencoders \cite{chen2022fast}, inference of energy-based models \cite{lawson2019energy}, patch-based image restoration \cite{niknejad2019external} and many more.

Despite  long-standing interest in \SNIS, there are only few theoretical results. For example, \cite[Theorem~2.1]{agapiou2017importance} provides bounds on the bias and variance of {\SNIS}, results that we extend to {\SUISIR} in \Cref{theo:bias-i-SIR-recycling}. Moreover, \cite[Proposition~D.3]{metelli2018policy} provides a suboptimal variance bound based on a bound for the second-order moment. This result can be compared to the sophisticated sub-Gaussian concentration bound for {\SUISIR} obtained in \Cref{theo:bias-mse-rolling} (a result that can be obtained for {\SNIS} using the same proof mechanism; see \Cref{sec:hpd-SNIS}). Finally, \cite{kuzborskij2021confident} obtains a semi-empirical sub-Gaussian concentration inequality using the Efron-Stein estimate of variance and the Harris inequality.

As an MCMC sampling method, the {\ISIR} algorithm that has been applied successfully in many situations. It was recently used---under the alternative name \emph{conditional importance sampling}---in \cite{naesseth2020markovian} for \emph{Markovian score climbing}. In the same work, it is mentioned that it is possible to ``Rao-Blackwellize'' the gradient of the score using the proposed candidates, which is in line with the recycling argument underpinning the estimator suggested by us, but without theoretical justifications. In its most basic form, the {\ISIR} algorithm appeared in the pioneering work of \cite{tjelmeland2004using}. The same idea played a key role in the development of the \emph{particle Gibbs sampler} \cite{andrieu2010particle,andrieu2018uniform,naesseth2019elements}, which extends {\ISIR} principles to \emph{sequential Monte Carlo methods}. An approach very similar to {\SUISIR} can be taken also in this context; however, casting {\SUISIR} into the framework of particle Gibbs methods is a non-trivial problem which is the subject of ongoing work. 


\vspace{-9pt}
\section{Experimental results}
\label{sec:numerics}
\vspace{-9pt}
In this section we compare numerically the performances of \SUISIR{} and \SNIS{} in three different settings: mixture of Gaussians, Bayesian logistic regression and variational autoencoders (VAE). We leave to the supplementary material (\Cref{subsec:app:toy_problem}) the detailed numerical verification of the bounds established in \Cref{sec:main:results}.

\paragraph{Mixture of Gaussian distributions:}
\label{subsec:toy_gauss}
We start with an example where the target distribution $\target$ is a mixture of two Gaussian distributions of dimension $d=7$, as shown in \Cref{fig:toy_problem_illustration}.
The proposal distribution is a Student distribution with $\nu=3$ degrees of freedom. The test function is $f=\indi{A} - \indi{B}$, where $A$ and $B$ are a $d$-dimensional rectangle intersecting each of the modes of $\target$ (see \Cref{subsec:app:toy_problem} for precise definitions). We verify the positive effect of bootstrap in \Cref{fig:mse_with_bootstrap,fig:bias_with_bootstrap} by computing the bias and the MSE over 1000 chains for $N = 129$ for several $k$. The purple, green, and red curves correspond to a number of bootstrap rounds of $1, 21$, and $201$, respectively. We illustrate the decay of the mean Sliced Wasserstein distance (according to \cite{bonneel:hal-00881872})
with $k$ for different values of $N$ ($N=8$ purple, $N=32$  green, $N=64$  orange, and $N=128$  red) in \Cref{fig:skice_wasserstein}. The decay of the Wassertein distance is directly linked to the mixing time of the \ISIR\ kernel (see \eqref{eq:mixing:rate:time}), and hence allows us to represent the effective mixing time of the chain.
Moreover, we represent the theoretical slopes as dashed lines. This illustrates that the effective value of $\TmixN$ is smaller than its theoretical bound.
The bias and MSE for \SNIS\ with $M=25600$ are shown in black dashed lines.

We compare the bias (\Cref{fig:bias_performance}) and MSE (\Cref{fig:mse_performance}) of \SUISIR{} and \SNIS{} for a fixed budget with a total number of $M=16384$ samples.
We run the experiments $10^6$ times; we compute the bias and MSE over batches of $10^4$ replications using the true value of $\target(f)$ computed above (the boxplots in \Cref{fig:toy_perf} are therefore obtained over $100$ replications).
For the algorithm \SUISIR{}, we used $N \in \{129, 513\}$, $\ki_0 = \kmax - 1$ and $\kmax = M / (N-1)$ bootstrap rounds.
\begin{figure}[h]
    \centering
    \begin{subfigure}{0.28\textwidth}
        \includegraphics[width=\textwidth]{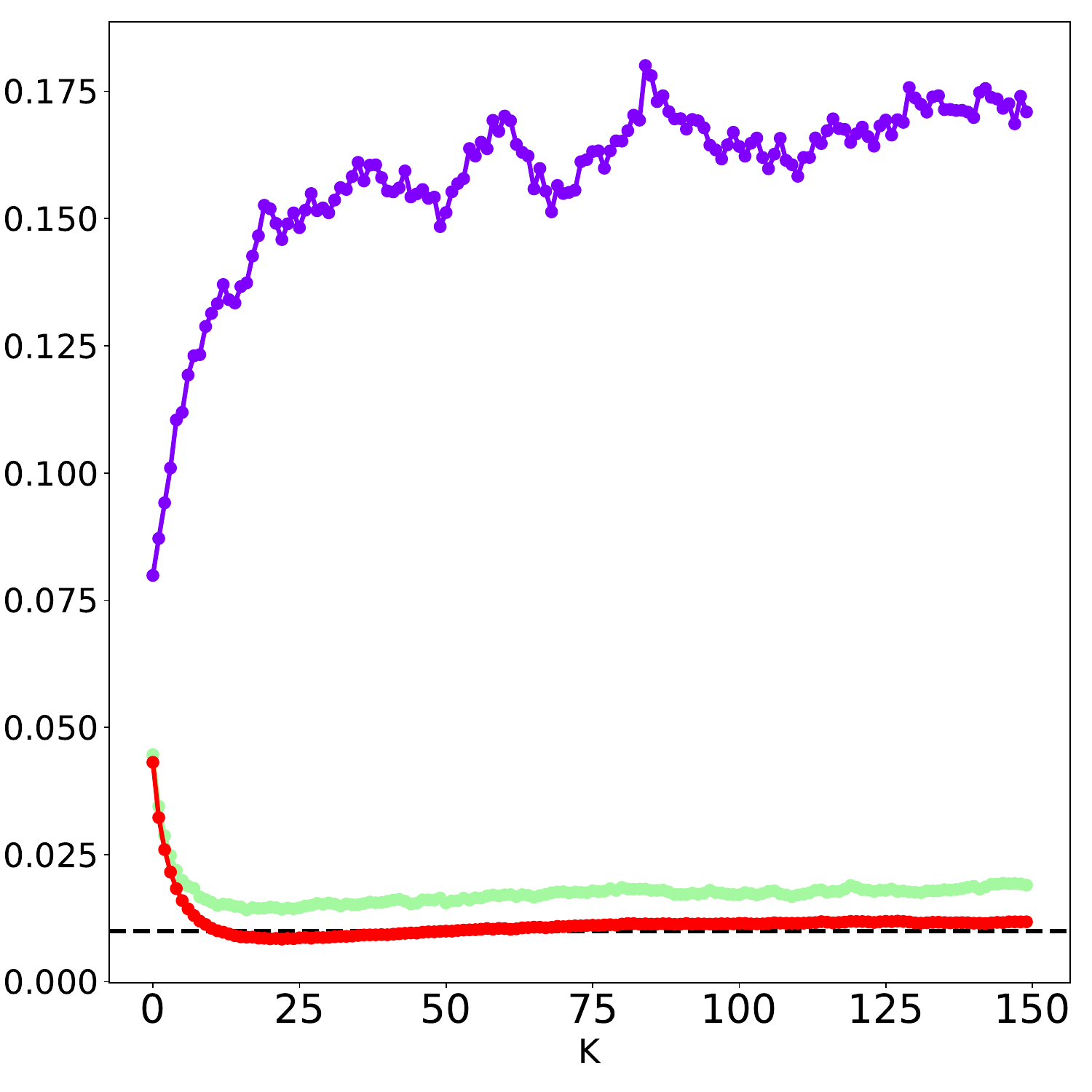}
        \caption{MSE}
        \label{fig:mse_with_bootstrap}
    \end{subfigure}
    \begin{subfigure}{0.28\textwidth}
        \includegraphics[width=\textwidth]{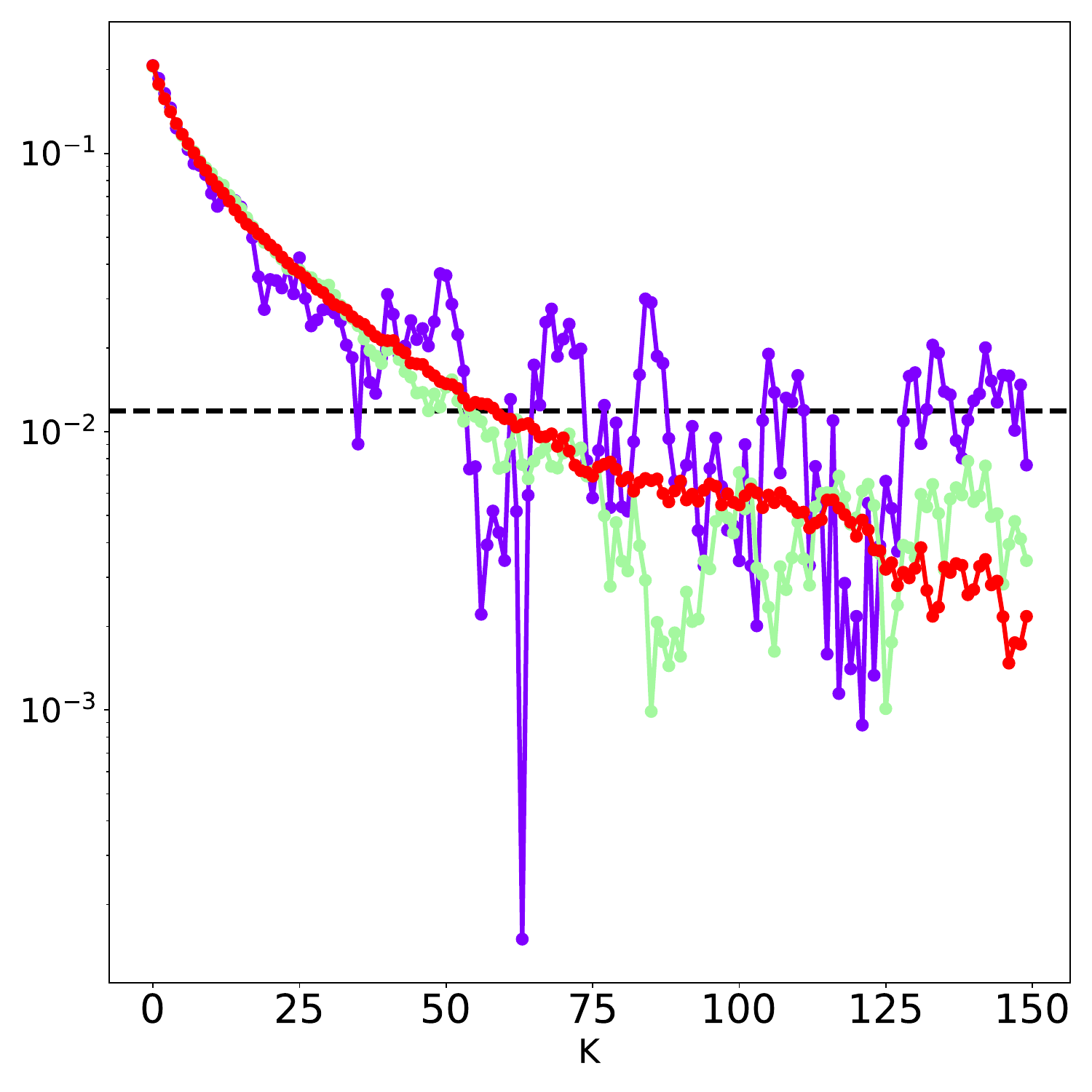}
        \caption{Bias in log scale}
        \label{fig:bias_with_bootstrap}
    \end{subfigure}
    \begin{subfigure}{0.28\textwidth}
        \includegraphics[width=\textwidth]{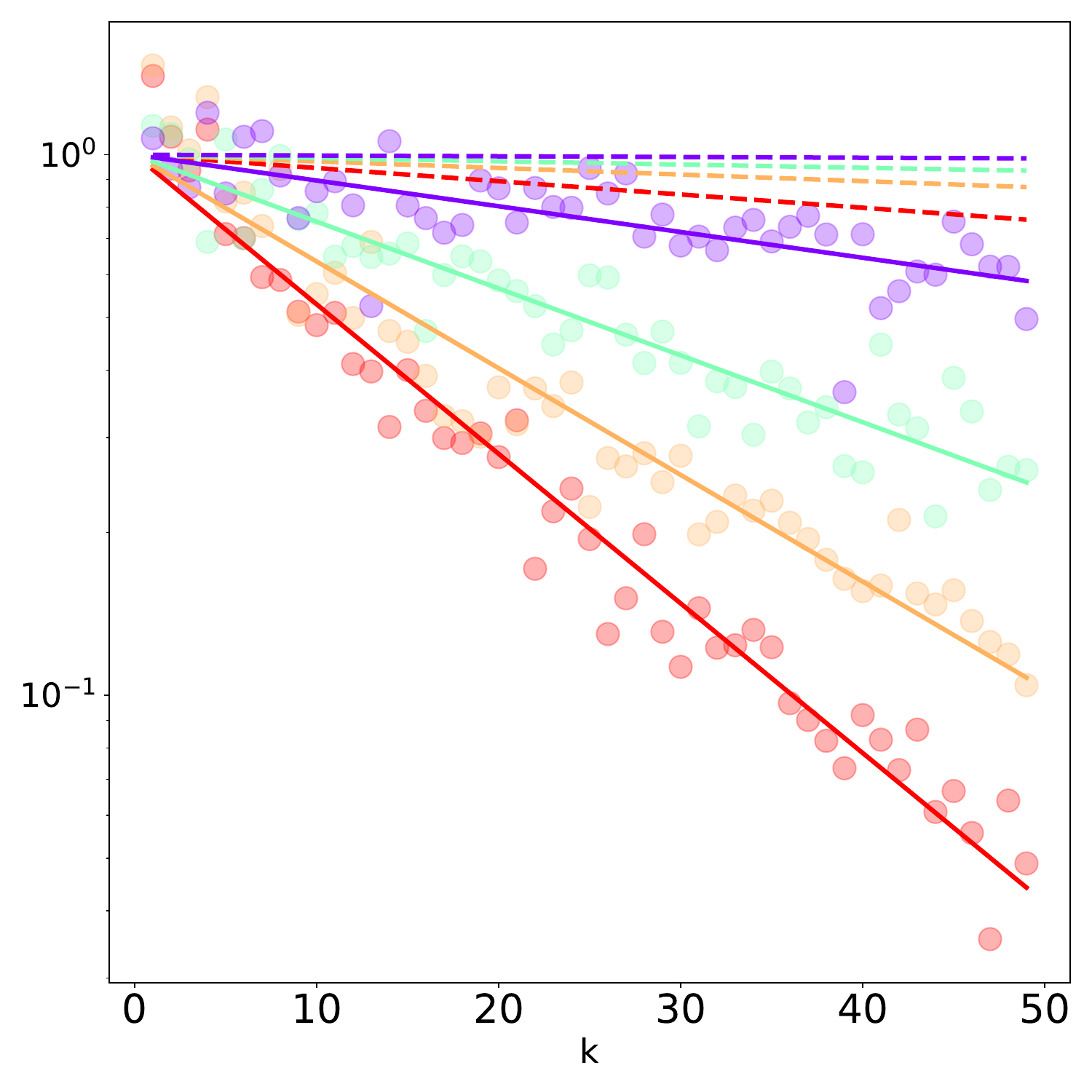}
        \caption{Sliced Wasserstein}
        \label{fig:skice_wasserstein}
    \end{subfigure}
    \caption{}
\end{figure}
\begin{figure}[h]
    \centering
    \begin{subfigure}{0.28\textwidth}
        \includegraphics[width=\textwidth]{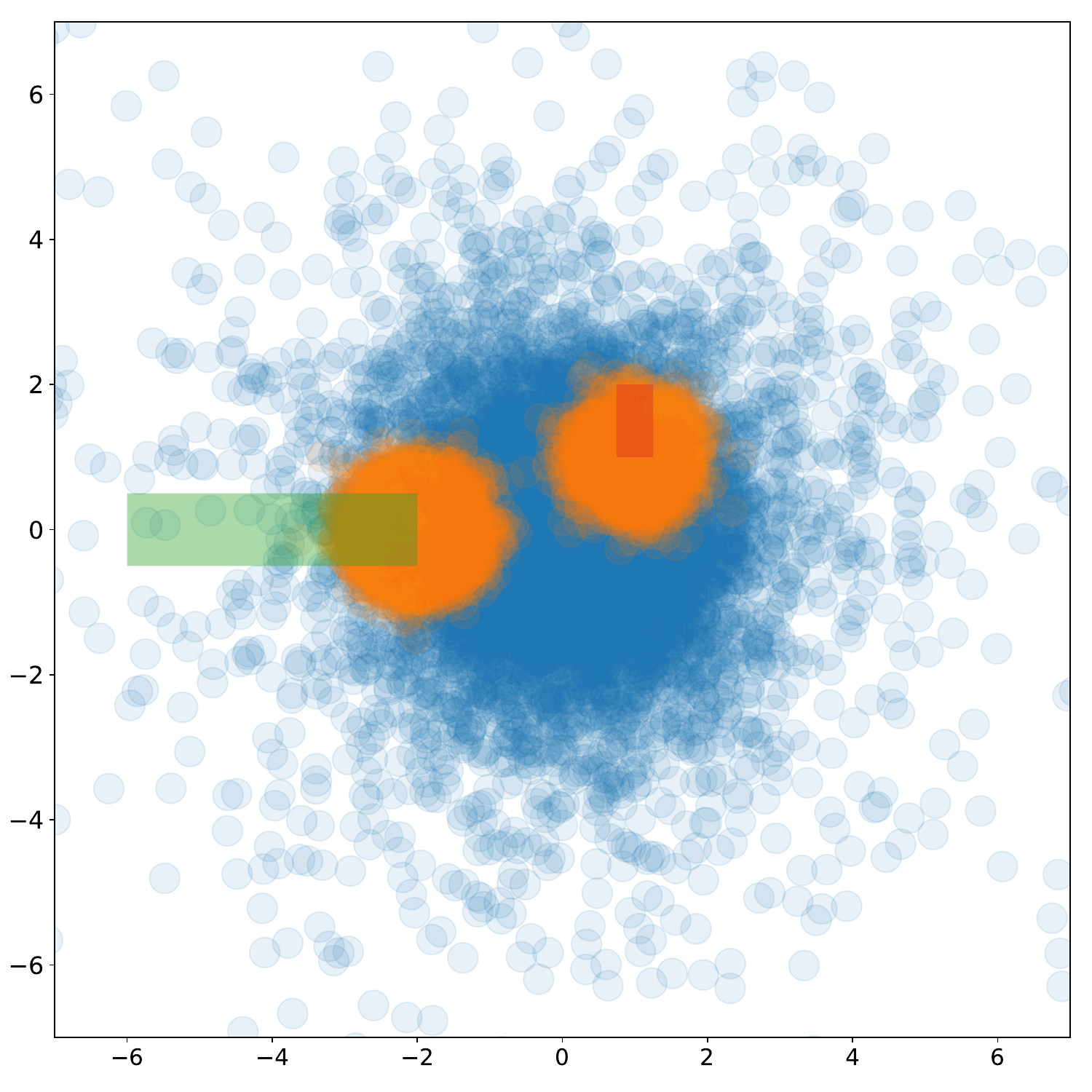}
        \caption{2d projection}
        \label{fig:toy_problem_illustration}
    \end{subfigure}
    \begin{subfigure}{0.28\textwidth}
        \includegraphics[width=\textwidth]{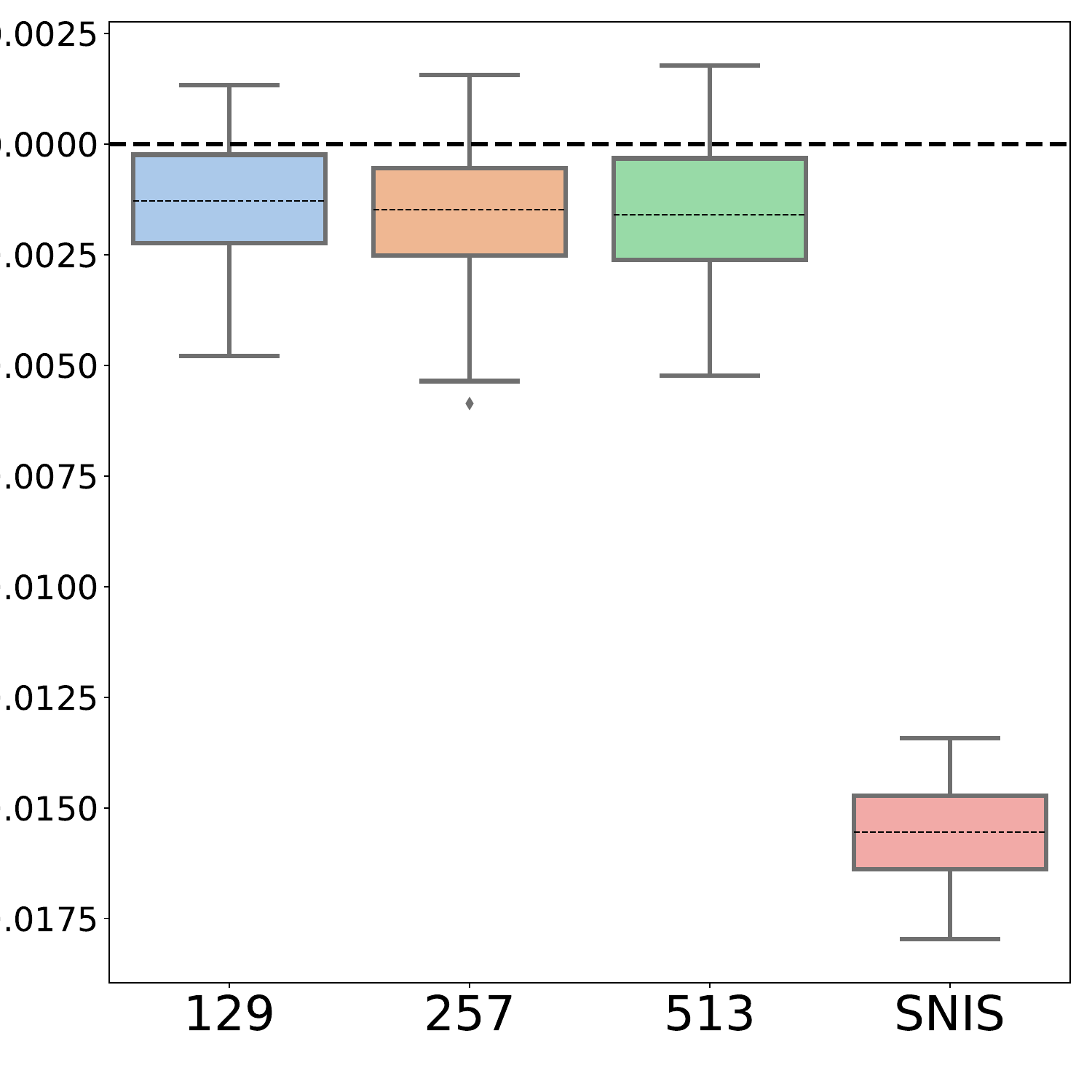}
        \caption{Bias}
        \label{fig:bias_performance}
    \end{subfigure}
    \begin{subfigure}{0.28\textwidth}
        \includegraphics[width=\textwidth]{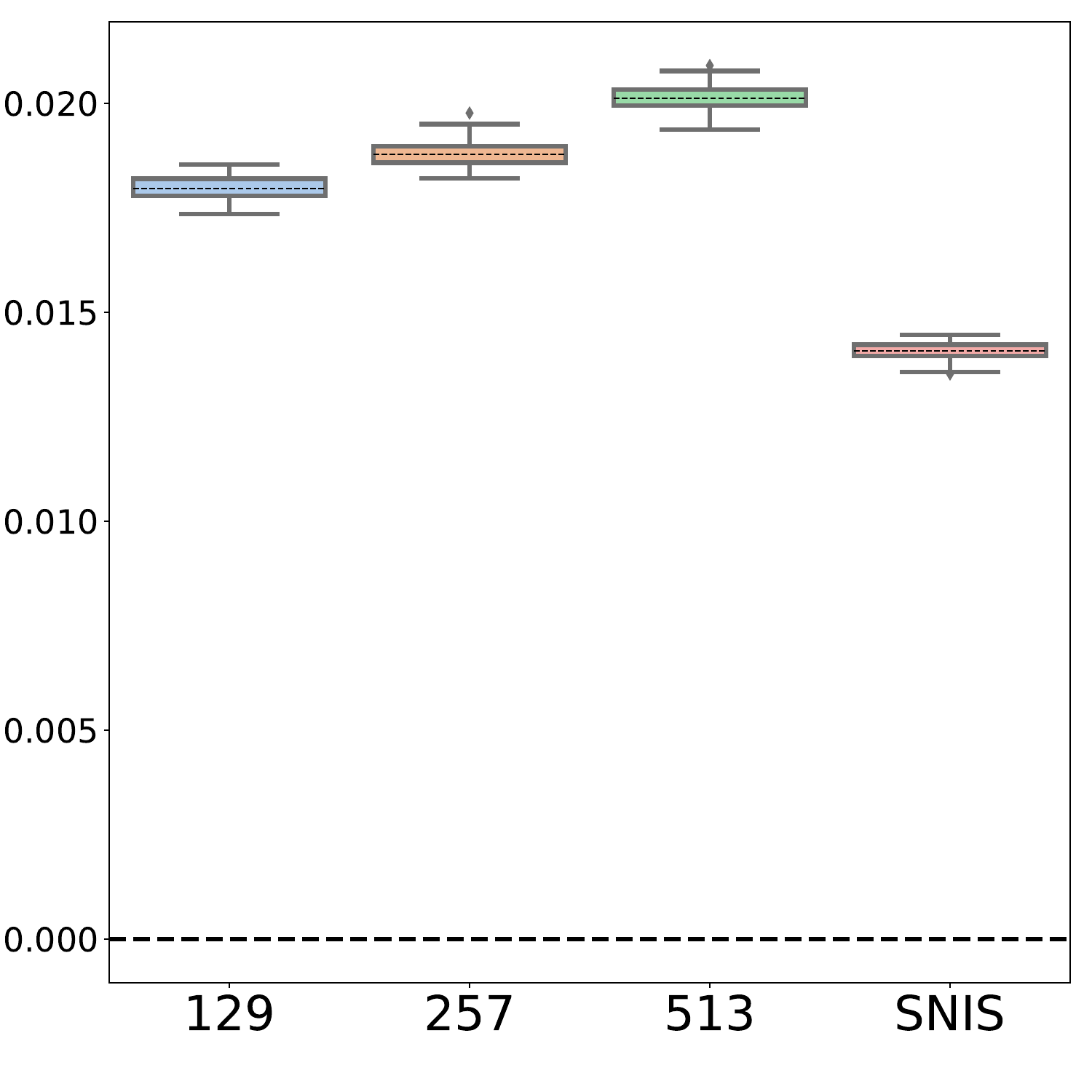}
        \caption{MSE}
        \label{fig:mse_performance}
    \end{subfigure}
    \caption{Comparison between \SNIS{} and \SUISIR{} for the same budget. In each boxplot the dotted line represents the \textbf{mean} value of the samples.}
    \label{fig:toy_perf}
\end{figure}
As can be seen from \Cref{fig:bias_performance}, \SUISIR\ significantly reduces bias (by a factor of almost 10) \wrt\ standard \SNIS\ for both configurations, while MSE increases only slightly (at around $20\%$), as can be seen in \Cref{fig:mse_performance}. The code used for this experiment is available at  \footnote{\href{https://github.com/gabrielvc/br_snis}{https://github.com/gabrielvc/br\_snis}}.
We also show in \Cref{subsec:app:toy_problem} that $\ki_{0} = \lfloor 0.625 \kmax \rfloor$ can lead to about $3$ times less bias \wrt\ standard \SNIS\ while only augmenting the MSE of $10\%$.
We have also compared in \label{subsec:app:toy_problem} \SUISIR\ to zero bias estimators based on \SNIS\ such as \cite{pmlr-v89-middleton19a}, the results are in shown in \Cref{subsec:app:toy_problem}.

%
\paragraph{Bayesian Logistic regression:}
\label{subsec:bayesian_logist}
We consider posterior inference in a Bayesian logistic regression model. Let $\dataset{train} = (\bX_i, y_i)_{i=1}^{T}$ be a dataset, where each $\bX_i \in \rset^d$ is a vector of covariates and $y_i \in \{-1,1\}$ is a binary response. Let $p(y_i \mid \bX_i;\theta) = \{1 + \exp(- y_i \, \bX_i^\top \theta)\}^{-1}$ be the probability of the $i$th observation at $\theta \in \Theta \subseteq \rset^d$ and $\target_0(\rmd \theta)$ be a prior distribution for $\theta$. The Bayesian posterior is given
\[{\textstyle
\target(\rmd \theta) ={\normconst}^{-1} \pi_0(\rmd \theta) \exp(\loglikelihood_{T}(\theta)), \quad
\loglikelihood_{T}(\theta) = \sum_{i=1}^{T} \ln p(y_i \mid \bX_i; \theta), \quad \normconst= \int \exp(\loglikelihood_{T}(\theta)) \target_0(\rmd \theta)\eqsp.}
\]
For numerical illustration, we use the heart failure clinical records ($d=13$, $T=299$), breast cancer detection ($d=30$, $T=569$), and Covertype ($d=55$, $T=4 \cdot 10^4$) datasets from the UCI machine learning repository.
For Covertype, we use Cover type 1 (Spruce/Fir) and Cover type 2 (Lodgepole Pine) classes to define a binary classification problem.
As a prior, we use a Gaussian distribution $\mathrm{N}(0, \tau^{-2} \Idd)$ with $\tau^2 = 5\cdot 10^{-2}$. The importance distribution $\proposal$ is Gaussian with mean and diagonal covariance learned by variational inference; see \Cref{sec:app:bayesian_log_reg} for details.
The boxplots for bias in \Cref{fig:bias_datasets} were constructed in the same way as those in \Cref{fig:toy_perf}.
\begin{table}[h]
\centering
\begin{tabular}{|c|c|c|c|}
  \hline
       .         & CoverType         & Breast   & Heart           \\  \hline
SNIS, M = 32    & 0.0028 +/- 0.0012 & 0.00011 +/- 6.04e-5 & 0.00023 +/- 7.24e-5  \\  \hline
\SUISIR, M= 32  & 0.0014 +/- 0.0003 & 7.9e-5 +/- 5.5e-5 & 0.00012 +/- 6.7e-5   \\  \hline
SNIS, M = 512   & 0.0026 +/- 0.0017 & 4.3e-5 +/- 3.3e-5 & 7.8e-5 +/- 6.8e-5  \\  \hline
\SUISIR, M= 512 & 0.0013 +/- 0.0003 &   3.5e-5 +/- 2.2e-5 & 4.9e-5 +/- 5.2e-5  \\
\hline
\end{tabular}
\caption{Comparison of the TV distance between the posteriors (Lower is better).}
\end{table}
We compare two test functions, $f(\theta) = \theta$, corresponding to evaluation of the posterior mean, and $f(\theta) = p(y \mid \bX, \theta)$, where $(\bX, y) \in\dataset{test}$. This last function allows us to compute a TV distance for the predictive distribution.
Indeed, in a classification context, one can compute the TV distance between any two predictive distributions $p$ and $\hat{p}$ as
\begin{equation}
{\textstyle\tvdist{\hat{p}}{p}= T^{-1}\sum_{i=1}^T \frac{1}{2}\sum_{j=0}^1 \vert \hat{p}(y=j\mid \bX_i, \dataset{train}) -p(y=j\mid \bX_i, \dataset{train}) \vert\eqsp,}
\end{equation} 
where we compare the predictive distribution $p(y\mid x, \dataset{train})= \int p(y\mid x, \theta) \target(\theta)\rmd \theta$ and $\hat{p}$ is the estimation of this quantity, provided in the experiments by \SNIS{} or \SUISIR.
\begin{figure}
    \begin{subfigure}{0.32\textwidth}
        \includegraphics[width=\textwidth]{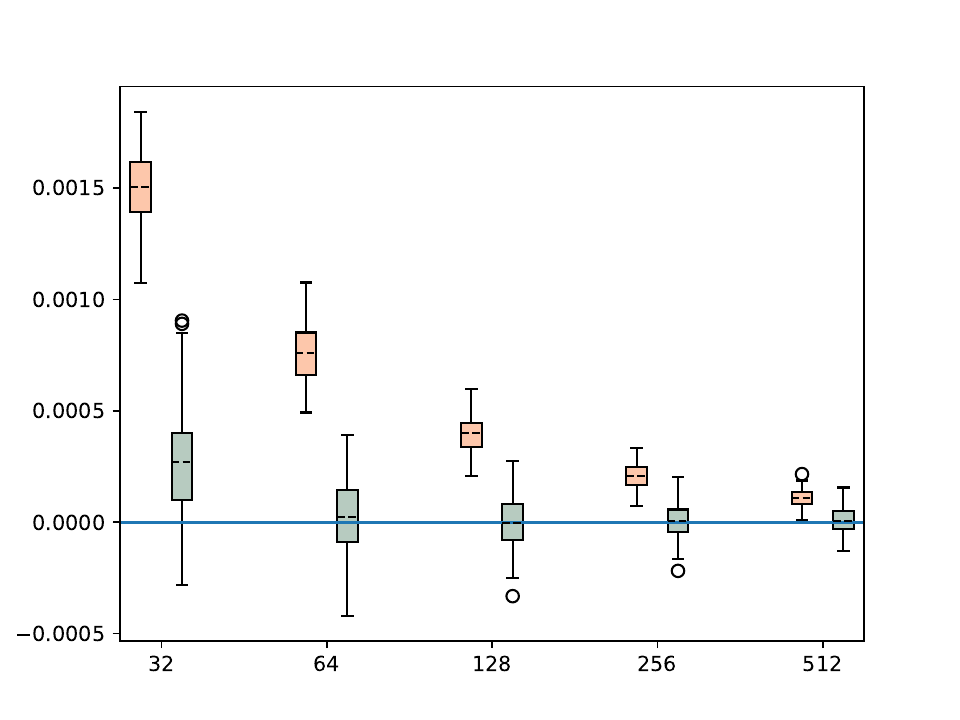}
        \caption{Heart 8}
        \label{fig:heart_component_8}
    \end{subfigure}
    \begin{subfigure}{0.32\textwidth}
        \includegraphics[width=\textwidth]{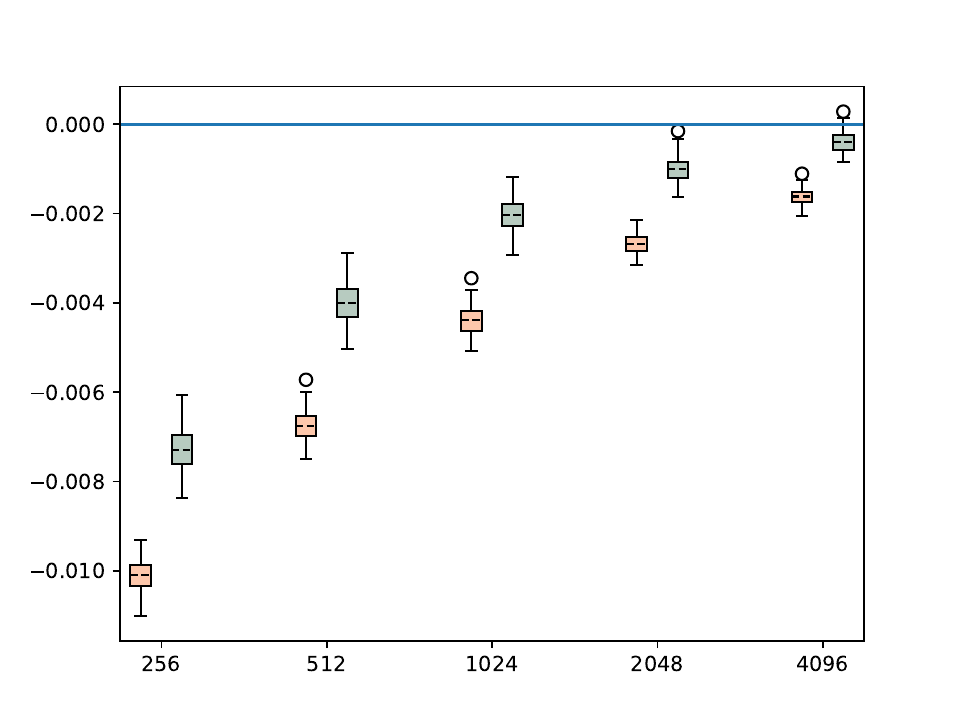}
        \caption{Breast 11}
        \label{fig:breast_component_11}
    \end{subfigure}
    \begin{subfigure}{0.32\textwidth}
        \includegraphics[width=\textwidth]{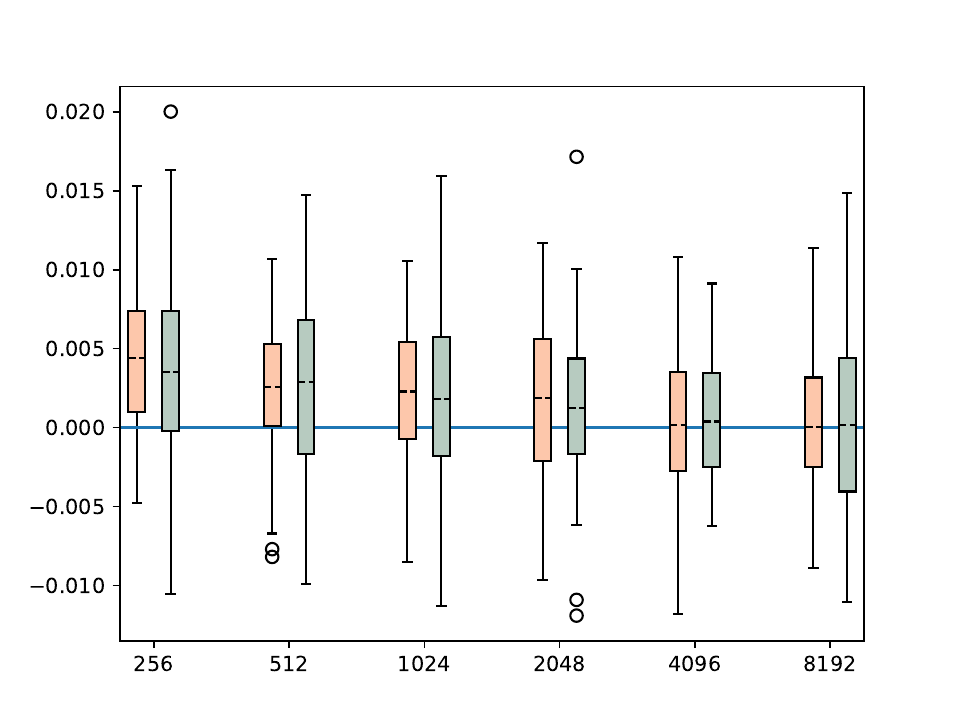}
        \caption{Covertype 6}
        \label{fig:covertype_component_6}
    \end{subfigure}
    \caption{Visualization of the distribution for each datasets. Each boxplot is grouped by budget, the left one represent \SNIS{} and the right represent \SUISIR{}.}
     \label{fig:bias_datasets}
\end{figure}
From \Cref{fig:bias_datasets} we can see that for each dataset we have a constant decrease in bias, while the variance increases only slightly. We plot the bias in other components of $\theta$ and provide further numerical details in \Cref{sec:app:bayesian_log_reg}.
\paragraph{Generative Model: }
\label{sec:vae}
We now extend our methodology to the more complex \emph{deep latent generative models} (DLGM). A DLGM defines a family of probability densities $p_\theta(x)$ over an observation space $x \in \rset^P$ by introducing a latent variable $z\in\rset^d$, defining the joint density function $p_\theta(x, z)$ (with respect to Lebesgue measure) and aiming to find a parameter $\theta$ maximizing the marginal log-likelihood of the model $p_\theta(x) = \int p_\theta(x,z)\rmd z$. Under simple technical assumptions, by Fisher's identity,  
\begin{equation}
\label{eq:fisher_vae}
{\textstyle\nabla_\theta \log p_\theta(x) = \int \nabla_\theta \log p_\theta(x, z) p_\theta(z\mid x) \rmd z,}
\end{equation}
In most cases, the conditional density $p_\theta(z \mid x) = p_\theta(x, z)/p_\theta(x)$ is intractable and can only be sampled. The variational autoencoder \cite{kingma2014stochastic} is based on the introduction of an additional parameter $\phi$ and a family of variational distributions $q_\phi(z \mid x)$. The joint parameters $\{\theta, \phi\}$ are then inferred by maximizing the \emph{evidence lower bound} (ELBO) defined by 
\begin{equation}
  \nonumber
    \elbo(\theta, \phi)
     =\log p_\theta(x) - \mbox{KL}\bigl(q_\phi(\cdot \mid x) ~\|~ p_\theta(\cdot \mid x)\bigr) \leq \log p_\theta(x).
  \end{equation}
This basic setup has been further developed and improved in many directions. Here we consider the \emph{importance weighted autoencoder} (IWAE) \cite{burda2015importance}, which relies on {\SNIS} to design a tighter ELBO on the log-likelihood. The objective of the IWAE is given by
 \begin{equation}
  \label{eq:iwae_elbo}
{\textstyle  \elboiw(\theta, \phi) =
    \int \log\left(M^{-1}\sum_{i=1}^M\weightfunc_{\theta, \phi, x}(z_i)\right) \prod_{\ell =1}^M q_\phi(z_\ell \mid x) \rmd z_i,
}
\end{equation}
where $\weightfunc_{\theta, \phi, x}(z) = {p_\theta(x, z)}/{q_\phi(z \mid x)}$ denote the importance weights. However, writing, following \citep[Eq.~(13)]{burda2015importance},
\[ {\textstyle\nabla_\theta \elboiw(\theta, \phi) = \int  \sum_{i=1}^M \omega^{(i)}_{\theta, \phi, x} \nabla_\theta \log\weightfunc_{\theta, \phi, x}(z_i) \prod_{\ell = 1}^N q_\phi(z_\ell \mid x) \rmd z_\ell,}\]
where $\omega^{(i)}_{\theta, \phi, x}= \weightfunc_{\theta, \phi, x}(z_i) / \sum_{j=1}^M\weightfunc_{\theta, \phi, x}(z_j) $ are normalized importance weights, yields an expression of the gradient that corresponds exactly to the biased {\SNIS} approximation of \eqref{eq:fisher_vae}. Thus, the optimization problem will suffer from bias. 
We hence propose to use {\SUISIR} for learning IWAE. The proposed algorithm proceeds in two steps, which are repeated during the optimization (details are given in \Cref{sec:IWAE})
\begin{itemize}[leftmargin=*,nosep]
\item First, update the parameter $\phi$ as in the IWAE algorithm (using the reparameterization trick and following the methodology of \cite{burda2015importance}) according to $\phi^{(t+1)} = \phi^{(t)} - \eta \nabla_\phi \elboiw(\theta^{(t)}, \phi^{(t)})$.
\item Second, update the parameter $\theta$ by estimating \eqref{eq:fisher_vae} using {\SUISIR} for $\target(z) = p_\theta(x, z)$, $f(z) = \nabla_\theta \log p_\theta(x, z)$ and $\proposal(z) = q_\phi(z \mid x)$.
\end{itemize}
\begin{table}[]
    \centering
\begin{tabular}{|c|c|c|c|c|}
  \hline
  d     & VAE    & IWAE   & BR-IWAE ($k=4$)  & BR-IWAE ($k=8$)\\  \hline
  20    & -115.1 & -91.5 & -90.5           &  -90.4\\ \hline
  50    & -96.4 & -92.9 & -90.1       & -90.1\\
\hline
\end{tabular}
\caption{Comparison of the mean log likelihood over the MNIST validation set (Higher is better).}
\label{table:comparison_log_lik}
\end{table}
We refer to this model as BR-IWAE. As an illustration, we train the model using the binarized MNIST dataset \cite{salakhutdinov2008quantitative}, where $x\in\{0,1\}^{784}$ are binarized digits images in dimension 784.
For both for the encoder $q_\phi$ and the decoder $p_\theta$, we use a simple fully connected architecture with 2 hidden layers (more details are given in \Cref{sec:IWAE}). For a comparison, we estimate the log-likelihood using the VAE, IWAE and BR-IWAE approaches, and the result is reported in  \Cref{table:comparison_log_lik}.
All models are run for 100 epochs, using the Adam optimizer \cite{kingma2014adam} and a learning rate of $10^{-3}$.
 The complete experimental details are given in \Cref{sec:IWAE}.

\section{Conclusion}
In this paper, we have introduced a novel method, {\SUISIR}, which improves over {\SNIS} when it comes to producing close to unbiased estimates of expectations taken \wrt\ to distributions known only up to a normalizing constant, a ubiquitous problem in machine learning and statistics.
The high performance of {\SUISIR} is supported theoretically by non-asymptotic bias, variance and high-probability bounds. We illustrate our method on various examples, which show the practical advantages of \SUISIR{} over \SNIS{}. 
Finally, \SUISIR\ is naturally adapted to other IS based methods, for example \cite{thin2021neo}, which use a Hamiltonian (gradient-based) transform \cite{neal2011mcmc} as part of the IS proposal. The extension of \SUISIR\ to \cite{thin2021neo} would produce an Hamiltonian based sampler able to recycle all samples, contrarily to other classical Hamiltonian-based methods \cite{neal2011mcmc, hoffman2014no}.
\SUISIR\ can also be extended to Particle Markov chain Monte Carlo methods such as Particle Gibbs with Ancestor sampling \cite{https://doi.org/10.48550/arxiv.1401.0604}.


\clearpage

\bibliography{cmot}

\begin{thebibliography}{10}

\bibitem{agapiou2017importance}
S.~Agapiou, O.~Papaspiliopoulos, D.~Sanz-Alonso, and A.~M. Stuart.
\newblock Importance sampling: Intrinsic dimension and computational cost.
\newblock {\em Statistical Science}, 32(3):405--431, 2017.

\bibitem{aldous1997mixing}
D.~Aldous, L.~Lov{\'a}sz, and P.~Winkler.
\newblock Mixing times for uniformly ergodic markov chains.
\newblock {\em Stochastic Processes and their Applications}, 71(2):165--185,
  1997.

\bibitem{andrieu2016random}
C.~Andrieu.
\newblock On random-and systematic-scan samplers.
\newblock {\em Biometrika}, 103(3):719--726, 2016.

\bibitem{andrieu2010particle}
C.~Andrieu, A.~Doucet, and R.~Holenstein.
\newblock Particle {M}arkov chain {M}onte {C}arlo methods.
\newblock {\em Journal of the Royal Statistical Society: Series B},
  72(3):269--342, 2010.

\bibitem{andrieu2018uniform}
C.~Andrieu, A.~Lee, and M.~Vihola.
\newblock Uniform ergodicity of the iterated conditional smc and geometric
  ergodicity of particle gibbs samplers.
\newblock {\em Bernoulli}, 24(2):842--872, 2018.

\bibitem{blei2017variational}
D.~M. Blei, A.~Kucukelbir, and J.~D. McAuliffe.
\newblock Variational inference: A review for statisticians.
\newblock {\em Journal of the American statistical Association},
  112(518):859--877, 2017.

\bibitem{bonneel:hal-00881872}
N.~Bonneel, J.~Rabin, G.~Peyr{\'e}, and H.~Pfister.
\newblock {Sliced and Radon Wasserstein Barycenters of Measures}.
\newblock {\em {Journal of Mathematical Imaging and Vision}}, 1(51):22--45,
  2015.

\bibitem{burda2015importance}
Y.~Burda, R.~Grosse, and R.~Salakhutdinov.
\newblock Importance weighted autoencoders.
\newblock {\em arXiv preprint arXiv:1509.00519}, 2015.

\bibitem{chen2022fast}
J.~Chen, D.~Lian, B.~Jin, X.~Huang, K.~Zheng, and E.~Chen.
\newblock Fast variational autoencoder with inverted multi-index for
  collaborative filtering.
\newblock In {\em Proceedings of the ACM Web Conference 2022}, pages
  1944--1954, 2022.

\bibitem{douc:moulines:priouret:2018}
R.~Douc, E.~Moulines, P.~Priouret, and P.~Soulier.
\newblock {\em {M}arkov chains}.
\newblock Springer Series in Operations Research and Financial Engineering.
  Springer, Cham, 2018.

\bibitem{elvira2021advances}
V.~Elvira and L.~Martino.
\newblock Advances in importance sampling.
\newblock {\em arXiv preprint arXiv:2102.05407}, 2021.

\bibitem{hesterberg1995weighted}
T.~Hesterberg.
\newblock Weighted average importance sampling and defensive mixture
  distributions.
\newblock {\em Technometrics}, 37(2):185--194, 1995.

\bibitem{hoffman2014no}
M.~D. Hoffman, A.~Gelman, et~al.
\newblock The no-{U}-turn sampler: adaptively setting path lengths in
  {H}amiltonian {M}onte {C}arlo.
\newblock {\em J. Mach. Learn. Res.}, 15(1):1593--1623, 2014.

\bibitem{hsu2019mixing}
D.~Hsu, A.~Kontorovich, D.~A. Levin, Y.~Peres, C.~Szepesv{\'a}ri, and
  G.~Wolfer.
\newblock Mixing time estimation in reversible markov chains from a single
  sample path.
\newblock {\em The Annals of Applied Probability}, 29(4):2439--2480, 2019.

\bibitem{kahn1953methods}
H.~Kahn and A.~W. Marshall.
\newblock Methods of reducing sample size in monte carlo computations.
\newblock {\em Journal of the Operations Research Society of America},
  1(5):263--278, 1953.

\bibitem{karbalayghareh2018optimal}
A.~Karbalayghareh, X.~Qian, and E.~R. Dougherty.
\newblock Optimal bayesian transfer learning.
\newblock {\em IEEE Transactions on Signal Processing}, 66(14):3724--3739,
  2018.

\bibitem{kingma2014adam}
D.~P. Kingma and J.~Ba.
\newblock Adam: A method for stochastic optimization.
\newblock In {\em ICLR 2015}, 2015.

\bibitem{kingma2014stochastic}
D.~P. Kingma and M.~Welling.
\newblock Stochastic gradient vb and the variational auto-encoder.
\newblock In {\em Second International Conference on Learning Representations,
  ICLR}, volume~19, page 121, 2014.

\bibitem{kroese2012monte}
D.~P. Kroese and R.~Y. Rubinstein.
\newblock Monte carlo methods.
\newblock {\em Wiley Interdisciplinary Reviews: Computational Statistics},
  4(1):48--58, 2012.

\bibitem{kuzborskij2021confident}
I.~Kuzborskij, C.~Vernade, A.~Gyorgy, and C.~Szepesv{\'a}ri.
\newblock Confident off-policy evaluation and selection through self-normalized
  importance weighting.
\newblock In {\em International Conference on Artificial Intelligence and
  Statistics}, pages 640--648. PMLR, 2021.

\bibitem{lamberti2018double}
R.~Lamberti, Y.~Petetin, F.~Septier, and F.~Desbouvries.
\newblock A double proposal normalized importance sampling estimator.
\newblock In {\em 2018 IEEE Statistical Signal Processing Workshop (SSP)},
  pages 238--242. IEEE, 2018.

\bibitem{lawson2019energy}
J.~Lawson, G.~Tucker, B.~Dai, and R.~Ranganath.
\newblock Energy-inspired models: Learning with sampler-induced distributions.
\newblock {\em Advances in Neural Information Processing Systems}, 32, 2019.

\bibitem{lee2011auxiliary}
A.~Lee.
\newblock {\em On auxiliary variables and many-core architectures in
  computational statistics}.
\newblock PhD thesis, University of Oxford, 2011.

\bibitem{lee2010utility}
A.~Lee, C.~Yau, M.~B. Giles, A.~Doucet, and C.~C. Holmes.
\newblock On the utility of graphics cards to perform massively parallel
  simulation of advanced {M}onte {C}arlo methods.
\newblock {\em Journal of computational and graphical statistics},
  19(4):769--789, 2010.

\bibitem{lindsten2015uniform}
F.~Lindsten, R.~Douc, and E.~Moulines.
\newblock Uniform ergodicity of the particle gibbs sampler.
\newblock {\em Scandinavian Journal of Statistics}, 42(3):775--797, 2015.

\bibitem{https://doi.org/10.48550/arxiv.1401.0604}
F.~Lindsten, M.~I. Jordan, and T.~B. Schön.
\newblock Particle gibbs with ancestor sampling.
\newblock 2014.

\bibitem{liu1994covariance}
J.~S. Liu, W.~H. Wong, and A.~Kong.
\newblock Covariance structure of the gibbs sampler with applications to the
  comparisons of estimators and augmentation schemes.
\newblock {\em Biometrika}, 81(1):27--40, 1994.

\bibitem{maddouri2022robust}
O.~Maddouri, X.~Qian, F.~J. Alexander, E.~R. Dougherty, and B.-J. Yoon.
\newblock Robust importance sampling for error estimation in the context of
  optimal bayesian transfer learning.
\newblock {\em Patterns}, page 100428, 2022.

\bibitem{metelli2018policy}
A.~M. Metelli, M.~Papini, F.~Faccio, and M.~Restelli.
\newblock Policy optimization via importance sampling.
\newblock {\em Advances in Neural Information Processing Systems}, 31, 2018.

\bibitem{pmlr-v89-middleton19a}
L.~Middleton, G.~Deligiannidis, A.~Doucet, and P.~E. Jacob.
\newblock Unbiased smoothing using particle independent metropolis-hastings.
\newblock In K.~Chaudhuri and M.~Sugiyama, editors, {\em Proceedings of the
  Twenty-Second International Conference on Artificial Intelligence and
  Statistics}, volume~89 of {\em Proceedings of Machine Learning Research},
  pages 2378--2387. PMLR, 16--18 Apr 2019.

\bibitem{naesseth2020markovian}
C.~Naesseth, F.~Lindsten, and D.~Blei.
\newblock Markovian score climbing: Variational inference with kl (p|| q).
\newblock {\em Advances in Neural Information Processing Systems},
  33:15499--15510, 2020.

\bibitem{naesseth2019elements}
C.~A. Naesseth, F.~Lindsten, T.~B. Sch{\"o}n, et~al.
\newblock Elements of sequential monte carlo.
\newblock {\em Foundations and Trends{\textregistered} in Machine Learning},
  12(3):307--392, 2019.

\bibitem{neal2011mcmc}
R.~M. Neal et~al.
\newblock Mcmc using hamiltonian dynamics.
\newblock {\em Handbook of markov chain monte carlo}, 2(11):2, 2011.

\bibitem{niknejad2019external}
M.~Niknejad, J.~Bioucas-Dias, and M.~A. Figueiredo.
\newblock External patch-based image restoration using importance sampling.
\newblock {\em IEEE Transactions on Image Processing}, 28(9):4460--4470, 2019.

\bibitem{paulin2015concentration}
D.~Paulin.
\newblock Concentration inequalities for markov chains by marton couplings and
  spectral methods.
\newblock {\em Electronic Journal of Probability}, 20:1--32, 2015.

\bibitem{robert:casella:2013}
C.~Robert and G.~Casella.
\newblock {\em {M}onte {C}arlo statistical methods}.
\newblock Springer Science \& Business Media, 2013.

\bibitem{rubin1987comment}
D.~B. Rubin.
\newblock Comment: A noniterative {S}ampling/{I}mportance {R}esampling
  alternative to the data augmentation algorithm for creating a few imputations
  when fractions of missing information are modest: The {SIR} algorithm.
\newblock {\em Journal of the American Statistical Association},
  82(398):542--543, 1987.

\bibitem{salakhutdinov2008quantitative}
R.~Salakhutdinov and I.~Murray.
\newblock On the quantitative analysis of deep belief networks.
\newblock In {\em Proceedings of the 25th international conference on Machine
  learning}, pages 872--879, 2008.

\bibitem{pmlr-v130-schwedes21a}
T.~Schwedes and B.~Calderhead.
\newblock Rao-blackwellised parallel mcmc.
\newblock In A.~Banerjee and K.~Fukumizu, editors, {\em Proceedings of The 24th
  International Conference on Artificial Intelligence and Statistics}, volume
  130 of {\em Proceedings of Machine Learning Research}, pages 3448--3456.
  PMLR, 13--15 Apr 2021.

\bibitem{swaminathan2015self}
A.~Swaminathan and T.~Joachims.
\newblock The self-normalized estimator for counterfactual learning.
\newblock {\em advances in neural information processing systems}, 28, 2015.

\bibitem{thin2021neo}
A.~Thin, Y.~Janati El~Idrissi, S.~Le~Corff, C.~Ollion, E.~Moulines, A.~Doucet,
  A.~Durmus, and C.~X. Robert.
\newblock Neo: Non equilibrium sampling on the orbits of a deterministic
  transform.
\newblock {\em Advances in Neural Information Processing Systems},
  34:17060--17071, 2021.

\bibitem{tjelmeland2004using}
H.~Tjelmeland.
\newblock Using all {M}etropolis--{H}astings proposals to estimate mean values.
\newblock Technical report, 2004.

\bibitem{Tjelmeland2004UsingAM}
H.~Tjelmeland.
\newblock Using all metropolis-hastings proposals to estimate mean values.
\newblock 2004.

\bibitem{vershynin2018high}
R.~Vershynin.
\newblock {\em High-dimensional probability: An introduction with applications
  in data science}, volume~47.
\newblock Cambridge university press, 2018.

\bibitem{wainwright2019high}
M.~J. Wainwright.
\newblock {\em High-dimensional statistics: A non-asymptotic viewpoint},
  volume~48.
\newblock Cambridge University Press, 2019.

\end{thebibliography}

\clearpage
\clearpage
\appendix
\section{Proofs}
\subsection{\isir\ Algorithm}
\label{sec:isir-algorithm}
We analyze a slightly modified version of the {\isir} algorithm, with an extra randomization of the state position. 
The $k$-th iteration is defined as follows. Given a state $Y_k \in \Xset$, 
\begin{enumerate}[label=(\roman*),nosep,leftmargin=*]
\item draw $I_{k+1} \in \{1,\dots,N\}$ uniformly at random and set $X^{I_{k+1}}_{k+1}=Y_k$;
\item draw $\chunkum{X_{k+1}}{1}{N}{I_{k+1}}$ independently from the proposal distribution $\proposal$;  
\item compute, for $i \in \{1, \dots, N\}$, the normalized importance weights 
\[ \omega^i_{N,k+1} = \weightfunc(X^i_{k+1})/\sum_{\ell=1}^N \weightfunc(X^\ell_{k
+1});
\] 
\item select $Y_{k+1}$ from the set $\chunku{X_{k+1}}{1}{N}$ by choosing $X_{k+1}^i$ with probability $\omega^i_{N,k+1}$.
\end{enumerate}
Thus, compared to the simplified \isir\ algorithm given in the introduction, the state is inserted uniformly at random into the list of candidates instead of being inserted at the first position. Of course, this change has no impact as long as we are interested in integrating functions that are permutation invariant with respect to candidates, which is the case throughout our work. Still, this randomization makes the analysis much more transparent.

\subsection{Proof of \Cref{thm:Gibbs:duality}}
\label{sec:proof:thm:Gibbs:duality}
We write
\begin{align}
\eTarget(\rmd (y,\chunk{x}{1}{N}))
&= \frac{1}{N} \sum_{i=1}^N \target(\rmd y) \delta_y(\rmd x^i) \prod_{j\ne i} \proposal(\rmd x^j) \\
&= \frac{1}{N \proposal(\weightfunc)} \sum_{i=1}^N \weightfunc(x^i) \proposal(\rmd x^i) \delta_{x^i}(\rmd y) \prod_{j \ne i} \proposal(\rmd x^j) \\
&= \frac{1}{\proposal(\weightfunc)} \prod_{j=1}^N \proposal(\rmd x^j) \utargetkern \indi{\Xset}(\chunku{x}{1}{N}) \sum_{i=1}^N \frac{\weightfunc(x^i)}{\sum_{\ell=1}^N \weightfunc(x^\ell)} \delta_{x^i}(\rmd y),
\end{align}
where we recognize, and after having recalled definitions \eqref{eq:def:utargetkern} and \eqref{eq:marginal-joint} of $\eTargetmarginx$ and $\targetkern$, respectively, the right-hand side as $\eTargetmarginx(\rmd \chunku{x}{1}{N}) \targetkern( \chunku{x}{1}{N}, \rmd y)$. This completes the proof.

\subsection{Proof of \Cref{thm:unbiasedness}}
\label{sec:proof:unbiasedness}
Using \eqref{eq:marginal-joint} we get
\begin{align}
\int \eTargetmarginx(\rmd \chunku{x}{1}{N}) \targetkern f(\chunku{x}{1}{N})
&=  \int \frac{1}{N \proposal(\weightfunc)} \sum_{\ell=1}^N \weightfunc(x^\ell) \targetkern f(\chunku{x}{1}{N}) \prod_{j=1}^N \proposal(\rmd x^j) \\
&= \frac{1}{N \proposal(\weightfunc)} \int \sum_{i=1}^N \weightfunc(x^i) f(x^i) \prod_{j=1}^N \proposal(\rmd x^j)= \target(f),
\end{align}
and the proof is complete.

\subsection{Proof of \Cref{theo:main-properties-deterministic-scan}}
\begin{proof}
We first check that $\eTarget$ is an invariant distribution for $\MKisirjoint$. For every $A \in \Xsigma^{\varotimes (N + 1)}$, using that $\target$ is the marginal of $\eTarget$ with respect to the state and applying \Cref{thm:Gibbs:duality} yields
\begin{align}
\int \eTarget(\rmd (y, \chunku{x}{1}{N})) \MKisirjoint(y, \chunku{x}{1}{N}, A)
&= \int \target(\rmd y) \iint \partopopN(y,\rmd \chunku{\bar{x}}{1}{N}) \targetkern(\chunku{\bar{x}}{1}{N}, \rmd \bar{y}) \1_A(\bar{y}, \chunku{\bar{x}}{1}{N}) \\
&= \iiint \eTargetmarginx(\rmd \chunk{\bar{x}}{1}{N}) \targetkern(\chunku{\bar{x}}{1}{N}, \rmd y) \targetkern(\chunku{\bar{x}}{1}{N}, \rmd \bar{y}) \1_A(\bar{y}, \chunku{\bar{x}}{1}{N}) \\
&= \eTarget(A),
\end{align}
which establishes invariance.
We now show that $\MKisir$ is reversible with respect to $\target$. For this purpose, let $g$ and $h$ be two nonnegative measurable functions and write, using \Cref{thm:Gibbs:duality} twice,
\begin{align}
\iint \target(\rmd y) \MKisir(y, \rmd \bar{y}) g(y) h(\bar{y}) &= \int \target(\rmd y) \partopopN(y, \rmd \chunku{x}{1}{N}) \targetkern(\chunku{x}{1}{N}, \rmd \bar{y}) g(y) h(\bar{y}) \\
&= \int \eTargetmarginx(\rmd \chunku{x}{1}{N})  \targetkern(\chunku{x}{1}{N}, \rmd y)  \targetkern(\chunku{x}{1}{N}, \rmd \bar{y})  g(y) h(\bar{y}) \\
&= \int  \target(\rmd \bar{y}) \partopopN(\bar{y},\rmd \chunku{x}{1}{N}) \targetkern(\chunku{x}{1}{N}, \rmd y) g(y) h(\bar{y}) \\
&= \iint \target(\rmd \bar{y}) \MKisir(\bar{y}, \rmd y) g(y) h(\bar{y}).
\end{align}
\end{proof}

\subsection{Proof of \Cref{theo:isir_uniform_ergodicity}}
\label{supp:theo:isir_uniform_ergodicity}
  For completeness, we repeat the arguments in \cite{lindsten2015uniform,andrieu2018uniform}. Under \Cref{ass:boundedness:weights},
  we have, for $(x, \msa) \in \Xset \times \Xsigma$,
  \begin{align}
    \MKisir(x, \msa) &=\int \updelta_x(\rmd x^{1}) \sum_{i=1}^N \frac{\weightfunc(x^i)}{\sum_{j=1}^N \weightfunc(x^j)}\indi{\msa}(x^i) \prod_{j=2}^N\proposal(\rmd x^j) \\
    &=  \int \frac{\weightfunc(x)}{\weightfunc(x) + \sum_{j=2}^N \weightfunc(x^j)}\indi{\msa}(x) \prod_{j=2}^N\proposal(\rmd x^j) + \int \sum_{i=2}^N \frac{\weightfunc(x^i)}{\weightfunc(x) + \sum_{j=2}^N \weightfunc(x^j)}\indi{\msa}(x^i) \prod_{j=2}^N\proposal(\rmd x^j)\\
    &\geq \sum_{i=2}^N\int  \frac{\weightfunc(x^i)}{\weightfunc(x) +\weightfunc(x^i)+ \sum_{j=2, j\neq i}^N \weightfunc(x^j)}\indi{\msa}(x^i) \prod_{j=2}^N\proposal(\rmd x^j)\\
    &\geq \sum_{i=2}^N\int\target(\rmd x^i)\indi{\msa}(x^i)\int \frac{\proposal(\weightfunc)}{\weightfunc(x) +\weightfunc(x^i) +\sum_{j=2, j\neq i}^N \weightfunc(x^j)} \prod_{j=2, j\neq i}^N\proposal(\rmd x^j).
  \end{align}
  Finally, since the function $f\colon z\mapsto (z+a)^{-1}$ is convex on $\rset_+$ and $a>0$, we get for $i\in\{2,\dots, N\}$,
  \begin{align}
  \label{eq:lower-bound}
    &\int \frac{\proposal(\weightfunc)}{\weightfunc(x) + \weightfunc(x^i) +\sum_{j=2, j\neq i}^N \weightfunc(x^j)} \prod_{j=2, j\neq i}^N\proposal(\rmd x^j) \\
    &\quad \geq \frac{\proposal(\weightfunc)}{\int \weightfunc(x) + \weightfunc(x^i) +\sum_{j=2, j\neq i}^N \weightfunc(x^j)\prod_{j=2, j\neq i}^N\proposal(\rmd x^j)}  \\
    &\quad \geq \frac{1}{\weightfunc(x) / \proposal(\weightfunc) + \weightfunc(x^i) / \proposal(\weightfunc) + N-2}\geq \frac{1}{2\bound + N-2}.
  \end{align}
  We finally obtain the  inequality
  \begin{equation}
  \label{eq:minorise_condition}
    \MKisir(x, \msa)\geq \target(\msa)\times \frac{N-1}{2\bound + N -2} =  \epssmallisir \target(\msa).
  \end{equation}
  This means that the whole space $\Xset$ is $(1,\epssmallisir \target)$-small (see \cite[Definition~9.3.5]{douc:moulines:priouret:2018}). Since $\MKisir(x, \cdot)$ and $\target$ are probability measures, \eqref{eq:minorise_condition} implies
  \begin{equation}
  \tvnorm{\MKisir(x, \cdot) - \target} = \sup_{\msa \in \Xsigma} |\MKisir(x, \msa) - \target(\msa)| \leq 1-\epssmallisir = \driftconstisir.
  \end{equation}
  Now the statement follows from \cite[Theorem~18.2.4]{douc:moulines:priouret:2018} applied with $m=1$.

\subsection{Proof of \Cref{theo:bias-i-SIR-recycling}}
\begin{proof}[Proof of \eqref{item:theo:bias-i-SIR-recycling:mse}]
Considering the identity $(a+b)^2 \leq (1 + \epsilon^2) a^2 + (1 + \epsilon^{-2})b^2 $, we obtain the decomposition $\{ \targetkern f(\chunku{X_k}{1}{N}) - \target(f)\}^2 \leq (1 + (N-1)^{-1/2}) I + (1 + (N-1)^{1/2}) II$, with
\begin{align}
I &:= \{\targetkern f(\chunku{X_k}{1}{N}) - a_N(Y_{k-1})/b_N(Y_{k-1})\}^2 \\
II &:= \{ a_N(Y_{k-1})/b_N(Y_{k-1}) - \target(f) \}^2 \eqsp.
\end{align}
By using the identity $a/b - c/d = (1/d)[(a/b)(d-b) - (c-a)]$, we obtain 
\begin{multline}
\targetkern f(\chunku{X_k}{1}{N}) - a_N(Y_{k-1})/b_N(Y_{k-1}) = b_N(Y_{k-1})^{-1}[\targetkern f(\chunku{X_k}{1}{N})(b_N(Y_{k-1}) - \utargetkern \indi{\Xset}(\chunku{X_k}{1}{N})) \\
- (a_N(Y_{k-1}) - \utargetkern f(\chunku{X_k}{1}{N}))] \eqsp.
\end{multline}
Therefore, using $(a+b)^2 \leq 2(a^2 + b^2)$ we get
\begin{equation}
    I \leq \frac{2}{b_N(Y_{k-1})^2}[\targetkern f(\chunku{X_k}{1}{N})^2\{\utargetkern \indi{\Xset}(\chunku{X_k}{1}{N}) - b_N(Y_{k-1})\}^2 + \{\utargetkern f(\chunku{X_k}{1}{N})- a_N(Y_{k-1})\}^2].
\end{equation}
Using the fact that $\targetkern f(\chunku{X_k}{1}{N})^2 \leq 1$ $\PP_\xijoint$-a.s.\ and $b_N(y) \geq (N-1)/N \proposal(\weightfunc)$, we get, $\PP_\xijoint$-\as, 
\begin{equation}
    I \leq \frac{2N^2}{(N-1)^2\proposal(w)^2} \left[\{\utargetkern \indi{\Xset}(\chunku{X_k}{1}{N}) - b_N(Y_{k-1})\}^2 + \{\utargetkern f(\chunku{X_k}{1}{N})- a_N(Y_{k-1})\}^2\right]
\end{equation}
Therefore, using Lemma \eqref{lem:key-relation}, 
\begin{align}
    \lefteqn{\PE_{\xijoint}[\{\targetkern f(\chunku{X_k}{1}{N}) - a_N(Y_{k-1})/b_N(Y_{k-1})\}^2]} \\
    &= \PE_{\xijoint}[\CPE[\xijoint]{\{\targetkern f(\chunku{X_k}{1}{N}) - a_N(Y_{k-1})/b_N(Y_{k-1})\}^2}{Y_{k-1}}] \\
    &\leq  \frac{2N^2}{(N-1)^2\proposal(w)^2} \left[(N-1)/N^2 \proposal( \{ \weightfunc  - \proposal(\weightfunc ) \}^2) + (N-1)/N^2 \proposal( \{ \weightfunc f - \proposal(\weightfunc f) \}^2)\right] \\
    &\leq 4 (N-1)^{-1}  \kappa[\target,\proposal] \eqsp.
\end{align}
We consider now $II$. From \eqref{eq:bound-2} we have that $II \leq  4 N^{-2} (1 + \bound)^2$, which 
completes the proof.
\end{proof}
\begin{proof}[Proof of \eqref{item:theo:bias-i-SIR-recycling:cov}]
Note that
\begin{align}
III &:= \PE_{\xijoint}[ \{ \targetkern f(\chunku{X_k}{1}{N}) - \target(f) \}
\{  \targetkern f(\chunku{X_{k+\ell}}{1}{N}) - \target(f) \} ]
\\
&= \PE_{\xijoint} [  \{ \targetkern f(\chunku{X_k}{1}{N}) - \target(f) \} \CPE[\xijoint]{ \targetkern f(\chunku{X_{k+\ell}}{1}{N}) - \target(f)}{Y_{k+\ell-1}}] \eqsp.
\end{align}
As $\CPE[\xijoint]{\targetkern f(\chunku{X_{k+\ell}}{1}{N})}{Y_{k+\ell-1}} = \PhiN{Y_{k+\ell-1}}$
$\PP_\xijoint$-\as, we have that
\begin{align}
III&=\PE_{\xijoint} [\{ \targetkern f(\chunku{X_k}{1}{N}) - \target(f) \} \{\PhiN{Y_{k+\ell-1}} - \target(f)\}] \\
&= \PE_{\xijoint} [\{ \targetkern f(\chunku{X_k}{1}{N}) - \target(f) \} \{\CPE[\xijoint]{\PhiN{Y_{k+\ell-1}}}{Y_{k}} - \target(f)\}] \eqsp.
\end{align}
By the Markov property, we get that
\begin{equation}
\CPE[\xijoint]{\PhiN{Y_{k+\ell-1}}}{Y_{k}} = \MKisir^{\ell-1}\PhiN{Y_{k}} = \delta_{Y_{k}}\MKisir^{\ell-1}\Phi_N \eqsp, \quad \PP_\xijoint\mbox{-a.s.},
\end{equation}
which, combined with \eqref{eq:bound_phi_n}, implies that 
\begin{equation}
    \supnorm{\MKisir^{\ell-1}\Phi_N - \target(f)} \leq \biasconst  (N-1)^{-1} \driftconstisir^{\ell-1}.
\end{equation}
Combining the results above, we finally establish that
\begin{align}
|III|
& \leq \biasconst  (N-1)^{-1} \driftconstisir^{\ell-1}  \PE_{\xijoint} [\{ \targetkern f(\chunku{X_k}{1}{N}) - \target(f) \}^2]^{1/2}  \\
&\leq \biasconst  (N-1)^{-1} \driftconstisir^{\ell-1} \bigl\{\sum\nolimits_{i=0}^{2} \mseconst_{i} (N - 1)^{-1 - i/2}\bigr\}^{1/2}  \eqsp.
\end{align}
\end{proof}
\subsection{Proof of \Cref{theo:bias-mse-rolling}}
\label{sec:proof:theo:bias-mse-rolling}
We first consider the bias term. The bound is given by 
\begin{align}
    \left| \PE_{\xijoint}[\rollingestim[K_0][K][N][f]] - \target(f) \right| \leq  (K - K_0)^{-1}  \sum_{\ell=K_0 + 1}^{K} \left| \PE_{\xijoint}[\targetkern f(\chunku{X_{\ell}}{1}{N})] - \target(f) \right| \\
    \leq (K - K_0)^{-1}(N-1)^{-1} \biasconst  \sum\nolimits_{\ell=K_0 + 1}^{K} \driftconstisir^{\ell-1} \eqsp.
\end{align}
The proof is concluded by noting that
\begin{equation}
    \sum\nolimits_{\ell=K_0 + 1}^{K} \driftconstisir^{l-1} \leq \frac{\driftconstisir^{K_0}}{1 - \driftconstisir} \leq \frac{4 \TmixN (1/4)^{K_0/\TmixN}}{3} \eqsp.
\end{equation}

We now consider the MSE, using the decomposition
\begin{align}
    \PE_{\xijoint}[(\rollingestim[K_0][K][N][f] - \target(f))^2] \leq (K-K_0)^{-2} \left\{ \sum\nolimits_{\ell=K_0 + 1}^{K} \PE_{\xijoint}[\targetkern f(\chunku{X_{\ell}}{1}{N})] - \target(f) \right\}^2 \\
    + 2 \sum\nolimits_{\ell=K_0 + 1}^{K} \sum\nolimits_{j= \ell + 1}^{K} \PE_{\xijoint}[\{\targetkern f(\chunku{X_{\ell}}{1}{N}) - \target(f)\} \{\targetkern f(\chunku{X_j}{1}{N}) - \target(f)\}] \eqsp.
\end{align}
From the MSE bound of \Cref{theo:bias-estimator-general}, we have that
\begin{equation}
    \sum\nolimits_{\ell=K_0 + 1}^{K} \PE_{\xijoint}[\{\targetkern f(\chunku{X_{\ell}}{1}{N}) - \target(f)\}^2] \leq (K - K_0) (N - 1)^{-1}\sum\nolimits_{i=0}^{2} \mseconst_{i} (N - 1)^{-i/2} \eqsp.
\end{equation}
From the covariance bound of \Cref{theo:bias-estimator-general}, we have that
\begin{multline}
    \sum_{\ell=K_0 + 1}^{K} \sum_{j= \ell + 1}^{K} \PE_{\xijoint}[\{\targetkern f(\chunku{X_{\ell}}{1}{N}) - \target(f)\} \{\targetkern f(\chunku{X_j}{1}{N}) - \target(f)\}] \\
    \leq \sum_{i=0}^{2} \covconst_{i} (N - 1)^{-\frac{3 - i/2}{2}} \left[\sum_{\ell=K_0 + 1}^{K} \sum_{j= \ell + 1}^{K} \driftconstisir^{(j - \ell) - 1} \right]  \eqsp.
\end{multline}
As  $\sum\nolimits_{\ell=K_0 + 1}^{K} \sum\nolimits_{j= \ell + 1}^{K} \driftconstisir^{(j - \ell) - 1}  \leq (K - K_0) (4/3) \TmixN$ we have
\begin{align}
  \PE_{\xijoint}[(\rollingestim[K_0][K][N][f] - \target(f))^2] \leq ((K-K_0)(N-1))^{-1}\left[\sum\nolimits_{i=0}^{2} \mseconst_{i} (N - 1)^{-i/2}\right] \\
  + (8/3)(K-K_0)^{-1}(N-1)^{-3/2}\left[\sum_{i=0}^{2} \covconst_{i} (N - 1)^{- i/4}\right] \eqsp.
\end{align}
The proof for the MSE is concluded by noting that $(K-K_0) (N-1)= \burningloss M$ and noting that $\msesnis:= \mseconst_0 M^{-1}$.

The high-probability bound requires more complex derivations. We use the  decomposition
\begin{multline}
\rollingestim[K_0][K][N][f] - \target(f) = (K-K_0)^{-1} \sum_{k=K_0+1}^K \targetkern f(\chunku{X_k}{1}{N}) - \PhiN{Y_{k-1}} \\
+ (K-K_0)^{-1} \sum_{k=K_0+1}^{K-1} \PhiN{Y_{k-1}} - \target(\Phi_N) \eqsp.
\end{multline}
where we have used $\target(f)= \target(\Phi_N)$. Therefore, for any $t \geq 0$, we get that
\begin{multline}
\PP_{\xijoint}( |\rollingestim[K_0][K][N][f] - \target(f)| \geq t) \leq
\PP_{\xijoint}\left((K-K_0)^{-1} \left| \sum\nolimits_{k=K_0+1}^K \targetkern f(\chunku{X_k}{1}{N}) - \PhiN{Y_{k-1}} \right| \geq t/2\right) \\
+ \PP_{\xijoint} \left( (K-K_0)^{-1} \left| \sum\nolimits_{k=K_0+1}^{K-1} \PhiN{Y_{k-1}} - \target(\Phi_N) \right| \geq t/2 \right) \eqsp.
\end{multline}
We will show that for all $t > 0$, and for some absolute constants $\zeta_I$, $\zeta_{II}$,
\begin{align}
&I := \PP_{\xijoint}\left((K-K_0)^{-1} \left| \sum\nolimits_{k=K_0+1}^K \targetkern f(\chunku{X_k}{1}{N}) - \PhiN{Y_{k-1}} \right| \geq t\right) \leq 2 \exp( - t^2 \burningloss M/ (4\zeta_I)) \eqsp, \\
&II := \PP_{\xijoint} \left( (K-K_0)^{-1} \left| \sum\nolimits_{k=K_0+1}^{K-1} \PhiN{Y_{k-1}} - \target(\Phi_N) \right| \geq t  \right) \leq 2 \exp( - t^2 \zeta_{II} (K-K_0) (N-1)^2/ \TmixN) \eqsp.
\end{align}
Consider first $I$. Note first that
\begin{equation}
I= \PE_{\xijoint}\left[\CPP[\xijoint]{(K-K_0)^{-1} \left| \sum\nolimits_{k=K_0+1}^K \targetkern f(\chunku{X_k}{1}{N}) - \PhiN{Y_{k-1}} \right| \geq t}{\chunk{Y}{K_0}{K-1}} \right] \eqsp.
\end{equation}
By \Cref{theo:main-properties-deterministic-scan}, the random elements $\{ \chunku{X_k}{1}{N} \}_{k=K_0+1}^K$ are independent
conditionally to $\{ Y_k \}_{k=K_0}^{K-1}$.
Using the generalized Hoeffding inequality (see \cite[Theorem~2.6.2]{vershynin2018high} or \cite[Proposition~2.1]{wainwright2019high}), we get, with $\Delta_{N,k} :=  \targetkern f(\chunku{X_k}{1}{N}) - \PhiN{Y_{k-1}}$, that
\begin{equation}
\CPP[\xijoint]{(K-K_0)^{-1} \left| \sum\nolimits_{k=K_0+1}^K  \Delta_{N,k} \right| \geq t}{\chunk{Y}{K_0}{K-1}} \leq 2 \exp \left( - \frac{ t^2 (K-K_0)^2}{4\sum_{k=K_0+1}^K \| \Delta_{N,k} \|_{\psi_2,Y_k}^2} \right),
\end{equation}
where $\psi_2(x) = \exp(x^2)-1$ and
\[
\| \Delta_{N,k} \|_{\psi_2,Y_{k-1}} =  \inf\left\{ \lambda > 0 ~:~ \CPE[\xijoint]{\psi_2(|\Delta_{N,k}|/\lambda)}{Y_{k-1}} \leq 1 \right\}.
\]
We have to bound $\|\Delta_{N, k}\|_{\psi_2,Y_{k-1}}$. We use the decomposition
\begin{equation}
    \Delta_{N,k} = \left\{\frac{\utargetkern f(\chunku{X_k}{1}{N})}{\utargetkern \indi{\Xset}(\chunku{X_k}{1}{N})} - \frac{a_N(Y_{k-1})}{b_N(Y_{k-1})}\right\} + \left\{\frac{a_N(Y_{k-1})}{b_N(Y_{k-1})} - \PhiN{Y_{k-1}}\right\} =: \Delta_{N,k}^{I} + \Delta_{N, k}^{II} \eqsp.
\end{equation}
combined with  \Cref{lem:main_bound} with $\phi= \psi_2$ and $\chi= \psi_2$ and \cite[Proposition~2.6.1]{vershynin2018high}.
By \eqref{eq:bound-1} and by \cite[Equation~2.17]{vershynin2018high} we have that
\begin{equation}
    \|\Delta_{N, k}^{II}\|_{\psi_2,Y_{k-1}} \leq 2 \log(2)^{-1/2} (N-1)^{-1} \kappa[\proposal, \target] \eqsp.
\end{equation}
Using \Cref{lem:main_bound} with $\phi=\chi=\psi_2$ and the fact that $b_N(y) \geq (1 - 1/N)\proposal(w)$  we have that
\begin{multline}
    \|\Delta_{N, k}^{I}\|_{\psi_2,Y_{k-1}} \leq \frac{2}{(1 - 1/N)\proposal(w)}\\ \times
    \left[\|\utargetkern f(\chunku{X_k}{1}{N}) - a_N(Y_{k-1})\|_{\psi_2,Y_{k-1}} + 2\|\utargetkern \indi{\Xset}(\chunku{X_k}{1}{N}) - b_N(Y_{k-1})\|_{\psi_2,Y_{k-1}}\right] \eqsp.
\end{multline}
Using \cite[Proposition~2.6.1, Eq~2.17]{vershynin2018high}, we get that, $\PP_{\xijoint}$-\as,
\begin{align}
\| \utargetkern f(\chunku{X_{k-1}}{1}{N}) - a_N(Y_{k-1}) \|^2_{\psi_2,Y_{k-1}}
&\leq  64\rme/\log{2} N^{-1} \| \weightfunc(X_k^1) f(X_k^{1}) - \CPE[\xijoint]{\weightfunc(X_k^1) f(X_k^{1})}{Y_{k-1}} \|^2_{\psi_2,Y_{k-1}}   \eqsp, \\
&\leq 256\rme/\log{(2)}^2 N^{-1} \supnorm{\weightfunc}^2.
\end{align}
The same bound applies to $\| \utargetkern \indi{\Xset}(\chunku{X_k}{1}{N}) - b_N(Y_{k-1}) \|^2_{\psi_2,Y_{k-1}}$ and we can write
\begin{equation}
    \|\Delta_{N, k}^{I}\|_{\psi_2,Y_{k-1}} \leq 96 e^{1/2}\log(2)^{-1}(N-1)^{-1/2} \bound \eqsp.
\end{equation}
We can now conclude by writing
\begin{align}
 \|\Delta_{N, k}\|_{\psi_2,Y_{k-1}}^2 & \leq 2 (\|\Delta_{N, k}^I\|_{\psi_2,Y_{k-1}}^2 + \|\Delta_{N, k}^{II}\|_{\psi_2,Y_{k-1}}^2)\\
 & \leq (N-1)^{-1}(\zeta_{I, 1} \bound^2  + \zeta_{I, 2} \kappa[\lambda, \pi]^2(N-1)^{-1}) \eqsp,
\end{align}
where $\zeta_{I, 1} = 18432 \rme\log(2)^{-2}$ and $\zeta_{I, 2} = 8\log(2)^{-1}$ are absolute constants, which imply
\begin{equation}
\|\Delta_{N, k}\|_{\psi_2,Y_{k-1}}^2 \leq \zeta_{I} (N-1)^{-1}\eqsp,
\end{equation}
with $\zeta_{I}= 1.1\cdot 10^5 \bound ^2$. This finally concludes $I \leq 2 \exp(-t^2 \burningloss M / 4 \zeta_{I})$.

Consider now $II$. We use \Cref{lem:bounded_differences_norms_markovian} with $g_i=\PhiN{Y_{K_0 + i -1}} - \target(\Phi_N)$.
As $\supnorm{g_i} \leq \osc(\Phi_N) \leq (N-1)^{-1}\biasconst$, we obtain
\begin{equation}
    II \leq 2 \exp\left(-t^2 \zeta_{II}(K - K_0)(N-1)^2 / \TmixN \right)
\end{equation}
where $\zeta_{II} = \frac{2}{(3\biasconst)^2}$.
Finally, we obtain
\begin{align}
    \PP_{\xijoint}( |\rollingestim[K_0][K][N][f] - \target(f)| \geq t) \leq 2 \exp(-t^2 \burningloss M / 4 \zeta_{I}) [1 + \exp(-t^2 \burningloss M \{\zeta_{II}(N-1) / \TmixN - (4 \zeta_{I})^{-1}\})].
\end{align}
We conclude by noting that, for any $\delta \in (0, 1)$, if $(N-1) \geq \TmixN (4 \zeta_{I} \zeta_{II})^{-1}$, we have that
\begin{align}
    \PP_{\xijoint}(|\rollingestim[K_0][K][N][f] - \target(f)| \geq t) \leq 4 \exp(-t^2\burningloss M / 4 \zeta_{I}) \leq \delta
\end{align}
for all $t \geq 2 \zeta_I^{1/2} (\burningloss M)^{-1/2} \log(4 / \delta)^{1/2}$. This concludes the proof with $\hpdconstant = 2 \zeta_I^{1/2}$.

\subsection{High probability inequality for SNIS}
\label{sec:hpd-SNIS}

\begin{theorem}
    Assume that $\bound = \supnorm{\weightfunc} / \proposal(\weightfunc) < \infty$. For any bounded measurable function $f$ on $(\Xset,\Xsigma)$ such that $\supnorm{f} \leq 1$, for any $M > 1$
and for any $\delta$ in $(0, 1)$,
\begin{equation}
    |\estsnis{f} - \target(f) | \leq 12 \bound [M\log(2)]^{-1/2}  \log(2/\delta)^{1/2}
\end{equation}
with probability larger than $1 - \delta$.
\end{theorem}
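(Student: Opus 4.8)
The plan is to mirror the ratio--decomposition arguments behind \Cref{theo:bias-i-SIR-recycling} and \Cref{theo:bias-mse-rolling}, but applied to the plain i.i.d.\ self-normalized estimator. Writing $S_M := M^{-1} \sum_{i=1}^M \weightfunc(X^i)(f(X^i) - \target(f))$ and $T_M := M^{-1} \sum_{i=1}^M \weightfunc(X^i)$, where $\chunku{X}{1}{M}$ are i.i.d.\ from $\proposal$, one has $\estsnis{f} - \target(f) = S_M / T_M$ with $T_M > 0$ a.s.\ and $\PE[\weightfunc(X^1)(f(X^1) - \target(f))] = \proposal(\weightfunc f) - \target(f)\proposal(\weightfunc) = 0$; thus $S_M$ is a normalized sum of i.i.d.\ \emph{centred} variables whose absolute value is at most $2\supnorm{\weightfunc}$. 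Since $\supnorm{f} \le 1$ forces $|\estsnis{f}| \le 1$, and hence $|\estsnis{f} - \target(f)| \le 2$, it suffices to prove the bound for accuracy levels $t \in (0,2)$, the event being empty for $t \ge 2$. This reduction is not cosmetic: it is what will keep the a.s.\ range of the summands below free of $t$.

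Next, for each sign $\epsilon \in \{-1,+1\}$, the identity $\{\epsilon S_M \ge t T_M\} = \{M^{-1}\sum_{i=1}^M W_i^{\epsilon} \ge 0\}$ holds with $W_i^{\epsilon} := \epsilon\,\weightfunc(X^i)(f(X^i) - \target(f)) - t\,\weightfunc(X^i)$, which are i.i.d.\ with $\PE[W_1^{\epsilon}] = -t\proposal(\weightfunc)$ and, by a short range computation, $|W_i^{\epsilon} - \PE[W_1^{\epsilon}]| \le (2+t)\supnorm{\weightfunc} \le 4\supnorm{\weightfunc}$ a.s.\ (the factor $2\supnorm{\weightfunc}$, rather than $4\supnorm{\weightfunc}$, for the first term uses that $f(X^i)-\target(f)$ ranges over an interval of length $2$ containing $0$). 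Following the proof mechanism of \Cref{theo:bias-mse-rolling}, I would then invoke \cite[Eq.~2.17]{vershynin2018high} to get $\|W_1^{\epsilon} - \PE[W_1^{\epsilon}]\|_{\psi_2} \le 4\supnorm{\weightfunc}/\sqrt{\log 2}$, the generalized Hoeffding inequality \cite[Theorem~2.6.2, Prop.~2.6.1]{vershynin2018high} to obtain $\bigl\| M^{-1}\sum_i (W_i^{\epsilon} - \PE[W_1^{\epsilon}]) \bigr\|_{\psi_2} \le c_0\,\supnorm{\weightfunc}\,(M\log 2)^{-1/2}$ for an absolute constant $c_0$, and finally a one-sided sub-Gaussian tail bound to conclude, using $\proposal(\weightfunc) = \supnorm{\weightfunc}/\bound$,
\[
\PP\bigl(\epsilon S_M \ge t T_M\bigr) = \PP\Bigl( M^{-1}\!\sum_i (W_i^{\epsilon} - \PE[W_1^{\epsilon}]) \ge t\proposal(\weightfunc)\Bigr) \le \exp\!\bigl(- c_1\, t^2 M \log 2 / \bound^2\bigr).
\]

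To assemble the bound, note that since $t T_M > 0$ the two events $\{S_M \ge t T_M\}$ and $\{-S_M \ge t T_M\}$ are disjoint with union exactly $\{|\estsnis{f} - \target(f)| \ge t\}$; hence $\PP(|\estsnis{f} - \target(f)| \ge t) \le 2\exp(-c_1 t^2 M \log 2 / \bound^2)$. Equating the right-hand side to $\delta$ and solving for $t$ gives a bound of the announced form $\hpdconstant\,\bound\,(M\log 2)^{-1/2}(\log(2/\delta))^{1/2}$. The step I expect to be the real work is the numerical bookkeeping of the absolute constants $c_0,c_1$ arising from \cite[Theorem~2.6.2, Prop.~2.6.1]{vershynin2018high} and of the $\sqrt{\log 2}$ factors, so as to certify $\hpdconstant \le 12$; the only other point requiring care is to use the \emph{one-sided} tail per sign (a single exponential, rather than the two-sided $\psi_2$-tail), which is precisely what produces $\log(2/\delta)$ instead of $\log(4/\delta)$ after the union over the two disjoint sign events.
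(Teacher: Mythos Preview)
Your approach is correct but takes a genuinely different route from the paper. The paper does not linearize the ratio event; it applies its general Orlicz-norm ratio bound (\Cref{lem:main_bound}) with $\phi=\chi=\psi_2$ to obtain directly $\|\estsnis{f}-\target(f)\|_{\psi_2}\le 2\proposal(\weightfunc)^{-1}\bigl(\|A_M-a\|_{\psi_2}+2\|B_M-b\|_{\psi_2}\bigr)$, where $A_M=M^{-1}\sum_i \weightfunc(X^i)f(X^i)$ and $B_M=M^{-1}\sum_i \weightfunc(X^i)$, bounds each $\psi_2$-norm via \cite[Eq.~2.17]{vershynin2018high}, and converts the resulting $\psi_2$-norm of the ratio into a tail bound by Markov's inequality (which is what produces the factor $2$ in $\log(2/\delta)$). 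Your linearization $\{\epsilon S_M\ge t T_M\}=\{\sum_i W_i^\epsilon\ge 0\}$ is more elementary, since it bypasses the ratio lemma entirely and reduces everything to a single i.i.d.\ sum of bounded variables. The one weakness in your plan is self-inflicted: routing the concentration step through \cite[Theorem~2.6.2, Prop.~2.6.1]{vershynin2018high} brings in exactly the unspecified absolute constants you worry about. If instead you apply the classical one-sided Hoeffding inequality to $W_i^\epsilon$ (range length at most $8\supnorm{\weightfunc}$ for $t<2$), you get $\PP(\epsilon S_M\ge tT_M)\le\exp\bigl(-Mt^2/(32\bound^2)\bigr)$ and hence, after the union over the two disjoint sign events, $|\estsnis{f}-\target(f)|\le 4\sqrt{2}\,\bound\,M^{-1/2}\sqrt{\log(2/\delta)}$, which is already sharper than the stated bound since $4\sqrt{2}<12/\sqrt{\log 2}$. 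In short, the paper's route reuses the ratio-statistics machinery developed for the {\SUISIR} analysis and delivers the constant $12$ through \Cref{lem:main_bound}, whereas yours is a standalone argument that---once you replace the Vershynin detour by classical Hoeffding---yields an even smaller constant with no bookkeeping left to do.
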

\begin{proof}
    Let $A_M = M^{-1}\sum\nolimits_{i=1}^{M}\weightfunc(X^i)f(X^i)$, $B_M = M^{-1}\sum\nolimits_{i=1}^{M}\weightfunc(X^i)$, $a=\PE(A_M) = \proposal(\weightfunc f)$ and $b=\PE(B_M) = \proposal(\weightfunc)$.
    Note that $\estsnis{f} = A_M / B_M$ and $\target(f) = a / b$. By \Cref{lem:main_bound} with $\phi = \chi = \exp(x^2) - 1$, we have that:
    \begin{equation}
        \|\estsnis{f} - \target(f)\|_{\psi_2} \leq 2 \proposal(\weightfunc)^{-1}[\|A_M - a\|_{\psi_2} + 2 \|B_M - b\|_{\psi_2}] \eqsp .
    \end{equation}
    Using \cite[Eq~2.17]{vershynin2018high}, we get that, $\PP_{\xijoint}$-\as
    \begin{equation}
        \|A_M - a\|_{\psi_2}^2 \leq M^{-1} \|\weightfunc(X^i) f(X^i) - \proposal(\weightfunc f)\|_{\psi_2}^2 \leq 4(\log(2)M)^{-1} \supnorm{\weightfunc}^2 \eqsp .
    \end{equation}
    In the same way, $\|B_M - b\|_{\psi_2}^2 \leq 4(\log(2)M)^{-1} \supnorm{\weightfunc}^2$. Therefore, we have that:
    \begin{equation}
        \|\estsnis{f} - \target(f)\|_{\psi_2}^2 \leq (12 \bound)^2 (\log(2)M)^{-1} \eqsp.
    \end{equation}
    Combining it with \cite[Proposition~2.5.2]{vershynin2018high}, we have that:
    \begin{equation}
        \PP(|\estsnis{f} - \target(f) | \geq t) \leq 2 \exp(-t^2\zeta^{\operatorname{snis}}M)
    \end{equation}
    where $\zeta^{\operatorname{snis}} = \log(2)(12 \bound)^{-2}$.
    The high probability inequality follows directly.
\end{proof}

\section{Moments and high-probability bounds for ratio statistics}
\label{sec:ratio-statistics}
Let $(U_i,V_i)_{i\in\{1,\dots,n\}}$ be (possibly dependent) random variables defined on some probability space $(\Omega,\mcf,\PP)$. Assume that $U_i\geq 0$, $\PP$-\as. Moreover, let $\hat{A}_n = n^{-1} \sum_{i=1}^n U_i V_i$, $\hat{B}_n = n^{-1} \sum_{i=1}^n U_i$, $\hat{R}_n = \hat{A}_n / \hat{B}_n$ and $a = \Exp{\hat{A}_n}$, $b = \Exp{\hat{B}_n}$, and $r = a/b$.

A continuous, even, convex function $\phi:\rset^+ \to \ccint{0,+\infty}$ is a Young function if $\phi$ is monotonically increasing for $x > 0$, $\phi(0)=0$,  $\lim_{x \to \infty} \phi(x)/x= \infty$, and $\lim_{x \to 0^+} \phi(x)/x= 0$. We denote by $\phi^*$ the Fenchel-Legendre conjugate of $\phi$.
Let $X$ be a random variable and $\phi$ be a Young function. Then the \emph{Orlicz norm} of $X$ is
\begin{equation}
\|X\|_\phi = \inf\left\{ \lambda > 0 ~:~ \Exp{\phi\left(|X|/\lambda\right)} \leq 1 \right\}\,,
\end{equation}
with the convention that $\inf \emptyset= \infty$. The Orlicz space of random variable $\mathcal{L}_{\phi}(\Omega)$
is the family of equivalence classes of random variables $X$ such that $\| X \|_\phi < \infty$. $\mathcal{L}_{\phi}(\Omega)$ is a Banach space. If $\phi_p(x)= |x|^p$ for $p \geq 1$, then $\mathcal{L}_{\phi}(\Omega)= \mathcal{L}^p(\Omega)$ and we denote $\| \cdot \|_p= \| \cdot \|_{\phi_p}$. If $X \in \mathcal{L}_\phi(\Omega)$, then, for any $x > 0$
\[
\PP(|X|\geq x) \leq 1/\phi(x/\|X\|_{\phi}) \eqsp, \quad \text{and} \quad \| \indiacc{|X| \geq x} \|_\phi = 1 / \phi^{-1}(1 / \PP(|X| \geq x)) \eqsp.
\]

\begin{lemma}\label{lem:main_bound}
Let $\phi,\chi$ be  Young functions.  If $\max_i\|V_i\|_\infty\leq c|r|$, then
\begin{equation}
\|\hat{R}_n - r\|_\phi/|r| \leq 2\|\hat{A}_n - a\|_\phi/b + 2\|\hat{B}_n - b\|_\phi/b + c/\{(\phi^{-1}\circ\chi)(b / 2\|(\hat{B}_n - b)_-\|_\chi)\}\,.
\end{equation}
\end{lemma}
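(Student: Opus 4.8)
## Proof Plan for Lemma~\ref{lem:main_bound}

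The plan is to control the deviation $\hat{R}_n - r$ by splitting according to whether the denominator $\hat{B}_n$ is close to its mean $b$ or not. First I would write the algebraic identity
\[
\hat{R}_n - r = \frac{\hat{A}_n}{\hat{B}_n} - \frac{a}{b} = \frac{1}{b}\Bigl[(\hat{A}_n - a) - \hat{R}_n(\hat{B}_n - b)\Bigr] + \Bigl(\frac{1}{\hat{B}_n} - \frac{1}{b}\Bigr)(\hat{A}_n - a) \cdot 0 + \dots
\]
more carefully, using $a/b - c/d = (a-c)/d - (c/d)(b-d)/b$ type manipulations, I would obtain a clean decomposition of the form
\[
\hat{R}_n - r = \frac{\hat{A}_n - a}{b} - r\,\frac{\hat{B}_n - b}{b} - (\hat{R}_n - r)\,\frac{\hat{B}_n - b}{b}.
\]
On the event where $|\hat{B}_n - b| \le b/2$ this lets me absorb the last term (a factor of $1/2$), yielding a bound on $\indiacc{|\hat B_n - b|\le b/2}\,|\hat R_n - r|$ in terms of $|\hat A_n - a|/b$ and $|r||\hat B_n - b|/b$. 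Taking Orlicz norms and using the triangle inequality and $\|\cdot\|_\phi$-homogeneity gives the first two terms $2\|\hat A_n-a\|_\phi/b + 2|r|\,\|\hat B_n-b\|_\phi/b$, after dividing through by $|r|$.

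Next I would handle the complementary event $\{|\hat B_n - b| > b/2\}$, which is where the third term on the right-hand side comes from. Here I cannot use the contraction argument, so instead I bound $|\hat R_n - r|$ crudely: since $U_i \ge 0$ we have $\hat R_n = \sum U_i V_i / \sum U_i$, a weighted average of the $V_i$, so $|\hat R_n| \le \max_i \|V_i\|_\infty \le c|r|$, whence $|\hat R_n - r| \le (c+1)|r|$ — actually I'd want the sharper $|\hat R_n - r| \le c|r| + |r|$, but to match the stated bound I would use that on this event $|\hat R_n - r|\,\indiacc{|\hat B_n - b|>b/2} \le c|r|\,\indiacc{(\hat B_n - b)_- > b/2}$ plus the contribution already counted, by noting $\hat B_n > 0$ forces the relevant one-sided deviation $(\hat B_n-b)_- = \max(b-\hat B_n,0)$ to exceed $b/2$. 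Then $\|\indiacc{(\hat B_n-b)_- > b/2}\|_\chi = 1/\chi^{-1}(1/\PP((\hat B_n-b)_- > b/2))$, and by Markov's inequality in Orlicz norm, $\PP((\hat B_n-b)_- > b/2) \le 1/\chi(b/(2\|(\hat B_n-b)_-\|_\chi))$, giving $\|\indiacc{\cdots}\|_\chi \le 1/\chi^{-1}(\chi(b/(2\|(\hat B_n-b)_-\|_\chi))) $; composing with $\phi^{-1}$ appropriately (since the indicator is $\{0,1\}$-valued, $\|\cdot\|_\phi$ and $\|\cdot\|_\chi$ of it are related through $\phi^{-1}$ and $\chi^{-1}$ of the same tail probability) produces exactly the term $c/\{(\phi^{-1}\circ\chi)(b/(2\|(\hat B_n-b)_-\|_\chi))\}$ after dividing by $|r|$.

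Finally I would combine the two events: $\|\hat R_n - r\|_\phi \le \|(\hat R_n - r)\indiacc{|\hat B_n-b|\le b/2}\|_\phi + \|(\hat R_n-r)\indiacc{|\hat B_n-b|>b/2}\|_\phi$ — here one must be slightly careful, since the Orlicz norm is not additive over a partition, but for a random variable $X = X\indi{E} + X\indi{E^c}$ one has $\|X\|_\phi \le \|X\indi{E}\|_\phi + \|X\indi{E^c}\|_\phi$ by the triangle inequality — and collect the three contributions. The main obstacle I anticipate is the bookkeeping in the second paragraph: getting the constants to come out as exactly $2$, $2$, and $c$ rather than something larger, and correctly relating $\|\indiacc{A}\|_\phi$ to $\|\indiacc{A}\|_\chi$ via the two-sided bound $\PP(|X|\ge x)\le 1/\phi(x/\|X\|_\phi)$ together with the exact formula $\|\indiacc{A}\|_\phi = 1/\phi^{-1}(1/\PP(A))$. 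The one-sided truncation $(\hat B_n-b)_-$ (rather than $|\hat B_n - b|$) is the trick that makes the $\chi$-term only see the downward deviations of the denominator, which is what the statement requires, and I would make sure the event $\{|\hat B_n - b| > b/2, \hat B_n > 0\}$ indeed collapses to $\{(\hat B_n - b)_- > b/2\}$ up to the upward-deviation part which is already absorbed into the $\hat B_n - b$ term on the good event by monotone extension — this subtlety is the delicate point to get exactly right.
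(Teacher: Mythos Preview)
Your overall strategy---obtain a pointwise bound on $|\hat R_n-r|$ via a good/bad event split, then pass to Orlicz norms using $\|\indiacc{A}\|_\phi = 1/\phi^{-1}(1/\PP(A))$ together with the $\chi$-Markov tail bound---is exactly the paper's. The difference, and the source of the difficulty you yourself flag at the end, is the choice of split. You split on $\{|\hat B_n - b|\le b/2\}$ and on that event use the self-referential identity $\hat R_n - r = (\hat A_n-a)/b - r(\hat B_n-b)/b - (\hat R_n-r)(\hat B_n-b)/b$ to absorb the last term. That absorption genuinely requires $|\hat B_n-b|/b\le 1/2$; it fails when $\hat B_n>3b/2$. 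So the upward-deviation piece of your bad event is covered by neither argument, and the attempted collapse ``$\hat B_n>0$ forces $(\hat B_n-b)_->b/2$'' is simply false (take $\hat B_n=2b$). Carried through honestly, your split produces the indicator $\indiacc{|\hat B_n-b|>b/2}$ and hence a third term involving the two-sided $\|\hat B_n-b\|_\chi$, not the one-sided $\|(\hat B_n-b)_-\|_\chi$ that the lemma claims.

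The paper sidesteps this by splitting instead on $\{\hat B_n>b/2\}$ versus $\{\hat B_n\le b/2\}$. On the good event it does \emph{not} use absorption: it writes $\hat R_n-r=(\hat A_n-a)/\hat B_n + a(1/\hat B_n-1/b)$ and bounds $1/\hat B_n\le 2/b$ directly, which is valid for all $\hat B_n>b/2$, however large. On the bad event it uses $|\hat R_n-r|\le |\hat R_n|+|r|\le \max_i|V_i|+2|r||\hat B_n-b|/b$ (the last step because here $|\hat B_n-b|\ge b/2$). This gives the single pointwise inequality
\[
|\hat R_n-r|\;\le\; \frac{2|\hat A_n-a|}{b}+\frac{2|r|\,|\hat B_n-b|}{b}+\max_i|V_i|\,\indiacc{\hat B_n\le b/2},
\]
and now $\{\hat B_n\le b/2\}=\{(\hat B_n-b)_-\ge b/2\}$ exactly, so the one-sided quantity drops out after taking $\phi$-norms and bounding $\PP(\hat B_n\le b/2)\le 1/\chi(b/(2\|(\hat B_n-b)_-\|_\chi))$. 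Your absorption trick is a valid alternative on its smaller good event and recovers the same constants there, but to get the stated one-sided bound you need the paper's one-sided split and the direct $1/\hat B_n\le 2/b$ estimate.
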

\begin{proof}
We decompose the computation in two parts: first, when $\hat{B}_n > b/2$, we have
\begin{align}
|\hat{R}_n - r| &= \left|\frac{\hat{A}_n - a}{\hat{B}_n} + a\left(\frac{1}{\hat{B}_n} - \frac{1}{b}\right)\right| \leq \frac{|\hat{A}_n - a|}{\hat{B}_n} + \frac{|a||\hat{B}_n - b|}{\hat{B}_n b}\\
&\leq \frac{|\hat{A}_n - a|}{b/2} + \frac{|a||\hat{B}_n - b|}{(b/2)b}= \frac{2|\hat{A}_n - a|}{b} + \frac{2|r||\hat{B}_n - b|}{b}\,.
\end{align}
Then, when $\hat{B}_n \leq b/2$, we have
\begin{align}
|\hat{R}_n - r| &\leq |\hat{R}_n| + |r| \leq |\hat{R}_n| + \frac{2|r||\hat{B}_n - b|}{b} \leq \max_i |V_i| + \frac{2|r||\hat{B}_n - b|}{b}\,,
\end{align}
where the second inequality follows from $|\hat{B}_n - b| \geq b/2$. Combining the two previous inequalities, we have
\begin{equation}
|\hat{R}_n - r| \leq \frac{2|\hat{A}_n - a|}{b} + \frac{2|r||\hat{B}_n - b|}{b} + \max_i |V_i| \indiacc{\hat{B}_n \leq b/2}\,.
\end{equation}
Recall that, if $|X|\leq |Y|$ a.s., then $\|X\|_\phi \leq \|Y\|_\phi$. Hence,  we get that
\begin{align}
\|\hat{R}_n - r\|_\phi &\leq \left\|\frac{2|\hat{A}_n - a|}{b} + \frac{2r|\hat{B}_n - b|}{b} + \max_i |V_i| \indiacc{\hat{B}_n \leq b/2}\right\|_\phi\\
&\leq \left\|\frac{2\hat{A}_n - a}{b}\right\|_\phi + \left\|\frac{2|r||\hat{B}_n - b|}{b}\right\|_\phi + \left\|\max_i |V_i| \indiacc{\hat{B}_n \leq b/2}\right\|_\phi\\
&= \frac{2\|\hat{A}_n - a\|_\phi}{b} + \frac{2|r|\|\hat{B}_n - b\|_\phi}{b} + \|\max_i |V_i| \indiacc{\hat{B}_n \leq b/2}\|_\phi\\
&\leq \frac{2\|\hat{A}_n - a\|_\phi}{b} + \frac{2|r|\|\hat{B}_n - b\|_\phi}{b} + c |r|  \|\indiacc{\hat{B}_n \leq b/2}\|_{\phi}\\
&= \frac{2\|\hat{A}_n - a\|_\phi}{b} + \frac{2|r|\|\hat{B}_n - b\|_\phi}{b} + c |r| /\phi^{-1}\left(1/\PP(\hat{B}_n \leq b/2)\right)\,.
\end{align}
Finally, we obtain the desired result by noting that, for any Young function $\chi$, $\PP(\hat{B}_n \leq b/2) = \PP(|(\hat{B}_n - b)_-|\geq b/2) \leq 1/{\chi(b/2\|(\hat{B}_n - b)_-\|_\chi)}$.
\end{proof}

\begin{theorem}
\label{theo:bound-moment}
Let $p\geq 1$. If $\max_i\|V_i\|_\infty\leq c|r|$, then
\begin{equation}
\frac{\|\hat{R}_n - r\|_p}{|r|} \leq \frac{2\|\hat{A}_n - a\|_p}{b} + \frac{2( 1 + c) \|\hat{B}_n - b\|_p}{b}\,.
\end{equation}
\end{theorem}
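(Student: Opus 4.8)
The plan is to reuse the deterministic, almost-sure inequality that already appears inside the proof of \Cref{lem:main_bound}, and then simply pass to $L^p$ norms. The point is that the $L^p$ version is \emph{easier} than the general Orlicz statement: the indicator of the ``bad'' event $\{\hat B_n \leq b/2\}$ can be dominated directly by a multiple of $|\hat B_n - b|$, so one avoids the extra term built from the conjugate pair $\phi^{-1}\circ\chi$.

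First I would extract from the proof of \Cref{lem:main_bound} the pointwise bound valid on all of $\Omega$: splitting according to whether $\hat B_n > b/2$ or $\hat B_n \leq b/2$ and using $|\hat R_n| \leq \max_i |V_i|$ on the latter event, one obtains, $\PP$-a.s.,
\[
|\hat{R}_n - r| \leq \frac{2|\hat{A}_n - a|}{b} + \frac{2|r|\,|\hat{B}_n - b|}{b} + \Bigl(\max_i |V_i|\Bigr)\,\indiacc{\hat{B}_n \leq b/2}\,.
\]
Next, on $\{\hat B_n \leq b/2\}$ one has $|\hat B_n - b|\geq b/2$, so $\indiacc{\hat{B}_n \leq b/2} \leq (2/b)\,|\hat{B}_n - b|$; plugging this in and invoking the hypothesis $\max_i\|V_i\|_\infty \leq c|r|$ to bound $\max_i|V_i|$ gives the purely pointwise estimate
\[
|\hat{R}_n - r| \leq \frac{2|\hat{A}_n - a|}{b} + \frac{2(1+c)\,|r|\,|\hat{B}_n - b|}{b}\,, \qquad \PP\text{-a.s.}
\]

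Finally, I would take $\|\cdot\|_p$ of both sides, use the triangle inequality and positive homogeneity of the norm, and then divide through by $|r|$ to reach the stated inequality. There is no real obstacle here; the only step that needs a moment's care is the case split on $\{\hat B_n \leq b/2\}$ and the replacement of its indicator by $(2/b)|\hat{B}_n - b|$, which is exactly the device already used for \Cref{lem:main_bound}. (The statement is of course only informative when $\hat A_n - a,\ \hat B_n - b \in \mathcal{L}^p(\Omega)$, which holds automatically in the intended applications where the summands are bounded.)
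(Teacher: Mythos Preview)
Your argument is correct and is essentially the paper's proof unpacked: the paper simply applies \Cref{lem:main_bound} with $\phi(x)=\chi(x)=x^p$, so that $\phi^{-1}\circ\chi=\mathrm{id}$ and the third term of the lemma collapses to $2c\|(\hat B_n-b)_-\|_p/b\le 2c\|\hat B_n-b\|_p/b$, which is precisely your replacement of $\indiacc{\hat B_n\le b/2}$ by $(2/b)|\hat B_n-b|$ at the level of norms. You have reproduced the same pointwise inequality and the same absorption of the indicator term; the only difference is that you work directly with the a.s.\ bound rather than citing the Orlicz lemma.
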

\begin{proof}
Apply \Cref{lem:main_bound} with $\chi(x)= \phi(x) = x^p$.
\end{proof}

\begin{theorem}[Bias of the estimator]
\label{theo:bias-estimator-general}
If $|\hat{A}_n / \hat{B}_n | \leq 1$, then
\begin{equation}
\left| \PE[\hat{R}_n] - r \right| \leq  (2 b^2)^{-1}  \{ 3 \PE[\{ \hat{B}_n - b \}^2] + \PE[ \{ \hat{A}_n - a \}^2] \} \eqsp.
\end{equation}
\end{theorem}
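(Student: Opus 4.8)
The plan is to control the bias through a second-order expansion of the reciprocal of the denominator, arranged so that the term carrying the factor $1/\hat B_n$ --- which has no lower bound under the stated hypotheses --- gets converted into a benign quantity. Write $\Delta_A \eqdef \hat A_n - a$ and $\Delta_B \eqdef \hat B_n - b$, so that $\PE[\Delta_A] = \PE[\Delta_B] = 0$ and $b>0$. The point of departure is the exact algebraic identity
\[
\frac{1}{\hat B_n} = \frac{1}{b} - \frac{\Delta_B}{b^2} + \frac{\Delta_B^2}{b^2 \hat B_n},
\]
which follows by multiplying through by $b^2 \hat B_n$ and substituting $\Delta_B = \hat B_n - b$. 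Combining it with $\hat R_n - r = (\hat A_n - r\hat B_n)/\hat B_n = (\Delta_A - r\Delta_B)/\hat B_n$ produces a three-term decomposition of $\hat R_n - r$.

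Next I would take expectations. The leading term equals $(\Delta_A - r\Delta_B)/b$, which has mean zero, so only the two second-order terms survive:
\[
\PE[\hat R_n] - r = -\frac{1}{b^2}\PE[\Delta_A \Delta_B] + \frac{r}{b^2}\PE[\Delta_B^2] + \frac{1}{b^2}\PE\bigl[(\hat R_n - r)\Delta_B^2\bigr],
\]
where in the last term I have recognized $(\Delta_A - r\Delta_B)/\hat B_n = \hat R_n - r$. The crucial simplification is then to merge the last two terms via $r\Delta_B^2 + (\hat R_n - r)\Delta_B^2 = \hat R_n \Delta_B^2$, which yields the clean identity $\PE[\hat R_n] - r = -b^{-2}\PE[\Delta_A\Delta_B] + b^{-2}\PE[\hat R_n \Delta_B^2]$. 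This is precisely the step that eliminates the potentially large factor $1/\hat B_n$, trading it for the bounded quantity $\hat R_n$.

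Finally I would estimate the two remaining expectations. By Cauchy--Schwarz together with $2|xy|\le x^2+y^2$, one gets $|\PE[\Delta_A\Delta_B]| \le \tfrac12\bigl(\PE[\Delta_A^2] + \PE[\Delta_B^2]\bigr)$; and since the hypothesis $|\hat A_n/\hat B_n|\le 1$ is exactly the statement that $|\hat R_n| \le 1$ almost surely, $|\PE[\hat R_n \Delta_B^2]| \le \PE[\Delta_B^2]$. Adding the two contributions gives $|\PE[\hat R_n] - r| \le (2b^2)^{-1}\bigl\{\PE[\Delta_A^2] + 3\PE[\Delta_B^2]\bigr\}$, which is the claim (the inequality being vacuous if either second moment is infinite). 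The only step demanding any ingenuity is the one in the second paragraph --- the exact reciprocal expansion and the cancellation $r\Delta_B^2 + (\hat R_n - r)\Delta_B^2 = \hat R_n\Delta_B^2$; everything else is a routine computation.
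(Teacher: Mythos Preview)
Your proof is correct and is essentially the same as the paper's. Both arguments arrive at the exact identity $\PE[\hat R_n]-r = b^{-2}\PE[\hat R_n\,\Delta_B^2] - b^{-2}\PE[\Delta_A\Delta_B]$ (the paper writes it directly via a single algebraic identity for $\hat A_n/\hat B_n - a/b$, you obtain it through the reciprocal expansion $1/\hat B_n = 1/b - \Delta_B/b^2 + \Delta_B^2/(b^2\hat B_n)$ followed by the recombination $r\Delta_B^2 + (\hat R_n-r)\Delta_B^2 = \hat R_n\Delta_B^2$), and then both bound the two surviving terms using $|\hat R_n|\le 1$ and the AM--GM inequality on the cross term.
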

\begin{proof}
We use the elementary identity
\begin{equation}
\frac{\hat{A}_n}{\hat{B}_n} - \frac{a}{b}= \frac{\hat{A}_n}{\hat{B}_n} \frac{(b-\hat{B}_n)^2}{b^2} + \frac{(\hat{A}_n-a)(b-\hat{B}_n)}{b^2} + \frac{a(b-\hat{B}_n)}{b^2} + \frac{\hat{A}_n-a}{b} \eqsp,
\end{equation}
which implies that
\[
\PE[\hat{R}_n] - r = \PE\left[\frac{\hat{A}_n}{\hat{B}_n} \frac{(b-\hat{B}_n)^2}{b^2}\right] + \frac{\PE[(\hat{A}_n-a)(b-\hat{B}_n)]}{b^2}
\]
\end{proof}


We conclude with a lemma that gives the concentration of a uniformly ergodic Markov chain. We think that this Lemma is of independent interest, and we give it under general conditions.
\begin{lemma}
\label{lem:bounded_differences_norms_markovian}
Let $(\Zset,\Zsigma)$ be a state-space and $\MKQ$ a Markov kernel on $(\Zset,\Zsigma)$ which is uniformly ergodic with mixing time $\taumix$ and stationary distribution $\invariantQ$. Let $\{g_i\}_{i=1}^n$ be a family of measurable functions from $\Zset$ to  $\rset^{d}$ such that $\supnorm{g}= \max_{i \in\{1,\ldots,n\}}\supnorm{g_i} < \infty$
and $\invariantQ(g_i)= 0$ for any $i \in\{1,\ldots,n\}$.
Then, for any initial probability $\xi$ on $(\Zset,\Zsigma)$, $n \in \nset$, $t \geq 0$, it holds
\begin{equation}
\label{eq:prob_for_norms_markov}
\PP_{\xi}\biggl(\normop{\sum\nolimits_{i=1}^{n}g_i(\State_{i})}\geq t\biggr) \leq 2 \exp\biggl\{-\frac{2t^2}{u_n^{2}}\biggr\}\eqsp, \text{ where } u_n = 3 \supnorm{g} \sqrt{n} \sqrt{\taumix}\eqsp.
\end{equation}
\end{lemma}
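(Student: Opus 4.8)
The plan is a blocking argument reducing \eqref{eq:prob_for_norms_markov} to an essentially independent situation. First dispose of the trivial regime $n\le\taumix$: here $\normop{\sum_{i=1}^{n}g_i(\State_i)}\le n\supnorm{g}\le\sqrt{n\taumix}\,\supnorm{g}=u_n/3$ deterministically, so $\PP_\xi(\normop{\cdot}\ge t)=0$ for $t>u_n/3$, whereas for $t\le u_n/3$ one has $2\exp(-2t^2/u_n^2)\ge 2\exp(-2/9)>1$; thus \eqref{eq:prob_for_norms_markov} holds and we may assume $n>\taumix$. Partition $\{1,\dots,n\}$ into $q=\lceil n/\taumix\rceil$ consecutive blocks $B_1,\dots,B_q$, each of length at most $\taumix$, and put $G_l=\sum_{i\in B_l}g_i(\State_i)$, so $\sum_{i=1}^{n}g_i(\State_i)=\sum_{l=1}^{q}G_l$ with $\normop{G_l}\le\taumix\supnorm{g}$ almost surely.

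The key point is that, while a single block is too short for $\MKQ$ to mix, the \emph{block-indexed} chain — in particular the skeleton $(\State_{\taumix},\State_{2\taumix},\dots)$ — is geometrically ergodic at a \emph{universal} rate: from $\sup_x\tvdist{\MKQ^{\taumix}(x,\cdot)}{\invariantQ}\le 1/4$ and submultiplicativity of the Dobrushin coefficient one gets $\sup_{x,x'}\tvdist{\MKQ^{j\taumix}(x,\cdot)}{\MKQ^{j\taumix}(x',\cdot)}\le 2^{-j}$ for all $j\ge 1$. I would then use the martingale decomposition $\sum_{l=1}^{q}G_l=M_q+A_q$ with $\mcf_l=\sigma(\State_i:i\le\max B_l)$, $M_q=\sum_{l=1}^{q}\{G_l-\CPE[\xi]{G_l}{\mcf_{l-1}}\}$ an $\rset^d$-valued martingale whose increments have norm $\le 2\taumix\supnorm{g}$, and $A_q=\sum_{l=1}^{q}\CPE[\xi]{G_l}{\mcf_{l-1}}$ predictable. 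A dimension-free Hilbert-space Azuma--Hoeffding inequality gives $\PP_\xi(\normop{M_q}\ge s)\le 2\exp(-c\,s^2/(q\taumix^2\supnorm{g}^2))$, and since $q\le 2n/\taumix$ this has the desired form $2\exp(-c'\,s^2/(n\taumix\supnorm{g}^2))$.

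To control $A_q$, note that $\CPE[\xi]{G_l}{\mcf_{l-1}}=h_l(\State_{\max B_{l-1}})$ with $h_l(z)=\sum_{i\in B_l}\MKQ^{\,i-\max B_{l-1}}(z,g_i)$, which satisfies $\supnorm{h_l}\le 2\taumix\supnorm{g}$ and $\invariantQ(h_l)=0$; hence $A_q$ is again a sum of $\invariantQ$-centered bounded functions along the $\taumix$-skeleton, a chain that mixes in $O(1)$ of its own steps. Applying the same split once more to this skeleton and iterating, the contributions form a geometric series of Azuma tails (the per-level range shrinks by the contraction factor $1/2$), whose sum, together with the $O(\taumix\supnorm{g})$ transient bias from the initial law $\xi$ absorbed into the Gaussian tail once $t$ is large enough for the bound to be non-trivial, is again $\le C\exp(-c\,t^2/(n\taumix\supnorm{g}^2))$. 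Splitting $t$ between the $M_q$- and $A_q$-parts, a union bound and constant bookkeeping yield $2\exp(-2t^2/u_n^2)$ with $u_n^2=9\supnorm{g}^2 n\taumix$, the factor $3$ absorbing the slack. The main obstacle is the \emph{linear} (not quadratic) dependence on $\taumix$: a generic McDiarmid/Marton-coupling bound for Markov chains only yields $\taumix^2$, so one must genuinely exploit the universal contraction obtained after grouping $\taumix$ steps into a block, and must keep the accumulated conditional drift $A_q$ — which, estimated crudely, reintroduces a spurious term of order $n\supnorm{g}$ — under control without spoiling this scaling.
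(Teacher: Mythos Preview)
Your approach is workable in outline but much more involved than the paper's, and your stated ``main obstacle'' reflects a misconception that hides the direct path. The paper's proof is essentially two steps. First, the function $\varphi(z_1,\dots,z_n)=\normop{\sum_i g_i(z_i)}$ has bounded differences, so one applies the McDiarmid-type inequality for uniformly ergodic Markov chains of Paulin (\emph{Electron.\ J.\ Probab.}~20 (2015), Corollary~2.10), which already delivers
\[
\PP_{\xi}\Bigl(\normop{\textstyle\sum_i g_i(\State_i)}\ge t\Bigr)\le \exp\Bigl\{-\frac{2\bigl(t-\PE_\xi\normop{\sum_i g_i(\State_i)}\bigr)^2}{9\,n\,\supnorm{g}^2\,\taumix}\Bigr\},\qquad t\ge \PE_\xi\normop{\textstyle\sum_i g_i(\State_i)},
\]
with the \emph{linear} factor $\taumix$ in the denominator. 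Second, one bounds the expectation by a second-moment computation, using the covariance decay $\bigl|\PE_\xi[g_k(\State_k)^\top g_{k+\ell}(\State_{k+\ell})]\bigr|\le \supnorm{g}^2\,4^{-\lceil \ell/\taumix\rceil}$ that follows from $\invariantQ(g_i)=0$ and the definition of $\taumix$; this gives $\PE_\xi\normop{\sum_i g_i(\State_i)}\le 2\supnorm{g}\sqrt{n\taumix}$. Combining and absorbing the centering into the prefactor $2$ yields exactly $u_n=3\supnorm{g}\sqrt{n\taumix}$. Your belief that Marton-coupling/McDiarmid bounds ``only yield $\taumix^2$'' applies to older formulations (Samson; Chazottes \emph{et al.}); Paulin's refinement achieves the linear rate, and that is precisely what the paper invokes.

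On your own route: the blocking plus Doob decomposition plus iteration is a sound strategy---it amounts to the Poisson-equation martingale approximation carried out on the $\taumix$-skeleton---but two points are glossed over. First, ``a geometric series of Azuma tails'' does not literally sum via a union bound: with level-$k$ variance proxy $\sigma_k^2\propto 4^{-k}$ and thresholds $t_k\propto 2^{-k}$ chosen so that $\sum_k t_k=t$, every term contributes the \emph{same} exponent, and the series of tail probabilities diverges. The clean fix is to sum $\psi_2$-Orlicz norms (where the triangle inequality applies) or, equivalently, to solve the Poisson equation once on the skeleton rather than iterate. Second, even after this repair the specific constant $3$ will not emerge: your level-$0$ martingale alone has $q\approx n/\taumix$ increments of size $\le 2\taumix\supnorm{g}$, so Pinelis--Azuma already spends a sub-Gaussian constant comparable to the paper's full budget, and the higher levels add further slack.
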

\begin{proof} The function $\varphi(\state_1,\dots,\state_n) := \norm{\sum_{i=1}^{n}g_i(\state_{i})}$ on $\Zset^n$ satisfies the bounded differences property. Applying \cite[Corollary 2.10]{paulin2015concentration}, we get for $t \geq \PE_{\xi}[\norm{\sum_{i=1}^{n}g_i(\State_{i})}]$,
\begin{align}
\PP_{\xi}\biggl(\norm{\sum\nolimits_{i=1}^{n}g_i(\State_{i})} \geq t\biggr) \leq  \exp\left\{-\frac{2(t-\PE_{\xi}[\norm{\sum_{i=1}^{n}g_i(\State_{i})}])^{2}}{9n\supnorm{g}^2 \taumix }\right\}\eqsp.
\end{align}
It remains to upper bound $\PE_{\xi}[\norm{\sum_{i=1}^{n}g_i(\State_{i})}]$. Note that
\begin{align}
\PE_{\xi}[\norm{\sum\nolimits_{i=1}^{n}g_i(\State_{i})}^{2}] = \sum\nolimits_{i=1}^n  \PE_{\xi}[\norm{g_i(\State_{i})}^{2}] + 2\sum\nolimits_{k=1}^{n-1}\sum\nolimits_{\ell = 1}^{n-k} \PE_{\xi}[g_k(\State_{k})^{\top} g_{k+\ell}(\State_{k+\ell})]
\end{align}
and, using $\invariantQ(g_{k+\ell}) = 0$, we obtain
\begin{align}
\textstyle
 \bigl\vert \PE_{\xi}[g_k(\State_{k})^{\top} g_{k+\ell}(\State_{k+\ell})] \bigr\vert &= \absD{ \int_{\Zset}g_k(z)^{\top}\left(\MKQ^{\ell}g_{k+\ell}(z) - \invariantQ(g_{k+\ell})\right) \xi\MKQ^{k}(\rmd z) } \leq \supnorm{g}^{2} (1/4)^{\lceil \ell/\taumix \rceil} \eqsp.
\end{align}
which implies
\begin{align}
\textstyle
\sum_{k=1}^{n-1}\sum_{\ell = 1}^{n-k}\abs{ \PE_{\xi}[g_k(\State_{k})^{\top} g_{k+\ell}(\State_{k+\ell})] } & \leq \sum_{k=1}^{n-1}\supnorm{g}^{2} (1/4)^{\lceil \ell/\taumix \rceil}  \leq (4/3) \supnorm{g}^2 \taumix n \eqsp.
\end{align}
Combining the bounds above, we upper bound $\PE_{\xi}[\norm{\sum_{i=1}^{n}g_{i}(\State_{i})}]$ as
\begin{align}
\PE_{\xi}[\norm{\sum\nolimits_{i=1}^{n}g_i(\State_{i})}] &\leq \bigl\{ \PE_{\xi}[\norm{\sum\nolimits_{i=1}^{n}g_i(\State_{i})}^2] \bigr\}^{1/2} \leq
2 \sqrt{n} \supnorm{g} \sqrt{\taumix} =: v_n \eqsp.
\end{align}
Plugging this result in \eqref{eq:prob_for_norms_markov},  we obtain that
\begin{equation}
\label{eq:MacDiarmid_markov_new}
\PP_{\xi}\biggl(\norm{\sum\nolimits_{i=1}^{n}g_i(\State_{i})} \geq t\biggr) \leq
\begin{cases}
1, \quad t < v_n, \\
 \exp\left\{-\frac{2(t-v_{n})^{2}}{9 v_{n}^2}\right\}\eqsp, \quad t \geq v_{n}\eqsp.
\end{cases}
\end{equation}
Now it is easy to see that \rhs\ of \eqref{eq:MacDiarmid_markov_new} is upper bounded by $2\exp\{-2t^2/(9 v_n^2)\}$ for any $t \geq 0$, and the statement follows.
\end{proof}

\section{Experiments}
\label{sec:experiments}

\subsection{Gaussian Mixture}
\label{subsec:app:toy_problem}
\paragraph{Bias MSE trade-off:}
We display in \Cref{fig:bias_toy_smallest,fig:mse_toy_smallest} the bias and the MSE of the \SUISIR{} estimators for the same configuration as in \Cref{fig:toy_perf} but with $\ki_0 = \lfloor 0.625 \kmax \rfloor$.
We observe $3$ times less bias than the \SNIS{} estimators but only with a $10\%$ increase of the MSE for the $N=129$ setting.
This can be also seen in \Cref{fig:ratio_bias_mse_toy}, where we show the ratio between \SUISIR{} and \SNIS{} for bias and MSE with $N = 129$.
\begin{figure}[h]
    \centering
    \begin{subfigure}{0.32\textwidth}
        \includegraphics[width=\textwidth]{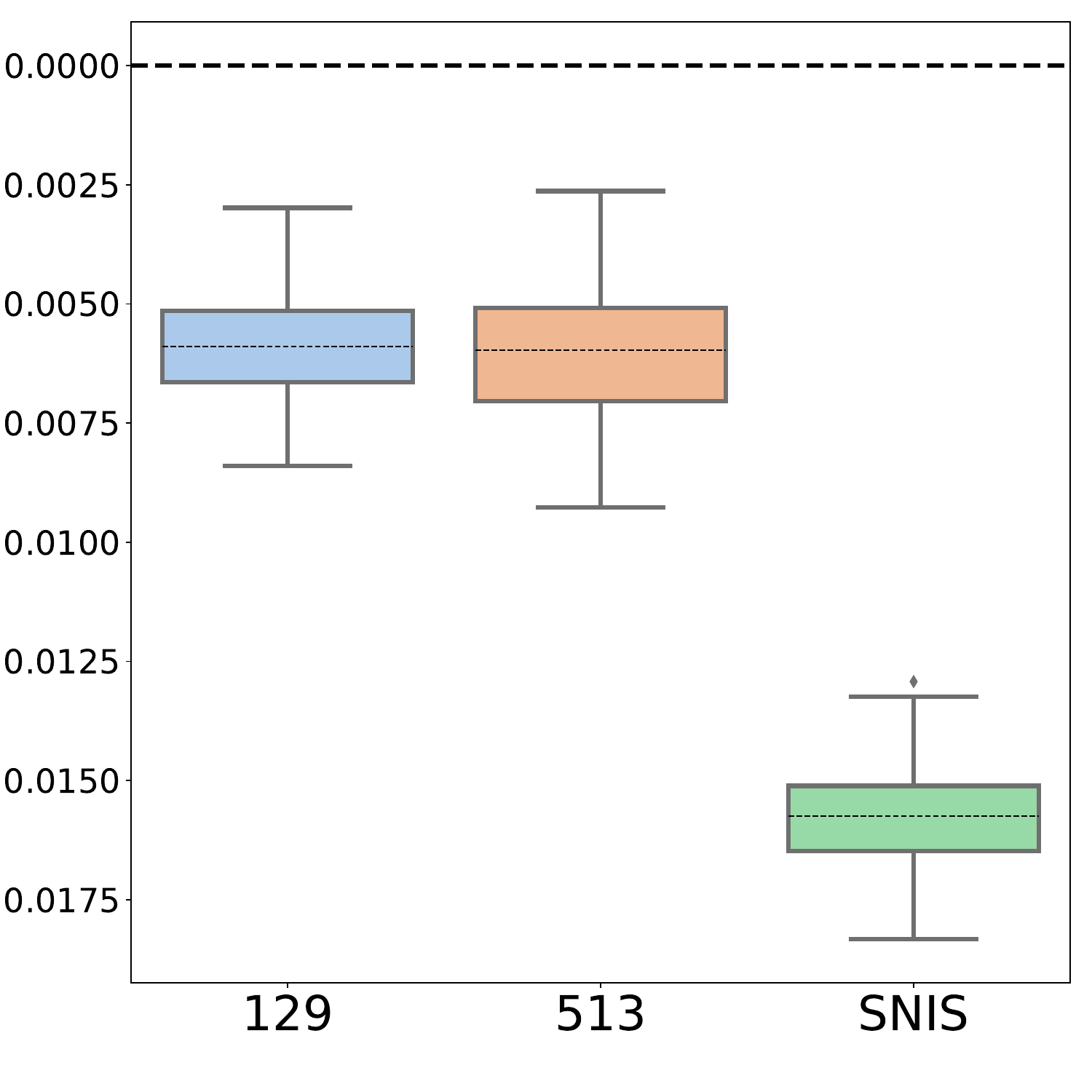}
        \caption{Bias}
        \label{fig:bias_toy_smallest}
    \end{subfigure}
    \begin{subfigure}{0.32\textwidth}
        \includegraphics[width=\textwidth]{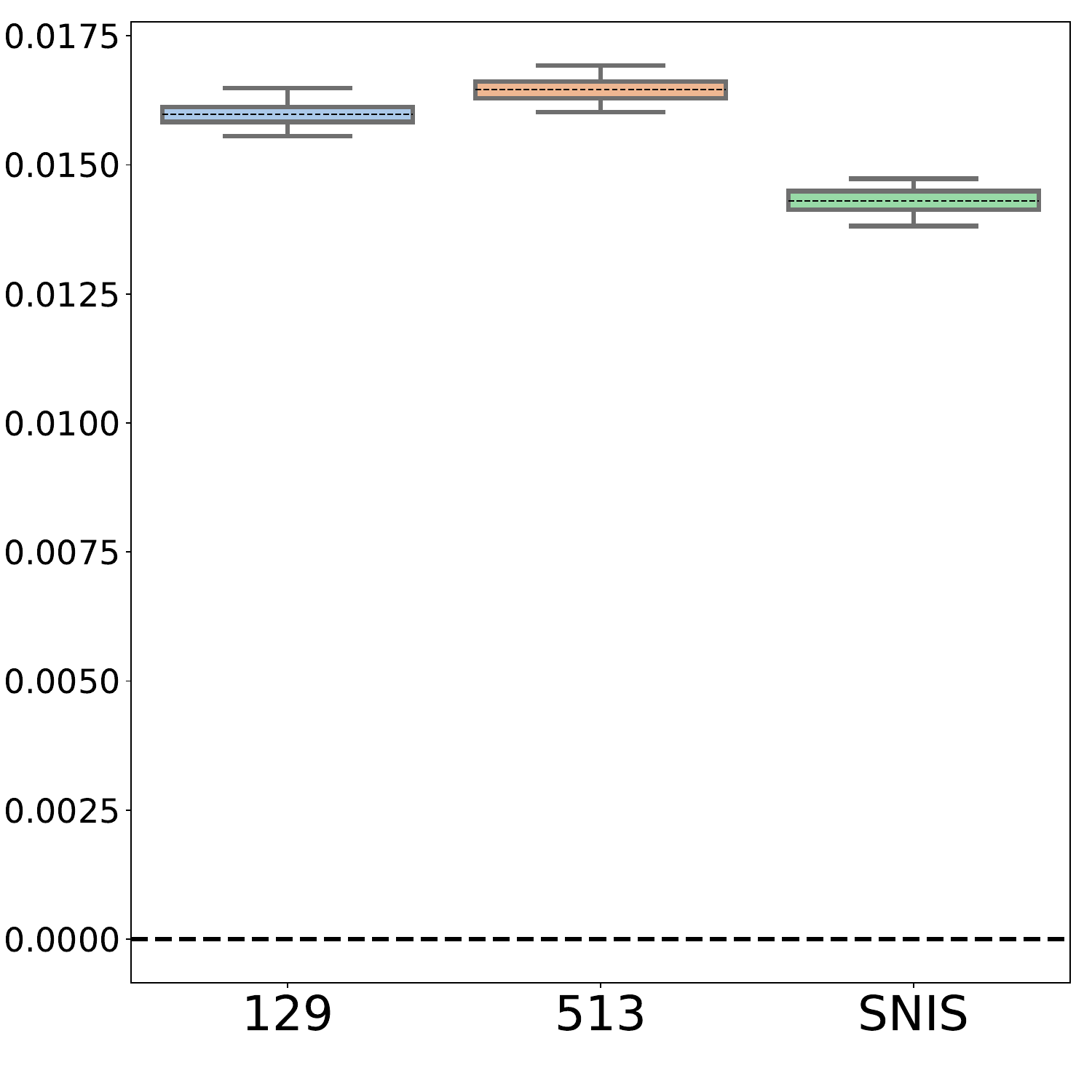}
        \caption{MSE}
        \label{fig:mse_toy_smallest}
    \end{subfigure}
    \begin{subfigure}{0.32\textwidth}
        \includegraphics[width=\textwidth]{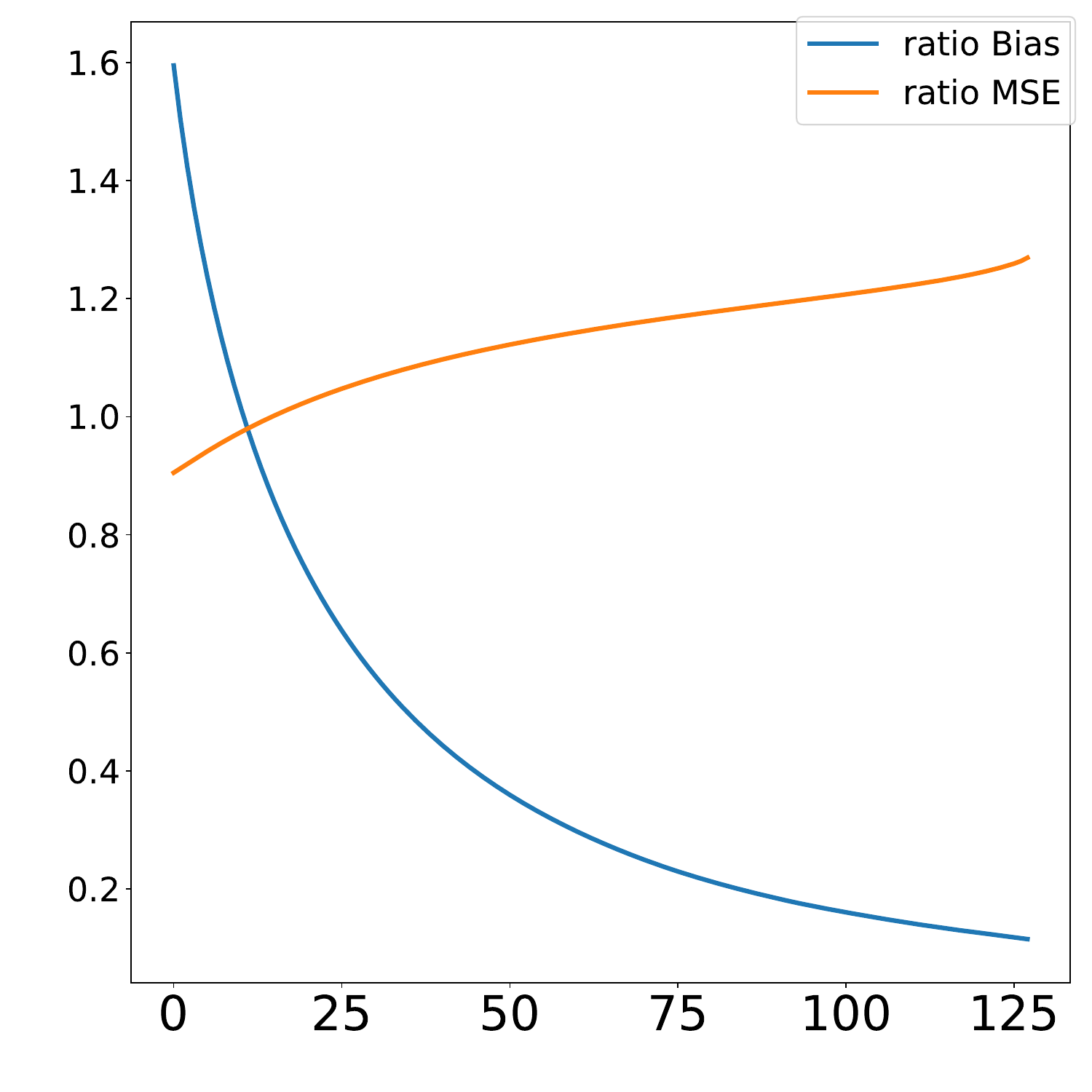}
        \caption{Ratios as $\ki_0$}
        \label{fig:ratio_bias_mse_toy}
    \end{subfigure}
    \caption{Comparison between \SNIS{} and \SUISIR{} for the same budget. In each boxplot the dotted line represents the \textbf{mean} value of the samples. In \Cref{fig:ratio_bias_mse_toy} we display the ratio between \SUISIR{} and \SNIS{} for bias and MSE with $N = 129$.}
    \label{fig:toy_smallest_bias}
\end{figure}

\paragraph{Parameters Gaussian mixture:}

The $\target$ in \Cref{subsec:toy_gauss} is a Mixture of two Gaussians in dimension $7$ with mean vectors $\vectmean_1 = (1, \dots, 1)^\intercal$ and $\vectmean_2 = (-2, 0, \dots, 0)^\intercal$ and covariance matrices $\covmat_1 = d^{-1}\Idd$ and $\covmat_2 = d^{-1}\Idd$,
where $p = 1/3$ and $\Idd$ is the identity matrix
In this setting, the quantities $\kappa[\target, \proposal]$ and $\bound$ can be estimated by Monte Carlo and Gradient ascent respectively.
Their values are approximately $7 \cdot 10^2$ and $1 \cdot 10^4$, respectively.

The sets $A$ and $B$ used to define the function $f$ are the following:
\begin{equation}
    A := [-2, 6] \times [-1, 1]^6 \eqsp, \quad B := [0.75, 1.25] \times [1, 2] \times [-0.1, 0.1]^5 \eqsp.
\end{equation}

We used this example to illustrate numerically the bounds in \Cref{theo:bias-i-SIR-recycling,theo:bias-mse-rolling}, where
each expectation was calculated by Monte Carlo using $2\cdot 10^4$ samples. We displayed in each figure the equivalent \SNIS\ estimation in a green dashed line.
For all the  bias related bounds(\Cref{theo:bias-i-SIR-recycling}(i) in \Cref{fig:bias_decay}, \Cref{theo:bias-mse-rolling}(i) in \Cref{fig:bias_rolling_decay}), we fixed a total budget of $M=6\cdot10^3$.
For \Cref{fig:bias_decay} we added a fit of the type $y=\exp(ak + b)$ to illustrate the exponential decay \wrt\ $k$.

We then increased the budget to $M=8\cdot10^4$ for the MSE and covariance bounds, in order to fully observe the stabilisation of the MSE in \Cref{fig:mse} for all the minibatch sizes $N$.
For the true value of $\target(f)$ needed for calculating the MSE, we use an estimation obtained by Monte Carlo (sampling directly from $\target$) with $4 \cdot 10^7$ samples.
In \Cref{fig:rolling_mse} we added dashed lines with the theoretical value of the $\msesnis[\burningloss M]$ with the same color as $\burningloss$.
\begin{figure}[h]
    \centering
    \begin{subfigure}{0.24\textwidth}
        \includegraphics[width=\textwidth]{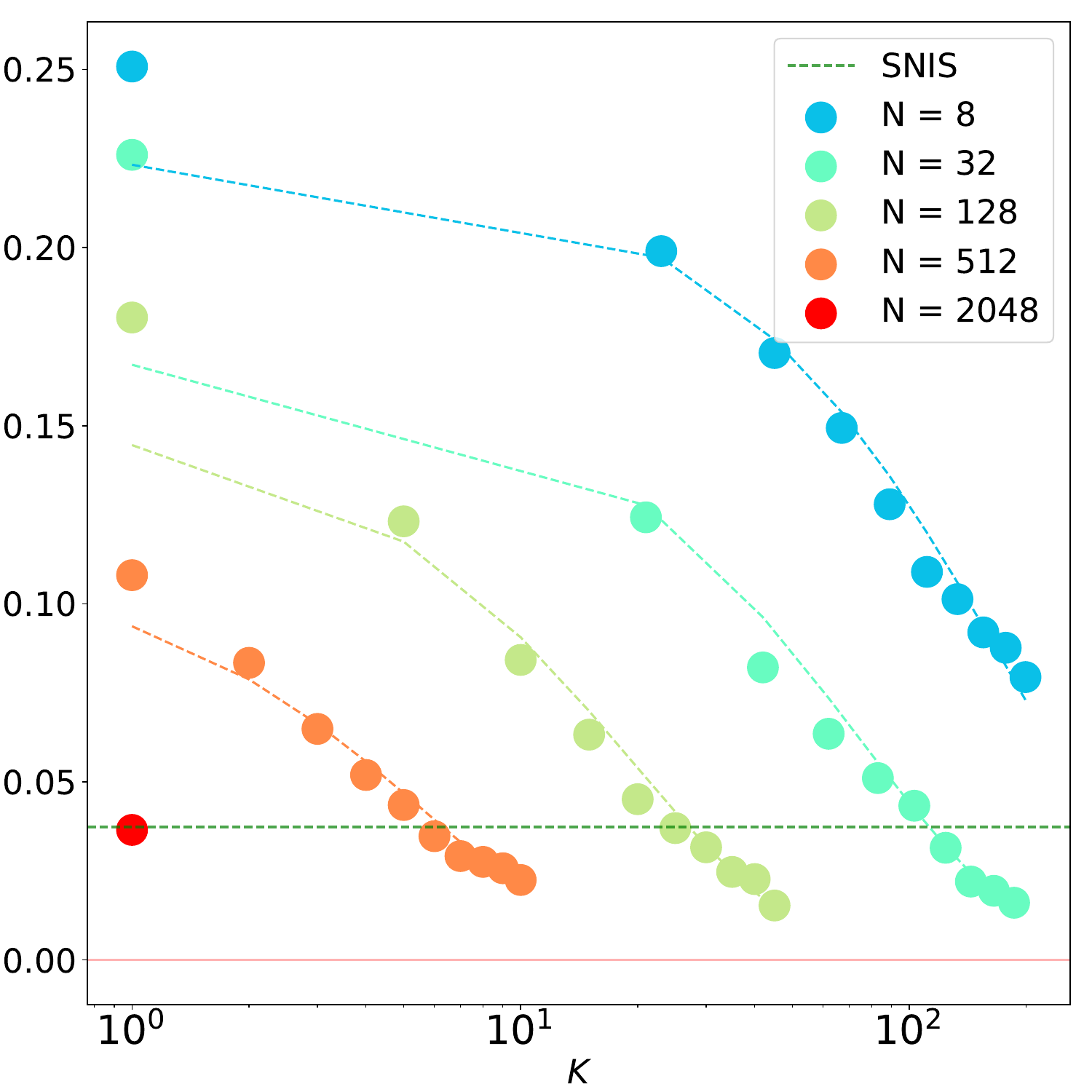}
        \caption{Bias}
        \label{fig:bias_decay}
    \end{subfigure}
    \begin{subfigure}{0.24\textwidth}
        \includegraphics[width=\textwidth]{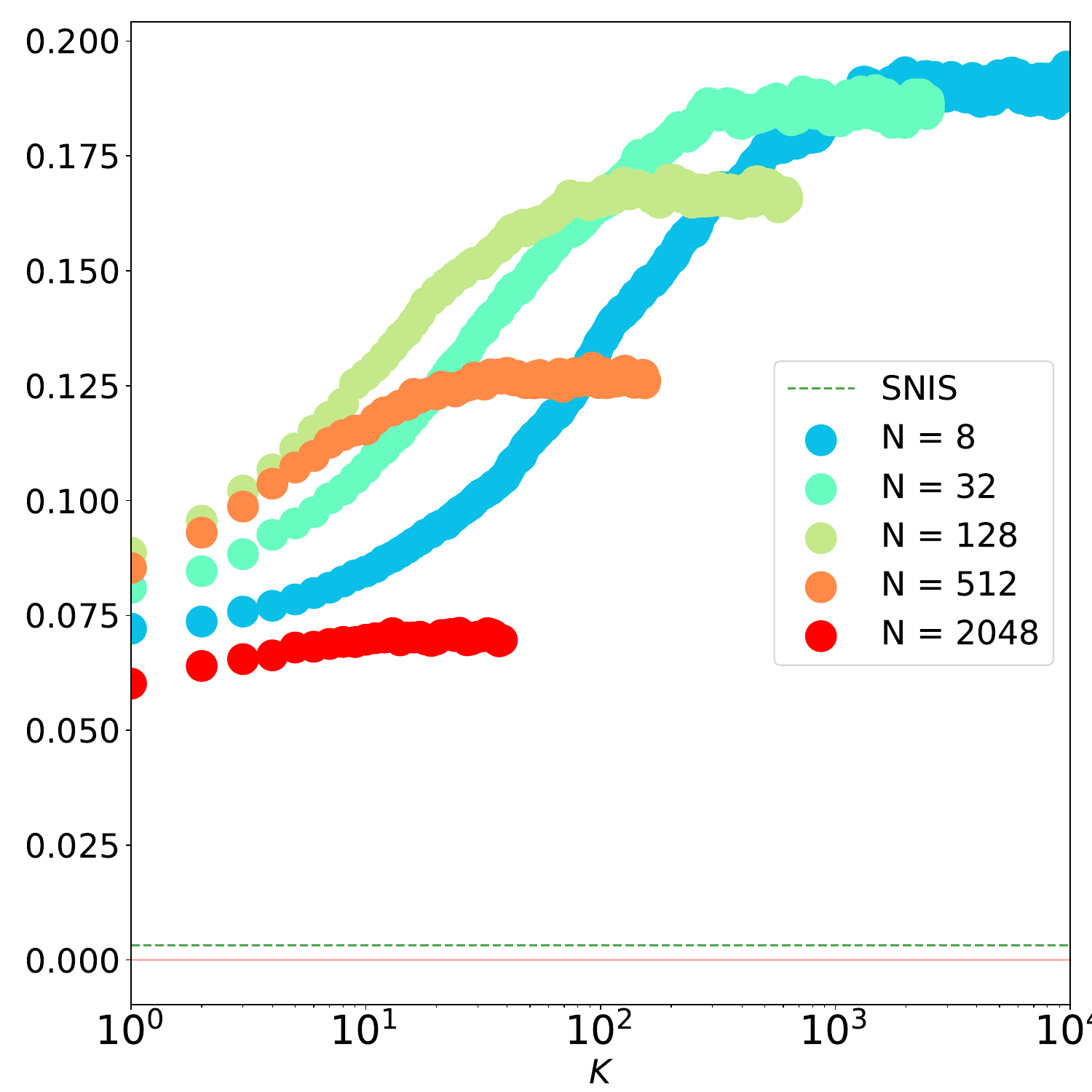}
        \caption{MSE}
        \label{fig:mse}
    \end{subfigure}
    \begin{subfigure}{0.24\textwidth}
        \includegraphics[width=\textwidth]{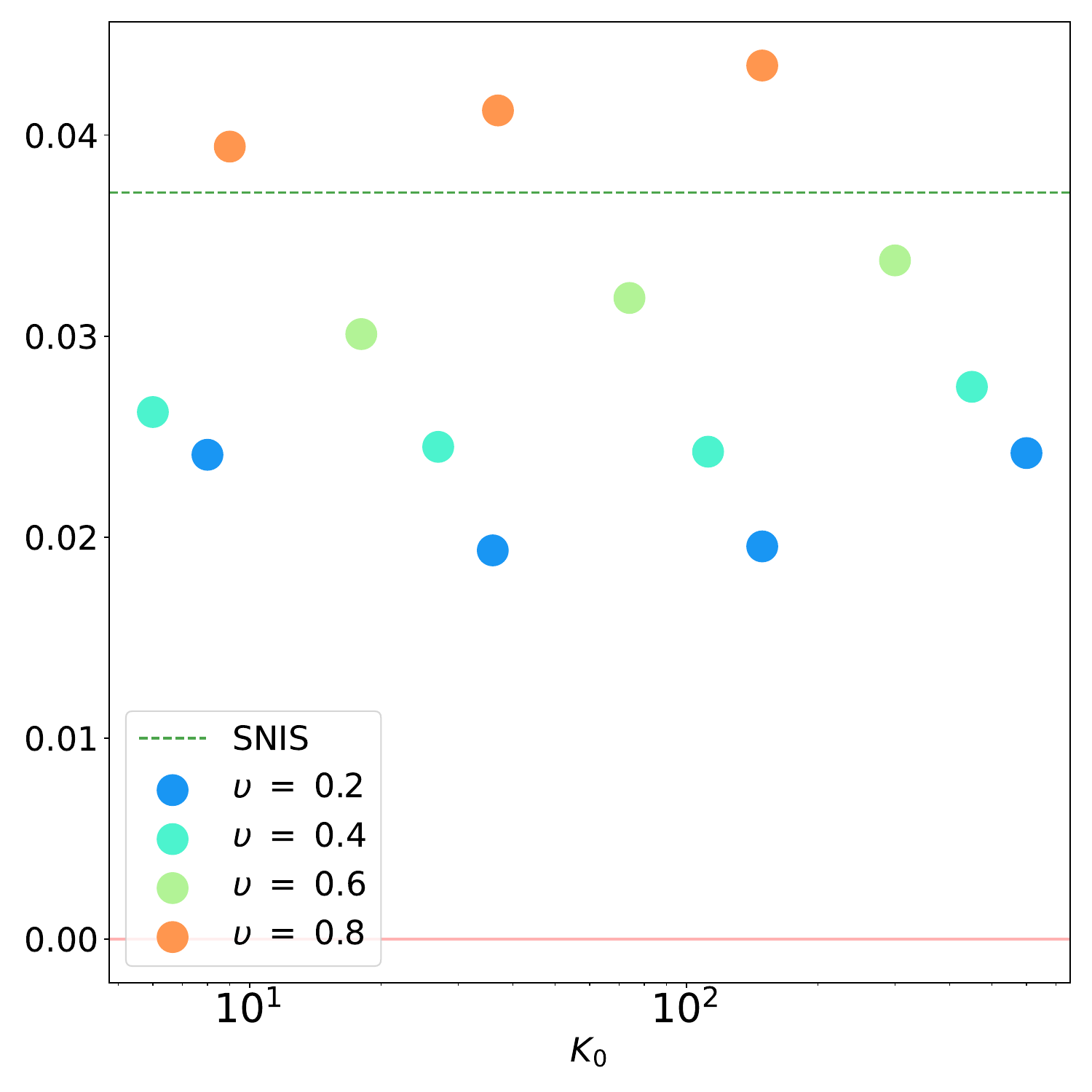}
        \caption{Rolling bias}
        \label{fig:bias_rolling_decay}
    \end{subfigure}
    \begin{subfigure}{0.24\textwidth}
        \includegraphics[width=\textwidth]{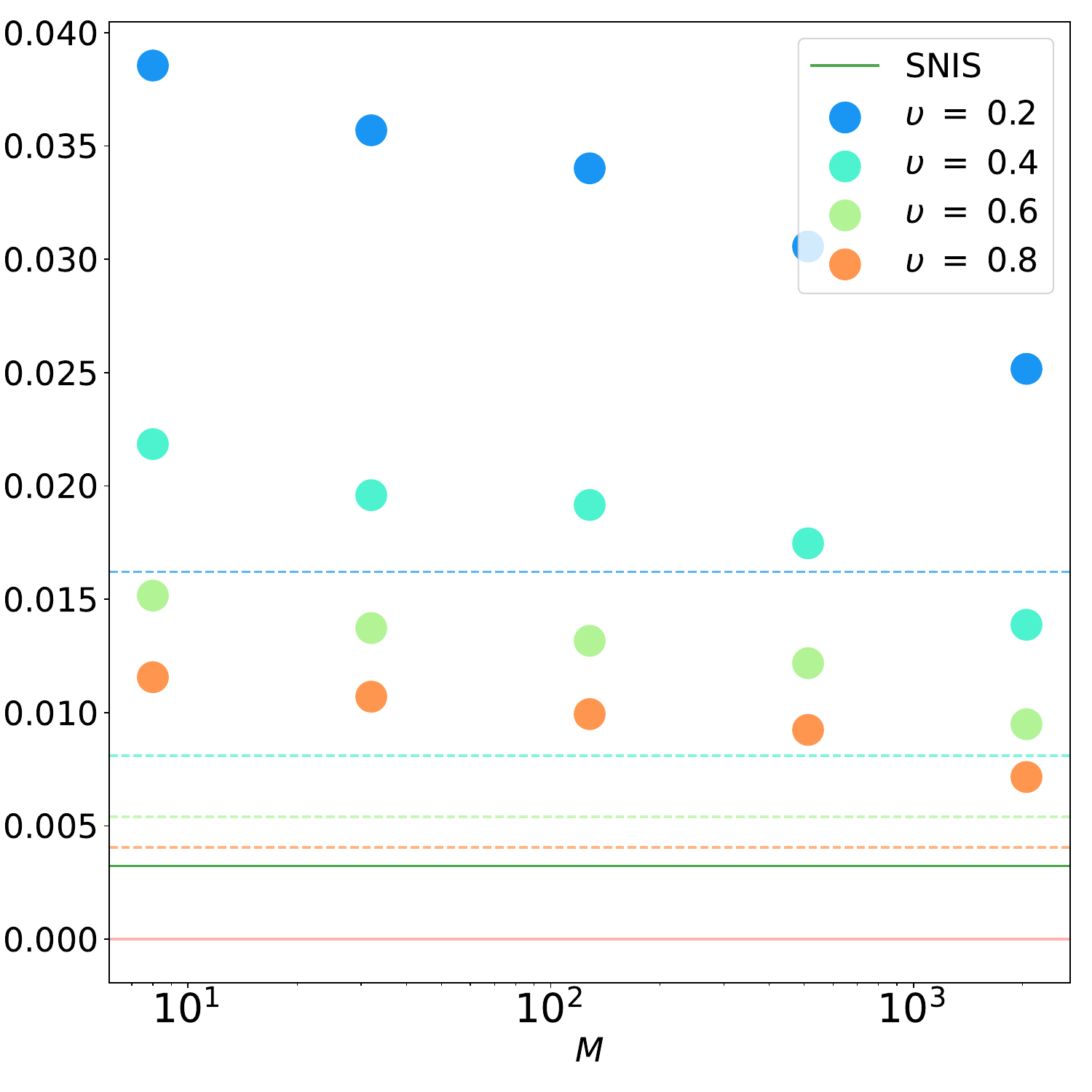}
        \caption{Rolling MSE}
        \label{fig:rolling_mse}
    \end{subfigure}
    \caption{Visualization of the theoretical bounds from \Cref{theo:bias-i-SIR-recycling,theo:bias-mse-rolling}.}
    \label{fig:bounds_theorem}
\end{figure}

\paragraph{Comparison with zero bias SNIS methods:}

    There exists estimators based on \SNIS\ that have no bias, such as the estimator proposed in \cite{pmlr-v89-middleton19a} and refered to as \UnbiasedPIMH\,.
    One of the main differences between such estimator and \SUISIR\ is that \SUISIR\ works under a pre-established budget of samples, whereas in Unbiased-PIMH the number of samples used to produce an estimate varies due to the accept-reject procedure.
    Even though the two estimators have different goals, it can be of interest to compare both of them in the case where there is a restriction in the total number of samples available.

    We proceed to a fixed-budget ($M$) comparison between \SUISIR\ and the "Rao Blackwellized" version of the algorithm proposed at \cite{pmlr-v89-middleton19a} in the Gaussian Mixture example.
    In order to do so, it's necessary to impose the fixed-budget constraint to the \UnbiasedPIMH\ estimator. A single iteration of the estimator from \UnbiasedPIMH\
    with batch-size $N$ needs $rN$ samples where $r \in \nset$ is a random number satisfying $r\geq 2$. Therefore, there are two ways of applying the constraint to
    \UnbiasedPIMH\,:
    \begin{itemize}
        \item \textbf{Soft}: For a given $N$, generate estimations using \UnbiasedPIMH\ until the number of samples is larger than $M$ and \textbf{keep} the last estimation. Therefore, all the estimators from \UnbiasedPIMH\ will have used \textbf{at least} $M$ samples. All the estimations generated are averaged to generate a single estimate.
        \item \textbf{Hard}: For a given $N$, generate estimations using \UnbiasedPIMH\ until the number of total samples used is larger than $M$ and \textbf{discard} the last estimation. Therefore, all the estimators from \UnbiasedPIMH\ will have used \textbf{at most} $M$ samples. \textbf{If no estimations were produced under the budget cap (first iteration used more than $M$ samples), then we consider it a miss}. All the estimations generated are averaged to create a single estimate.
    \end{itemize}
    The code used to run the experiments is available at \footnote{\href{https://github.com/gabrielvc/br_snis/blob/master/notebooks/Comparison_Unbiased-PIMH.ipynb}{https://github.com/gabrielvc/br\_snis/blob/master/notebooks/Comparison\_Unbiased-PIMH.ipynb}}.
    For both cases, the following values of $M$ are used in the comparison: $2^{16}, 2^{12}, 2^{9}$. For each estimator, a total of $1024$ Monte Carlo replications are used to estimate the mean and the standard deviation of the estimator. Note that in the \textbf{Hard} framework, \textbf{it can happen that less than 1024 replications are used for the \UnbiasedPIMH\ estimator}. The number of failed estimations is reported in the tables for the framework \textbf{Hard} for each configuration.

    For each configuration of the \SUISIR\ estimator (defined by $N$, $\kmax$), we have used $90\%$ burn-in period ($\ki_{0} = \lfloor 0.9 \kmax \rfloor)$ and $\kmax$ rounds of bootstrap ($\kmax$ permutations of the input samples).

    The following values were calculated:
    \begin{itemize}
        \item \textbf{Bias}: The mean of the estimations minus $\operatorname{ref}$ over $1024$ replications
        \item \textbf{Std}: The standard deviation of the estimations over $1024$ replications.
        \item \textbf{Fails}: The number of replications that failed to produce a single estimation for a given budget $M$. This is only applicable for the \UnbiasedPIMH\ estimator and in the \textbf{Hard} framework.
        \item \textbf{average M}: The average (over the $1024$ replications) total cost of the estimator. For \SUISIR\ and \SNIS\ this is always $M$. For \UnbiasedPIMH\ in the \textbf{Soft} framework it is larger than $M$. In the \textbf{Hard} framework it is smaller than $M$.
    \end{itemize}
%

\SetKwComment{Comment}{/* }{ */}
\begin{algorithm}
    \caption{Unbiased-PIMH}\label{algorithm:pimh}
    \KwData{$N \geq 0$}
    $e_1, \operatorname{lwav}_1 \gets \SNIS{}(N)$ \Comment*[r]{SNIS also returning the average log weights}
    $e_2, \operatorname{lwav}_2 \gets \SNIS{}(N)$\;
    \If{$\operatorname{lwav}_1 < \operatorname{lwav}_2$}{
        $\operatorname{swap}(e_1, \operatorname{lwav}_1; e_2, \operatorname{lwav}_2)$
    }
    $u = \log \operatorname{rand}() $ \;
    \If{$u < \operatorname{lwav}_1$ and $u < \operatorname{lwav}_2$}{
        $\tau = 1$\;
    }
    $t \gets 1$\;
    $\tau = \infty$\;
    \While{$\tau = \infty$}{
      $e_1 = e_1 + (e_1 - e_2)$ \;
      $e_{p}, \operatorname{lwav}_{p} = \SNIS{}(N)$\;
      $t = t + 1$\;
     $u = \log \operatorname{rand}()$;
        \If{$u < \operatorname{lwav}_{p} - \operatorname{lwav}_{1} $}{
        $e_1, \operatorname{lwav}_{1} = e_p, \operatorname{lwav}_{p}$\;
    }
        
        \If{$u < \operatorname{lwav}_{p} - \operatorname{lwav}_{1} $}{
        $e_2, \operatorname{lwav}_{1} = e_p, \operatorname{lwav}_{p}$\;
    }
        \If{$u < \operatorname{lwav}_1$ and $u < \operatorname{lwav}_2$}{
        $\tau = t$\;
    }
    }
\end{algorithm}
\DTLloaddb{comparison_pimh_16}{comparison_pimh_67108864.csv}
\begin{table}[]
    \centering
    \begin{tabular}{|c|c|c|c|c|c|}%
         \hline
          N & k & algorithm & Bias & std & average M \\
          \hline
          \DTLforeach*{comparison_pimh_16}{\N=N, \k=k, \algo=algorithm, \bias=bias, \averageM =average M, \std=std deviation}{
            \N & \k & \algo & \bias & \std & \averageM \DTLiflastrow{}{\\}} 
          \\\hline 
    \end{tabular}
    \caption{$M = 2^{16}$ in the \textbf{Soft} framework.}
    \label{table:comparison_pimh_soft_16}
\end{table}
\DTLloaddb{comparison_pimh_12}{comparison_pimh_4194304.csv}
\begin{table}[]
    \centering
    \begin{tabular}{|c|c|c|c|c|c|}%
         \hline
          N & k & algorithm & Bias & std & average M \\
          \hline
          \DTLforeach*{comparison_pimh_12}{\N=N, \k=k, \algo=algorithm, \bias=bias, \averageM =average M, \std=std deviation}{
            \N & \k & \algo & \bias & \std & \averageM \DTLiflastrow{}{\\}} 
          \\\hline 
    \end{tabular}
    \caption{$M = 2^{12}$ in the \textbf{Soft} framework.}
    \label{table:comparison_pimh_soft_12}
\end{table}
\DTLloaddb{comparison_pimh_9}{comparison_pimh_524288.csv}
\begin{table}[]
    \centering
    \begin{tabular}{|c|c|c|c|c|c|}%
         \hline
          N & k & algorithm & Bias & std & average M \\
          \hline
          \DTLforeach*{comparison_pimh_9}{\N=N, \k=k, \algo=algorithm, \bias=bias, \averageM =average M, \std=std deviation}{
            \N & \k & \algo & \bias & \std & \averageM \DTLiflastrow{}{\\}} 
          \\\hline 
    \end{tabular}
    \caption{$M = 2^{9}$ in the \textbf{Soft} framework.}
    \label{table:comparison_pimh_soft_9}
\end{table}

\DTLloaddb{comparison_pimh_16_mechant}{comparison_pimh_67108864_mechant.csv}
\begin{table}[]
    \centering
    \begin{tabular}{|c|c|c|c|c|c|c|}%
         \hline
          N & k & algorithm & Bias & std & average M & Fails \\
          \hline
          \DTLforeach*{comparison_pimh_16_mechant}{\N=N, \k=k, \algo=algorithm, \bias=bias, \averageM =average M, \std=std deviation, \fails=fails}{
            \N & \k & \algo & \bias & \std & \averageM & \fails\DTLiflastrow{}{\\}} 
          \\\hline 
    \end{tabular}
    \caption{$M = 2^{16}$ in the \textbf{Hard} framework.}
    \label{table:comparison_pimh_hard_16}
\end{table}
\DTLloaddb{comparison_pimh_12_mechant}{comparison_pimh_4194304_mechant.csv}
\begin{table}[]
    \centering
    \begin{tabular}{|c|c|c|c|c|c|c|}%
         \hline
          N & k & algorithm & Bias & std & average M & Fails \\
          \hline
          \DTLforeach*{comparison_pimh_12_mechant}{\N=N, \k=k, \algo=algorithm, \bias=bias, \averageM =average M, \std=std deviation, \fails=fails}{
            \N & \k & \algo & \bias & \std & \averageM & \fails\DTLiflastrow{}{\\}} 
          \\\hline 
    \end{tabular}
    \caption{$M = 2^{12}$ in the \textbf{Hard} framework.}
    \label{table:comparison_pimh_hard_12}
\end{table}
\DTLloaddb{comparison_pimh_9_mechant}{comparison_pimh_524288_mechant.csv}
\begin{table}[]
    \centering
    \begin{tabular}{|c|c|c|c|c|c|c|}%
         \hline
          N & k & algorithm & Bias & std & average M & Fails \\
          \hline
          \DTLforeach*{comparison_pimh_9_mechant}{\N=N, \k=k, \algo=algorithm, \bias=bias, \averageM =average M, \std=std deviation, \fails=fails}{
            \N & \k & \algo & \bias & \std & \averageM & \fails\DTLiflastrow{}{\\}} 
          \\\hline 
    \end{tabular}
    \caption{$M = 2^{9}$ in the \textbf{Hard} framework.}
    \label{table:comparison_pimh_hard_9}
\end{table}
    We have compared both estimators in two different frameworks (\textbf{Hard} and \textbf{Soft}) with three different budgets $M=2^{16}$ (\cref{table:comparison_pimh_hard_16,table:comparison_pimh_soft_16}), $M=2^{12}$ (\cref{table:comparison_pimh_hard_12,table:comparison_pimh_soft_12}) and $M=2^{9}$ (\cref{table:comparison_pimh_hard_9,table:comparison_pimh_soft_9}).
    We observed that in general the \SUISIR\ estimator has smaller standard deviation, with the difference of standard deviation being important for the smaller budgets (3 times less for $M=2^{12}$ and 10 times less  for $M=2^{9}$ in the \textbf{Soft} framework).

    For the \textbf{Hard} framework, we can see that the empirical bias of \SUISIR\ is always at most equal to the empirical bias of \UnbiasedPIMH\ .
    For the \textbf{Soft} framework, we observed that for $M=2^{16}$ that both methods have similar performance, with \SUISIR\ having negligible bias in this setting.
    For $M=2^{12}$ and $M=2^{9}$, \SUISIR\ has in general a smaller empirical biais and the standard deviation of \UnbiasedPIMH\ is considerably higher.



\subsection{Bayesian Logistic regression}
\label{sec:app:bayesian_log_reg}
The importance distribution used in the Bayesian logistic regression example is given by the mean-field variational distribution \cite{blei2017variational}. More precisely, given the target $\target$ given in \Cref{subsec:bayesian_logist}, the proposal $\proposal$ is a Gaussian distribution with mean $\mu$ and diagonal covariance $\operatorname{diag}(\sigma)$, where $\mu, \sigma$ are learnt by maximization of the Evidence Lower Bound (ELBO):
\begin{equation}
    \elbo(\mu, \sigma) = \int \log (\pi(\theta)/\proposal(\theta)) \proposal(\theta) \rmd \theta\eqsp.
\end{equation}
In both \Cref{fig:other_components_posterior_mean,fig:bias_datasets}, the optimal $\ki$ for a given budget $M$ was chosen by grid search over all the factors of $M$. The final settings are shown in \Cref{table:optimal_configs}.
\DTLloaddb{optimal_confs}{optimal_config_bayesian_logistic.csv}
\begin{table}[]
    \centering
    \begin{tabular}{|c|c|c|c|c|}%
          Dataset & component & M & $\kmax$ & N \\ 
          \hline
          \DTLforeach*{optimal_confs}{\Dataset=Dataset,\component=component, \M=M, \k=k, \N=N}{
            \Dataset & \component &\M & \k & \N\\
          }
    \end{tabular}
    \caption{Optimal configurations for \Cref{fig:other_components_posterior_mean,fig:bias_datasets}}
    \label{table:optimal_configs}
\end{table}
\begin{figure}
    \begin{subfigure}{0.32\textwidth}
        \includegraphics[width=\textwidth]{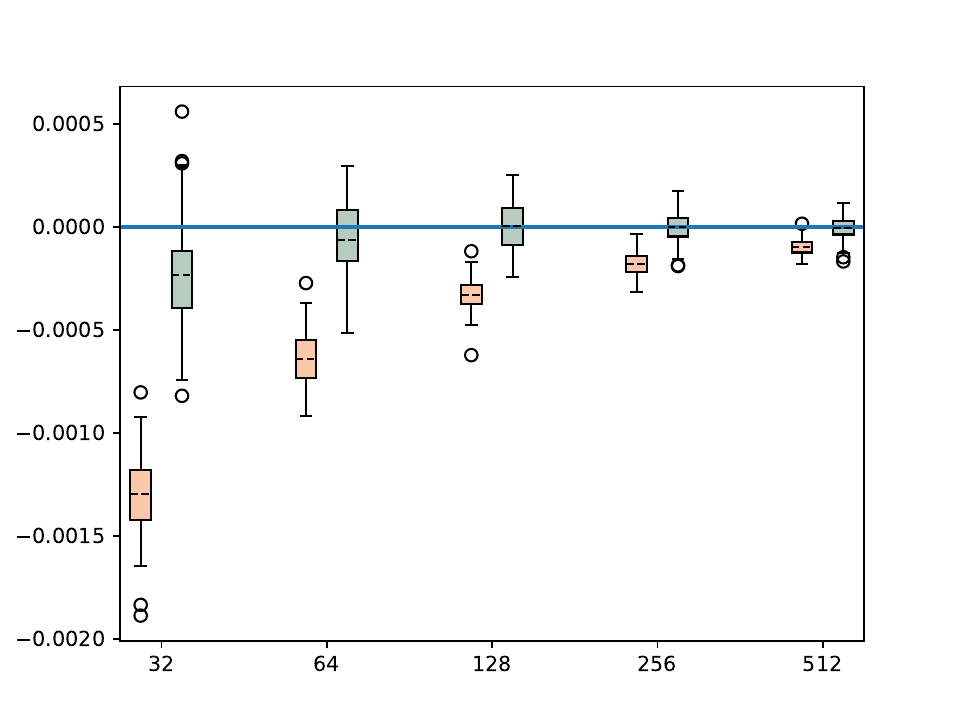}
        \caption{Heart 5}
        \label{fig:heart_component_5}
    \end{subfigure}
    \begin{subfigure}{0.32\textwidth}
        \includegraphics[width=\textwidth]{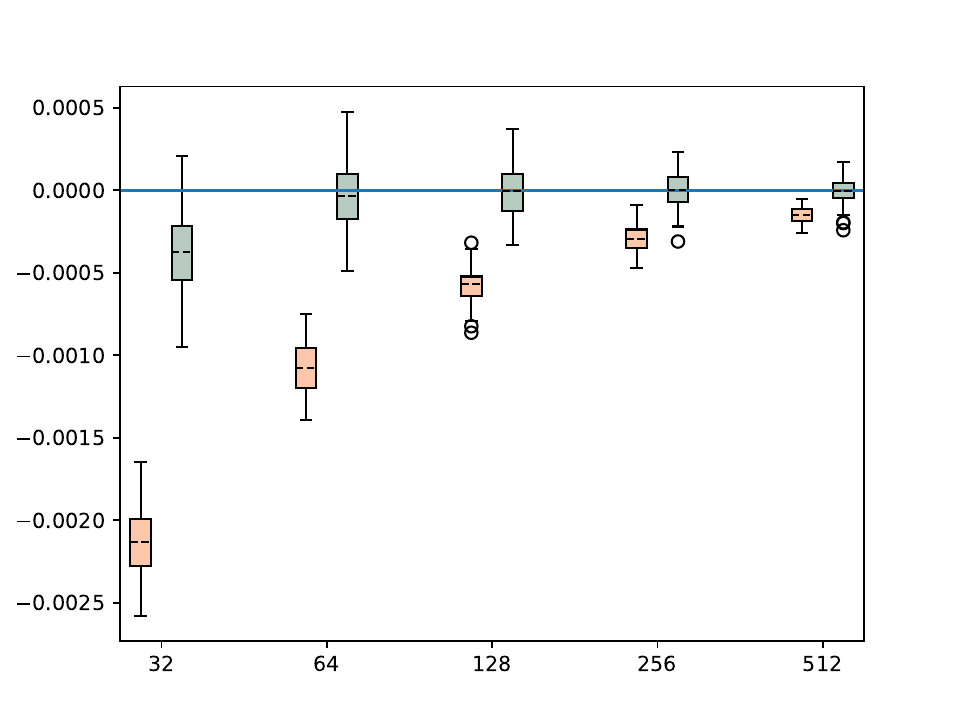}
        \caption{Heart 12}
        \label{fig:heart_component_12}
    \end{subfigure}
        \begin{subfigure}{0.32\textwidth}
        \includegraphics[width=\textwidth]{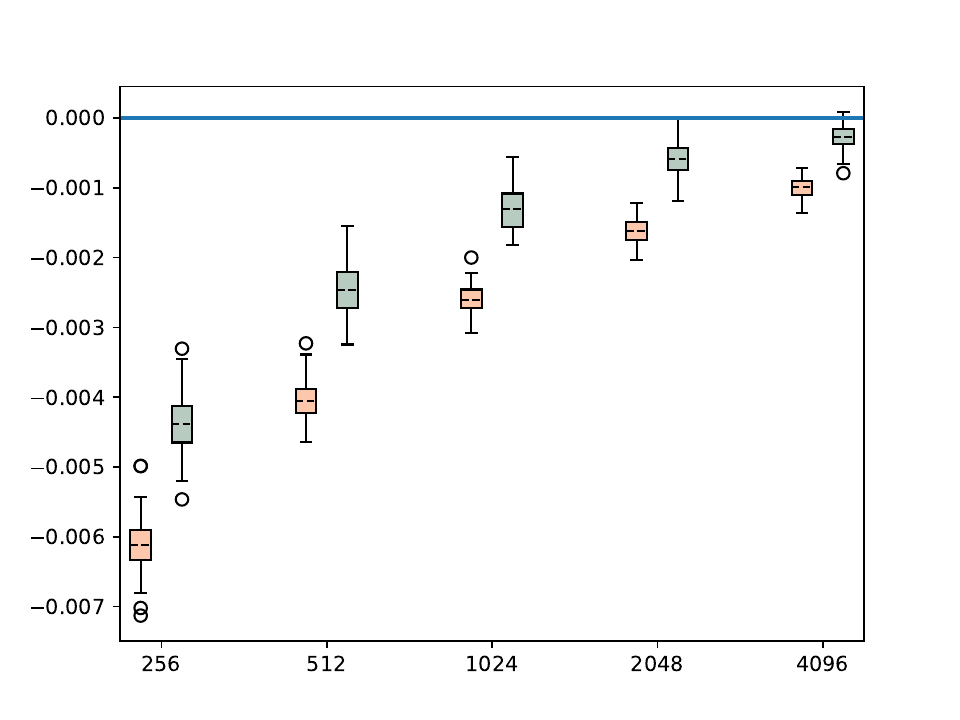}
        \caption{Breast 8}
        \label{fig:breast_component_8}
    \end{subfigure}
    \begin{subfigure}{0.32\textwidth}
        \includegraphics[width=\textwidth]{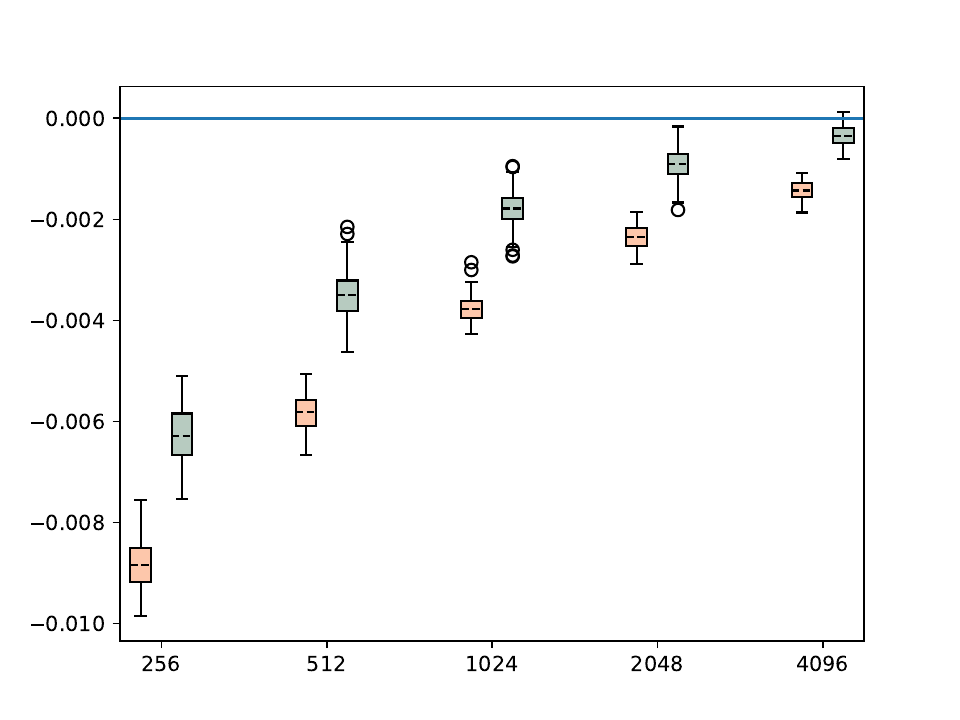}
        \caption{Breast 14}
        \label{fig:breast_component_14}
    \end{subfigure}
    \begin{subfigure}{0.32\textwidth}
        \includegraphics[width=\textwidth]{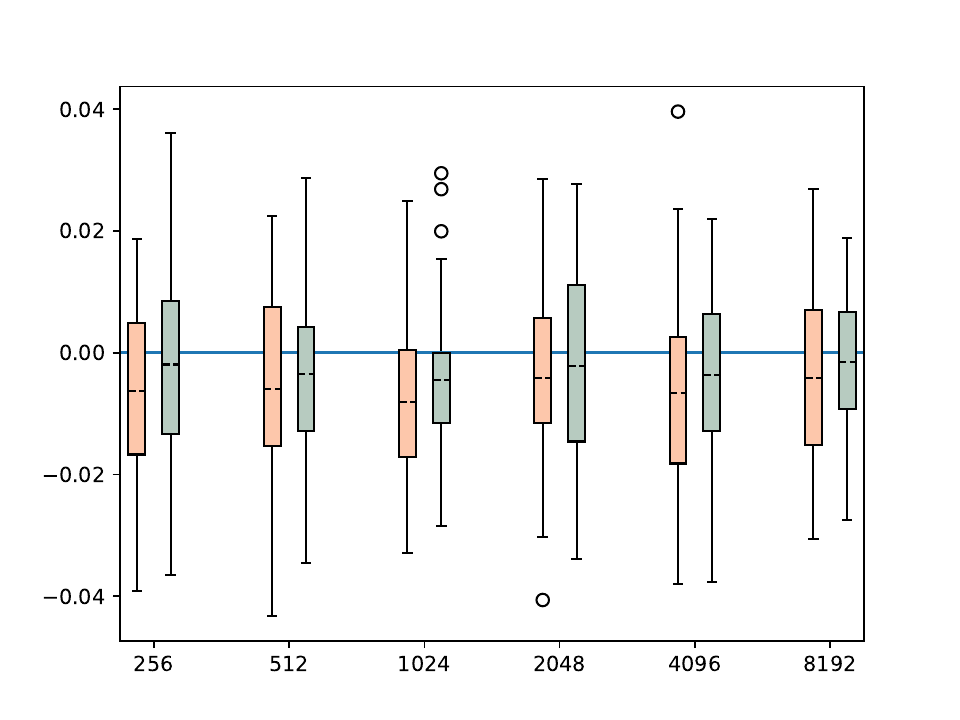}
        \caption{Covertype 17}
        \label{fig:covertype_component_17}
    \end{subfigure}
    \begin{subfigure}{0.32\textwidth}
        \includegraphics[width=\textwidth]{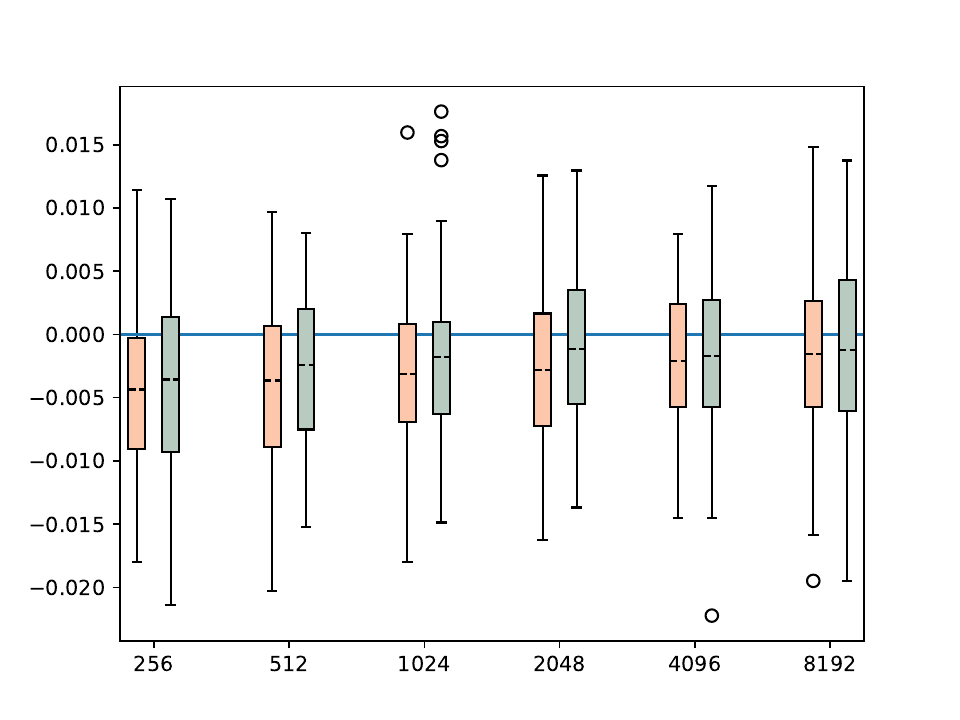}
        \caption{Covertype 23}
        \label{fig:covertype_component_23}
    \end{subfigure}
    \caption{Visualisation of the distribution of the bias for the Heart Failure and Breast cancer dataset for other components of $\theta$}
    \label{fig:other_components_posterior_mean}
\end{figure}
\subsection{Importance Weighted Auto-Encoders}
\label{sec:IWAE}
The details of training procedure are given in \Cref{table:training_parameters_iwae}. The train ELBO for each latent dimension is shown in \Cref{fig:ELBO_train}.
For each network we display generated images in \Cref{fig:generation_dim_20,fig:generation_dim_50}. The generation is done by sampling from the prior $p_{\theta}$.

For the log likelihood comparison in \Cref{table:comparison_log_lik}, we use \SNIS\ with the posterior as importance distribution and a total of $10^4$ samples.
Therefore, the estimation of the log likelihood is:
\begin{equation}
    \hat{\mathcal{L}} = T^{-1} \sum_{j=1}^{T}  \sum_{i=1}^M \omega_{\theta, \phi, x_j} \log p_{\theta}(x_j \mid z^j_i)
\end{equation}
with $\omega_{\theta, \phi, x}(z) = p_\theta(x)/q_\phi(z \mid x)$ where $z_i^j$ is sampled from $q_{\phi}(\cdot \mid x_j)$.
\begin{table}[h]
\centering
\begin{tabular}{|c|c|c|c|}
  \hline
       Name         & Value         & Name       & Value       \\  \hline
    Batch size      & 32            & Epochs     &  100        \\  \hline
  IS samples(IWAE, Un IWAE)&  64    & Encoder inner layer size  &  256   \\ \hline
  Learning rate &  $10^{-3}$    & Decoder inner layer size  &  256   \\
\hline
\end{tabular}
\caption{Training Parameters}
\label{table:training_parameters_iwae}
\end{table}
\begin{figure}[h]
    \centering
    \begin{subfigure}{0.24\textwidth}
        \includegraphics[width=\textwidth]{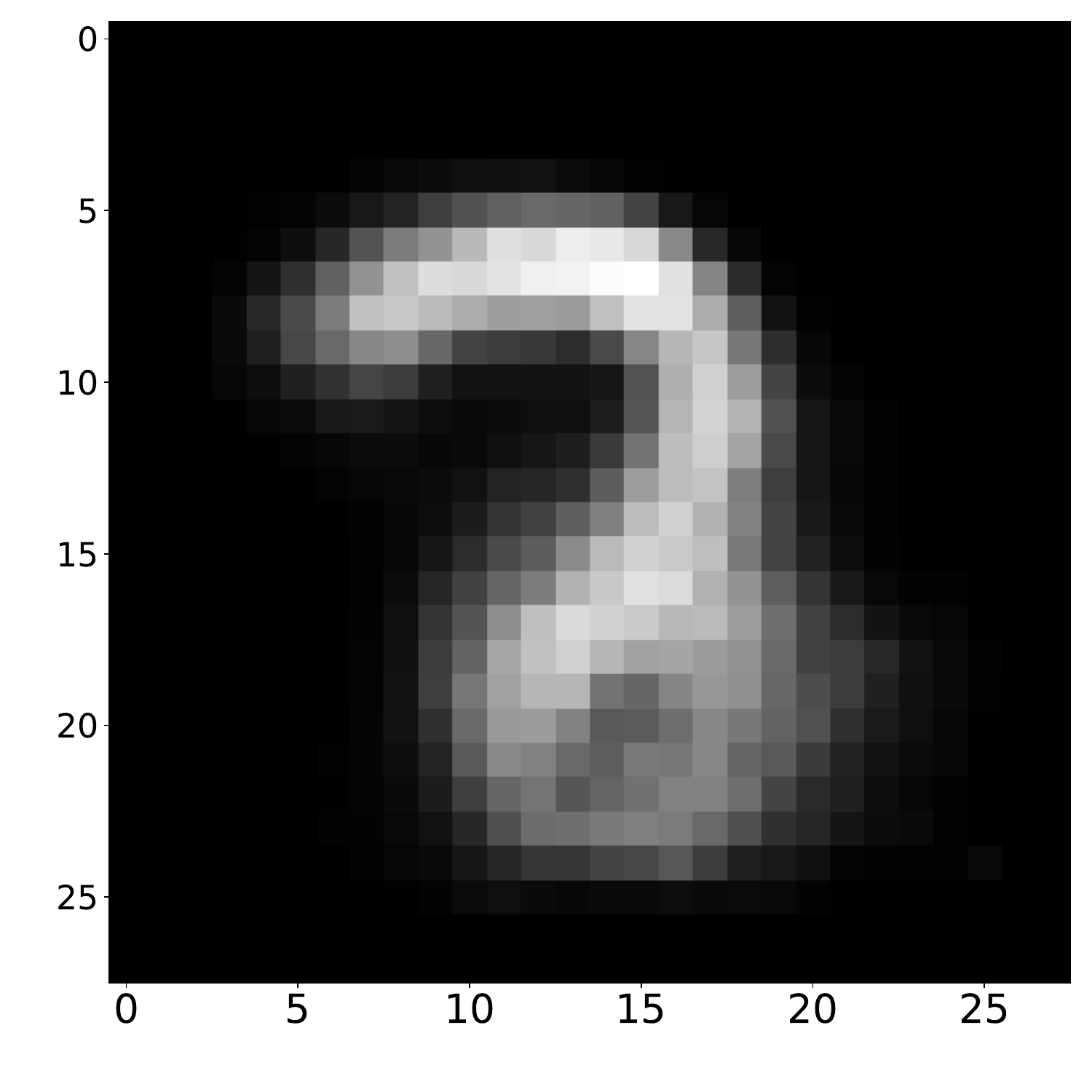}
        \caption{VAE}
        \label{fig:vae_dim_20_img}
    \end{subfigure}
    \begin{subfigure}{0.24\textwidth}
        \includegraphics[width=\textwidth]{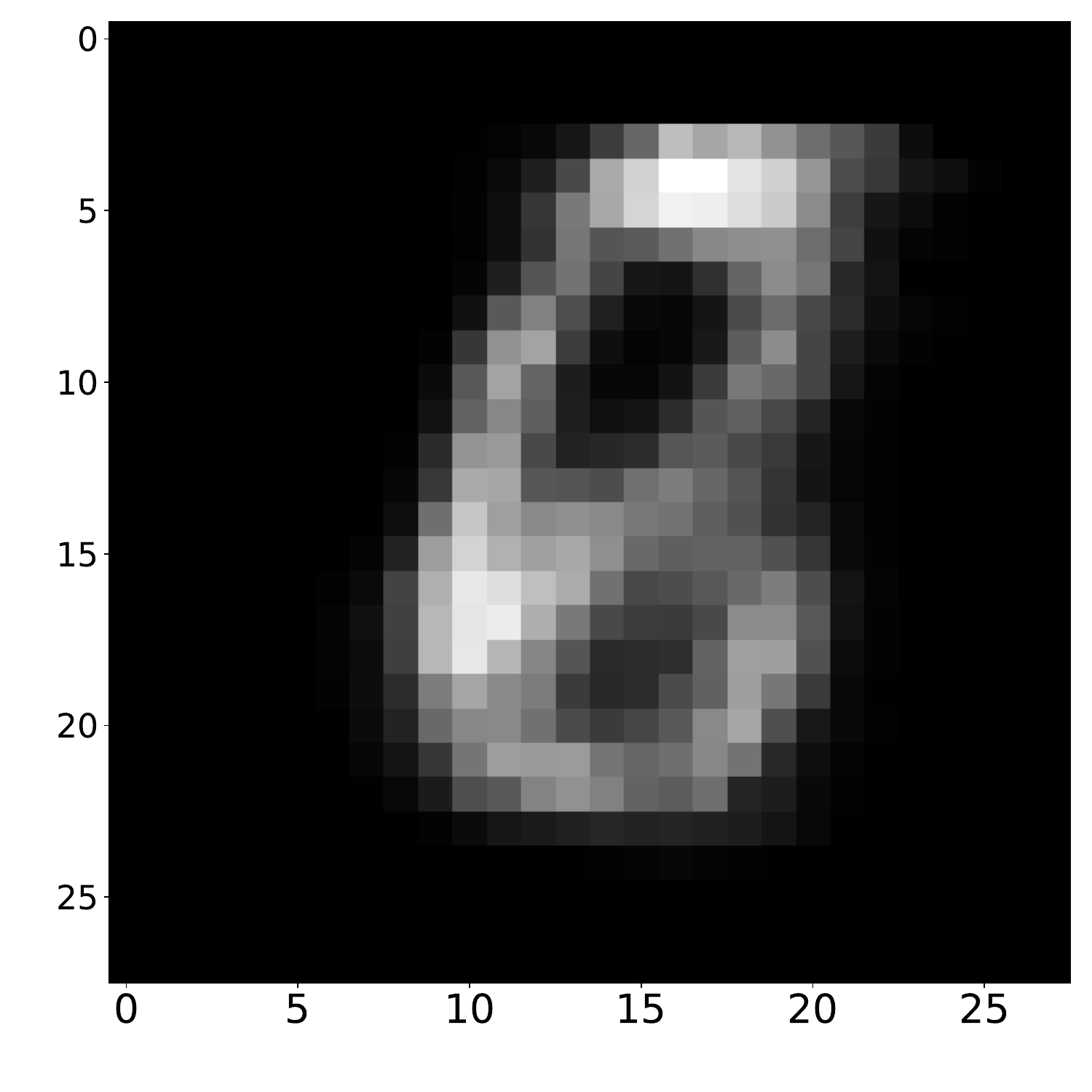}
        \caption{IWAE}
        \label{fig:iwae_dim_20_img}
    \end{subfigure}
    \begin{subfigure}{0.24\textwidth}
        \includegraphics[width=\textwidth]{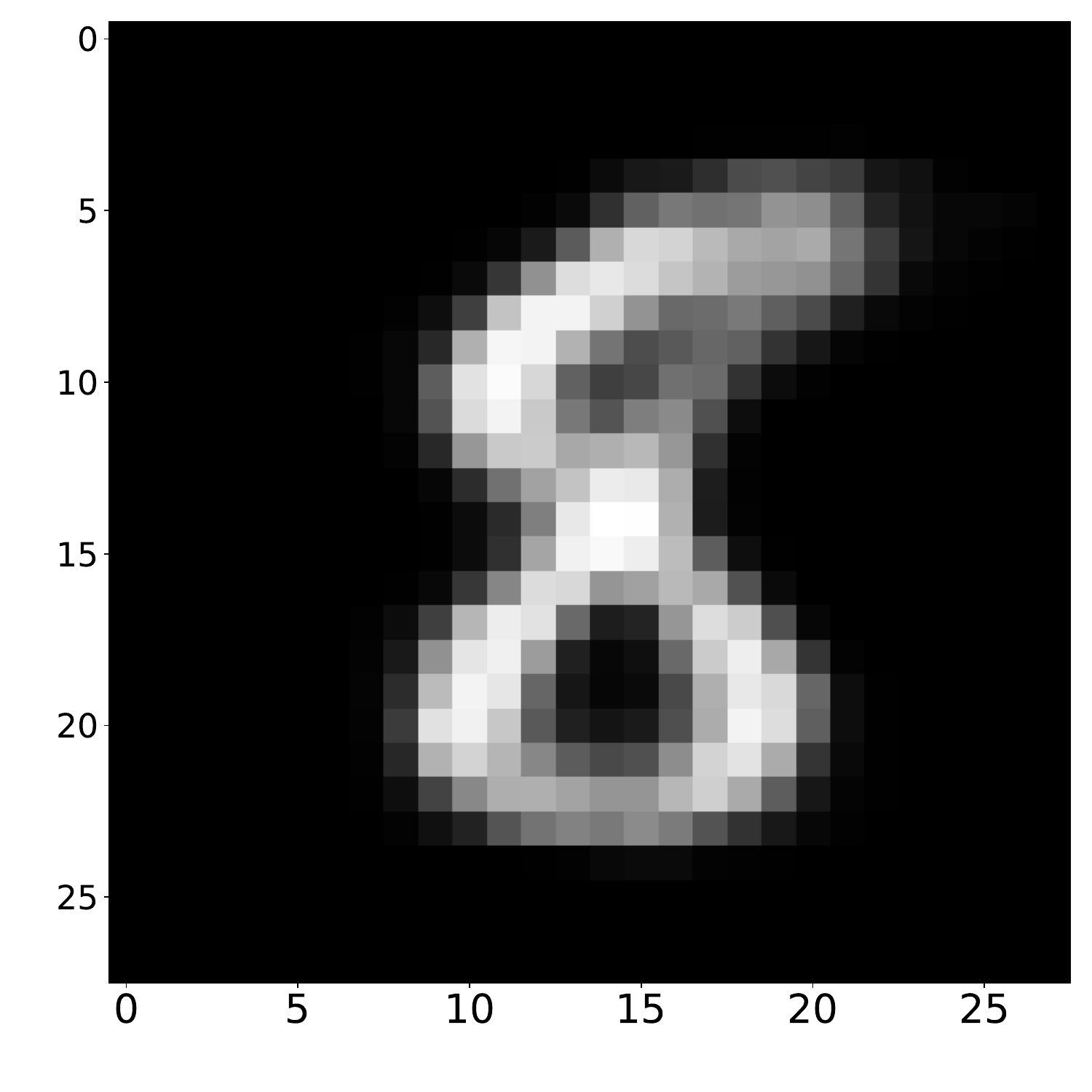}
        \caption{Un-IWAE ($k=4$)}
        \label{fig:un_iwae_dim_4_img}
    \end{subfigure}
    \begin{subfigure}{0.24\textwidth}
        \includegraphics[width=\textwidth]{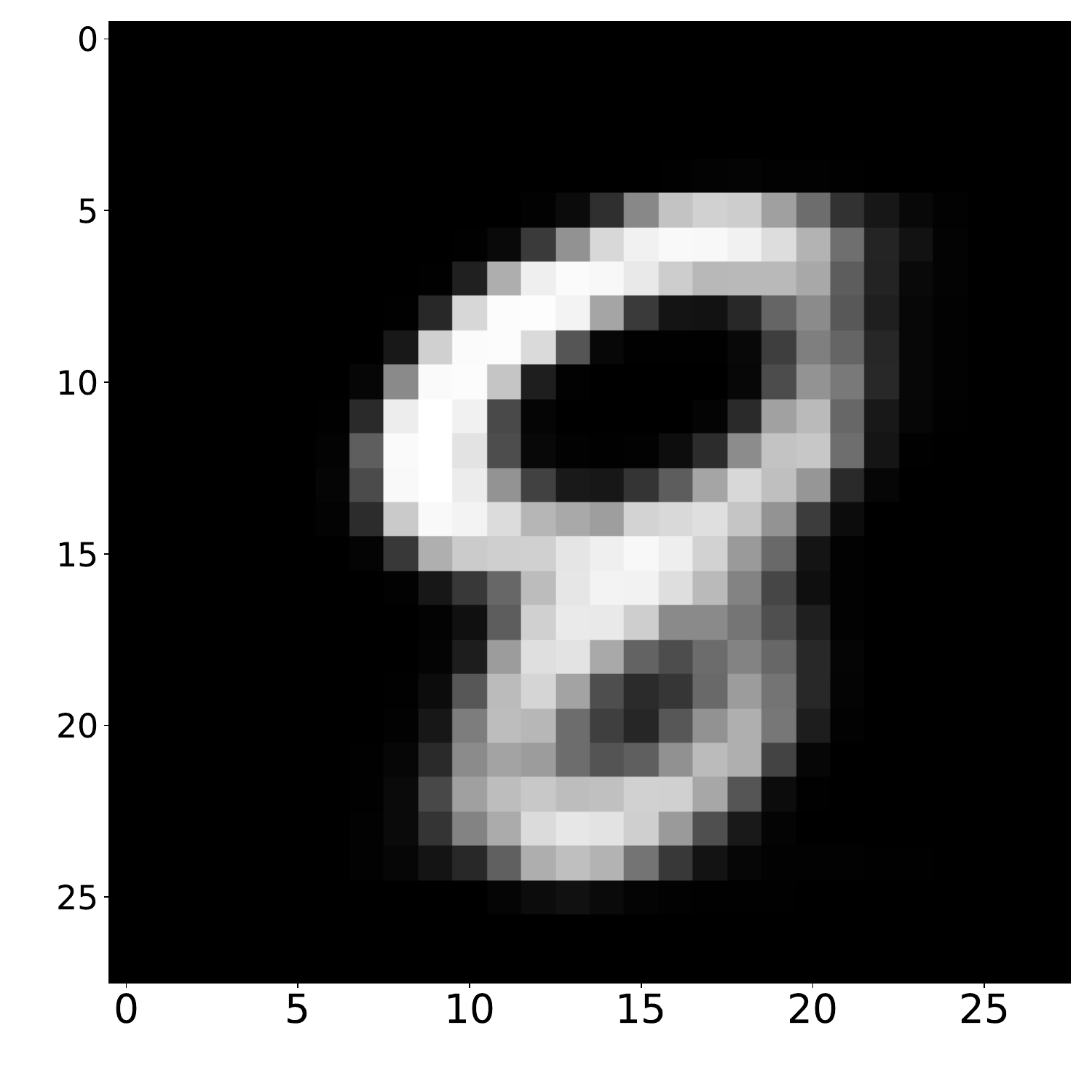}
        \caption{Un-IWAE ($k=8$)}
        \label{fig:un_iwae_dim_8_img}
    \end{subfigure}
    \caption{Images generated by the autoencoders from a sample of the prior in the latent space for dimension $20$.}
    \label{fig:generation_dim_20}
\end{figure}
\begin{figure}[h]
    \centering
    \begin{subfigure}{0.24\textwidth}
        \includegraphics[width=\textwidth]{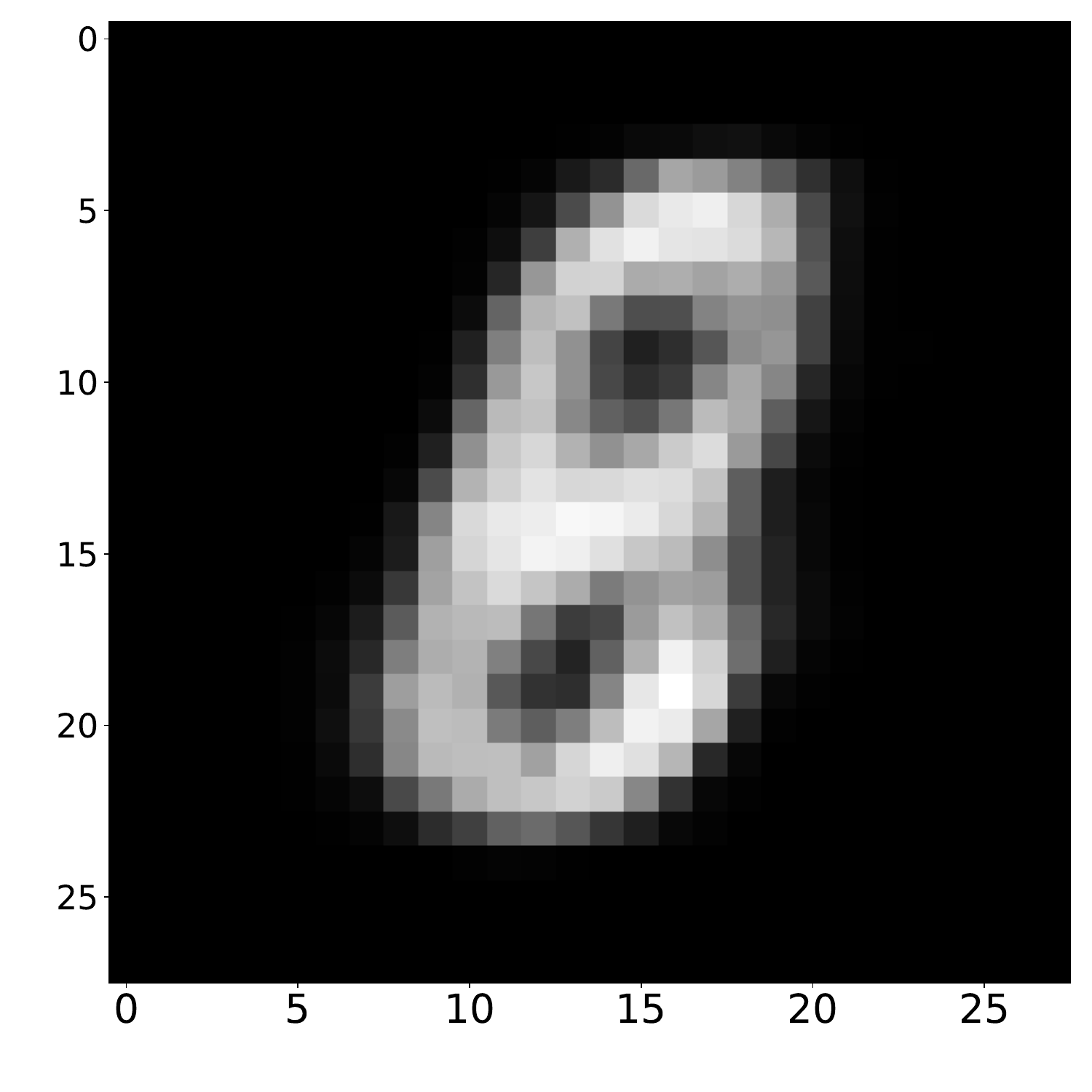}
        \caption{VAE}
        \label{fig:vae_dim_50_img}
    \end{subfigure}
    \begin{subfigure}{0.24\textwidth}
        \includegraphics[width=\textwidth]{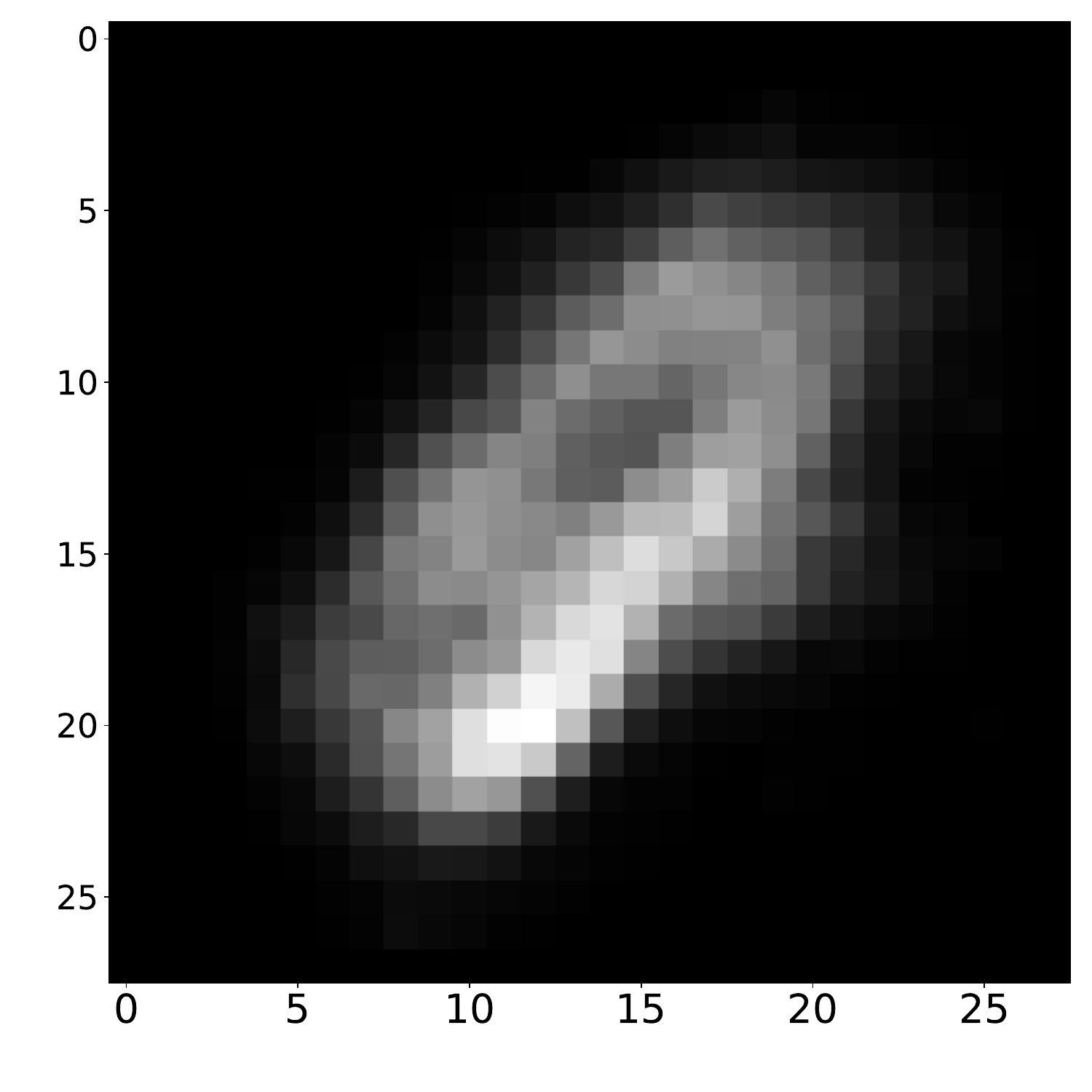}
        \caption{IWAE}
        \label{fig:iwae_dim_50_img}
    \end{subfigure}
    \begin{subfigure}{0.24\textwidth}
        \includegraphics[width=\textwidth]{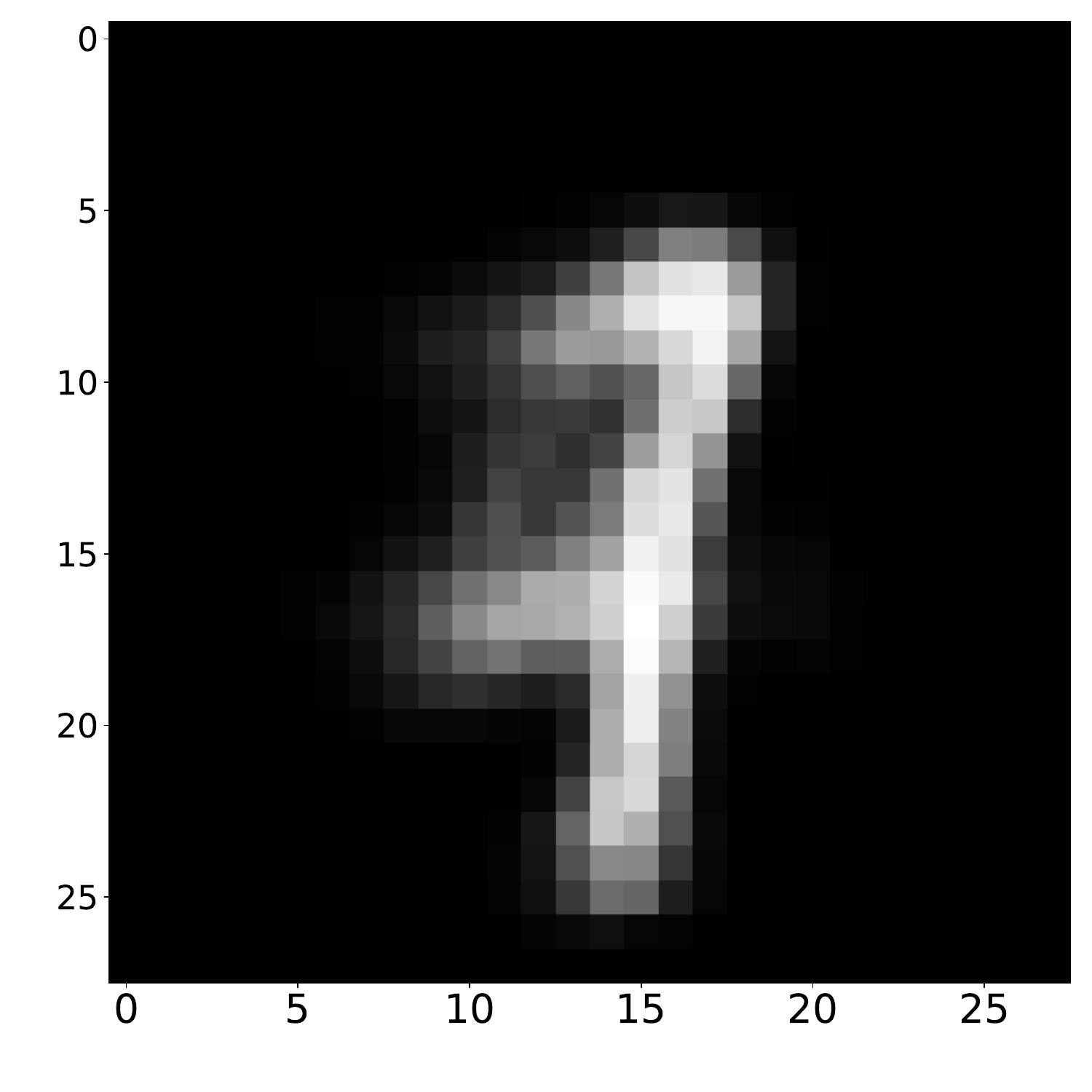}
        \caption{Un-IWAE ($k=4$)}
        \label{fig:un_iwae_dim_4_img}
    \end{subfigure}
    \begin{subfigure}{0.24\textwidth}
        \includegraphics[width=\textwidth]{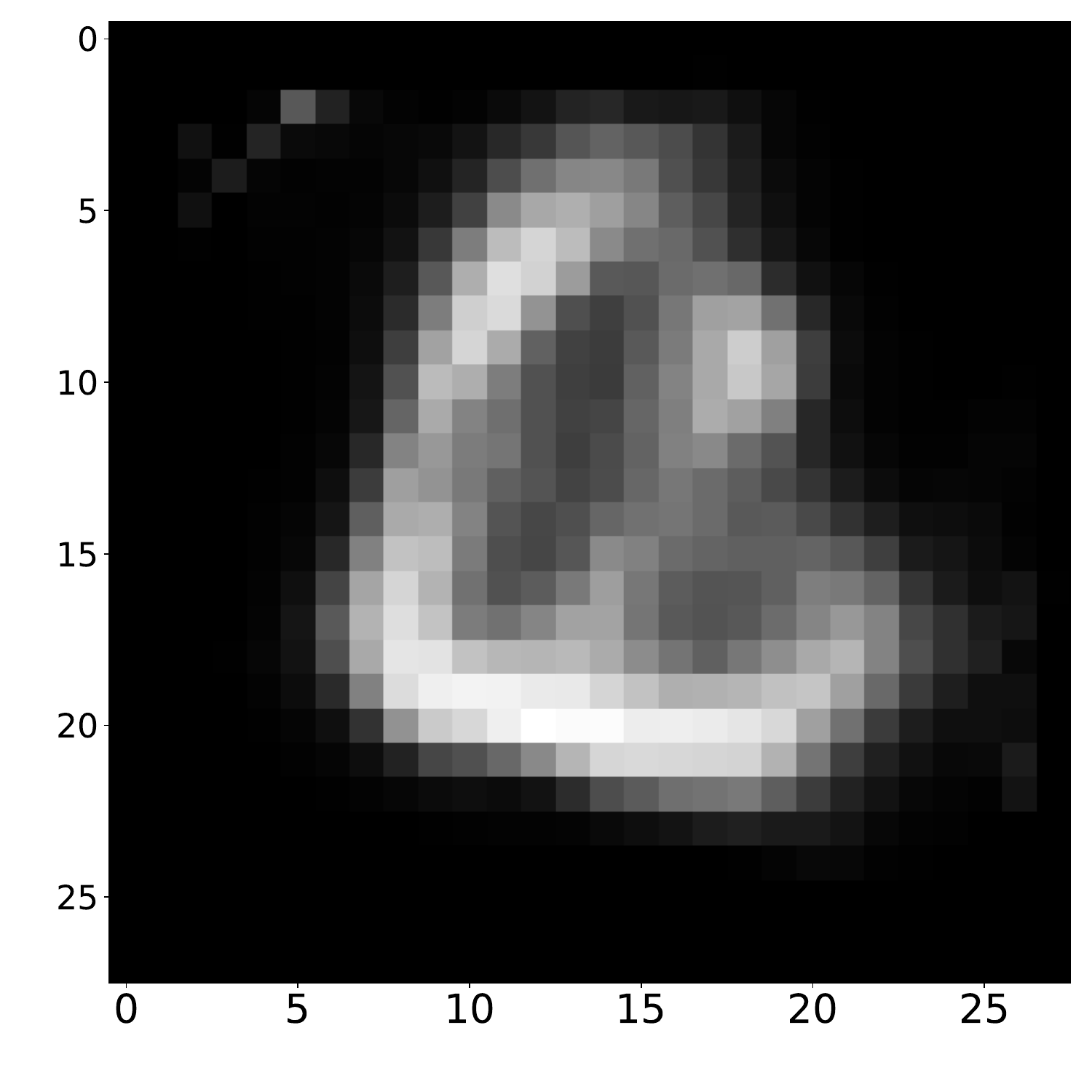}
        \caption{Un-IWAE ($k=8$)}
        \label{fig:un_iwae_dim_8_img}
    \end{subfigure}
    \caption{Images generated by the autoencoders from a sample of the prior in the latent space for dimension $50$.}
    \label{fig:generation_dim_50}
\end{figure}
\begin{figure}[h]
    \centering
    \begin{subfigure}{0.32\textwidth}
        \includegraphics[width=\textwidth]{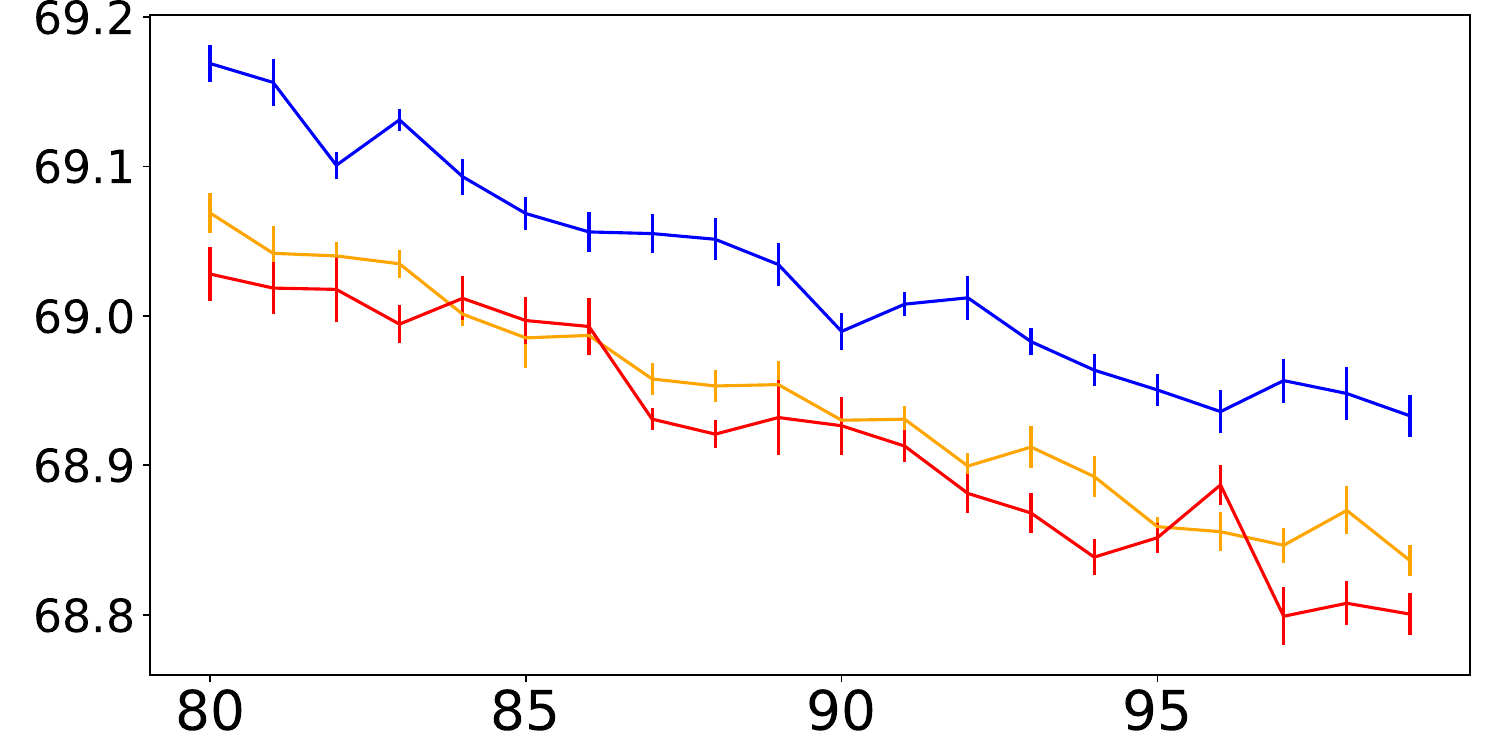}
        \caption{Dimension 20}
        \label{fig:training_loss_dim_20}
    \end{subfigure}
    \begin{subfigure}{0.32\textwidth}
        \includegraphics[width=\textwidth]{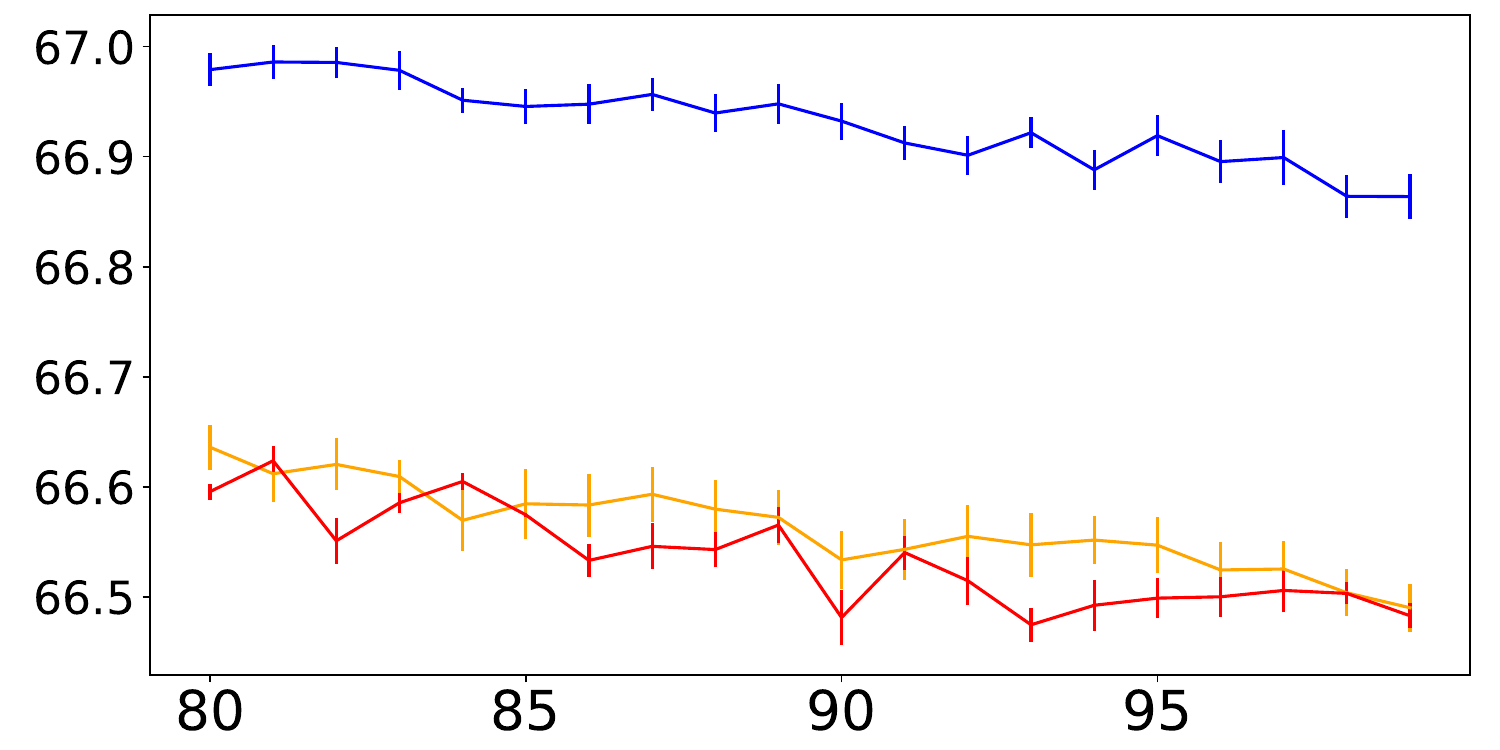}
        \caption{Dimension 50}
        \label{fig:training_loss_dim_50}
    \end{subfigure}
    \begin{subfigure}{0.32\textwidth}
        \includegraphics[width=\textwidth]{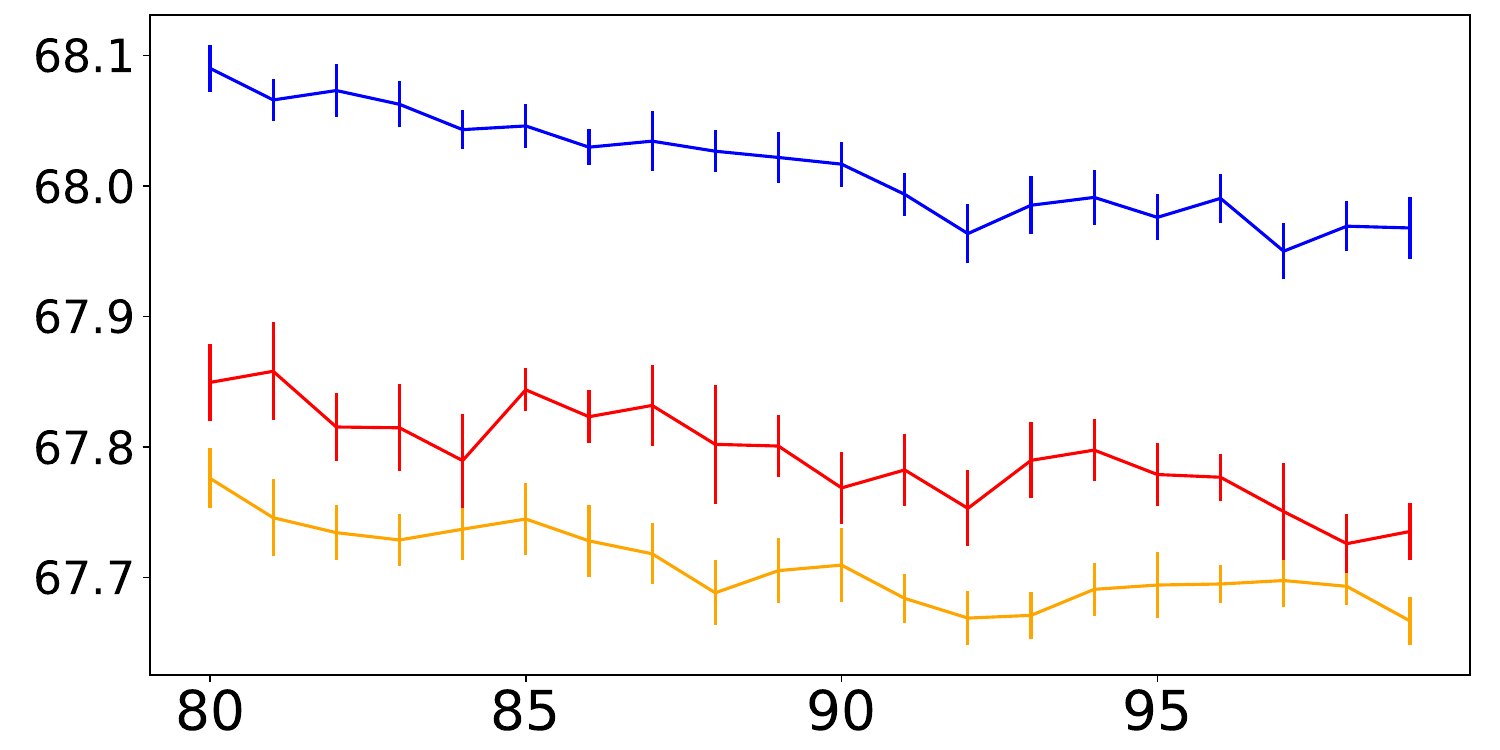}
        \caption{Dimension 100}
        \label{fig:training_loss_dim_100}
    \end{subfigure}
    \caption{Per epoch training loss (ELBO) for the last 25 epochs. Confidence intervals are calculated as $1.96 \sigma / \sqrt{n}$ over 4 ($n = 4$) different seeds.
    Blue corresponds to IWAE, orange to BR-IWAE with $k=4$ and red to BR-IWAE with $k=8$.}
    \label{fig:ELBO_train}
\end{figure}
\subsection{Resources}
\label{subsec:ressources}
All the simulations were done using a server with the following configuration:
\begin{itemize}
    \item GPUs: two Tesla V100-PCIE (32Gb RAM)
    \item CPU: 71 Intel(R) Xeon(R) Gold 6154 CPU @ 3.00GHz
    \item RAM: 377Gb
\end{itemize}
locally hosted. We estimate the total number of computing hours for the
results presented in this paper to be inferior to 200 hours of GPU usage (All the calculations were done in the GPU).

\typeout{get arXiv to do 4 passes: Label(s) may have changed. Rerun}
\end{document}